\theoremstyle{plain}
\newtheorem{theorem}{Theorem}[section]
\newtheorem{proposition}[theorem]{Proposition}
\newtheorem{lemma}[theorem]{Lemma}
\newtheorem{corollary}[theorem]{Corollary}
\theoremstyle{definition}
\newtheorem{definition}[theorem]{Definition}
\theoremstyle{remark}
\newtheorem{remark}[theorem]{Remark}
\newcommand{\method}{CITRUS}
\newcommand{\vertiii}[1]{{\left\vert\kern-0.25ex\left\vert\kern-0.25ex\left\vert #1 
    \right\vert\kern-0.25ex\right\vert\kern-0.25ex\right\vert}}
\newcommand{\ie}{\textit{i}.\textit{e}., }
\newcommand{\eg}{\textit{e}.\textit{g}., }
\DeclareMathOperator{\diag}{\operatorname{diag}}
\DeclareMathOperator{\vecmath}{\operatorname{vec}}
\DeclareMathOperator{\tr}{\operatorname{tr}}
\DeclareMathOperator*{\argmin}{\operatorname{arg\,min}}
\newcommand{\ul}[1]{\underline{#1}}
\title{Continuous Product Graph Neural Networks}
\author{%
  Aref Einizade\\
  LTCI, Télécom Paris \\
  Institut Polytechnique de Paris \\
  % City\\
  \texttt{aref.einizade@telecom-paris.fr} \\
  %% examples of more authors
   \And
  Fragkiskos D. Malliaros \\
  CentraleSupélec, Inria \\
  Université Paris-Saclay \\
  \texttt{fragkiskos.malliaros@centralesupelec.fr} \\
  \AND
  Jhony H. Giraldo \\
  LTCI, Télécom Paris \\
  Institut Polytechnique de Paris \\
  % City\\
  \texttt{jhony.giraldo@telecom-paris.fr} \\
  % examples of more authors
  % \And
  % Coauthor \\
  % Affiliation \\
  % Address \\
  % \texttt{email} \\
  % \AND
  % Coauthor \\
  % Affiliation \\
  % Address \\
  % \texttt{email} \\
  % \And
  % Coauthor \\
  % Affiliation \\
  % Address \\
  % \texttt{email} \\
  % \And
  % Coauthor \\
  % Affiliation \\
  % Address \\
  % \texttt{email} \\
}
\begin{document}

\maketitle

\begin{abstract}
Processing multidomain data defined on multiple graphs holds significant potential in various practical applications in computer science. However, current methods are mostly limited to discrete graph filtering operations. Tensorial partial differential equations on graphs (TPDEGs) provide a principled framework for modeling structured data across multiple interacting graphs, addressing the limitations of the existing discrete methodologies. In this paper, we introduce \textbf{C}ont\textbf{i}nuous Produc\textbf{t} G\textbf{r}aph Ne\textbf{u}ral Network\textbf{s} (\method) that emerge as a natural solution to the TPDEG. \method~leverages the separability of continuous heat kernels from Cartesian graph products to efficiently implement graph spectral decomposition. We conduct thorough theoretical analyses of the stability and over-smoothing properties of \method~in response to domain-specific graph perturbations and graph spectra effects on the performance. We evaluate \method~on well-known traffic and weather spatiotemporal forecasting datasets, demonstrating superior performance over existing approaches. The implementation codes are available at \href{https://github.com/ArefEinizade2/CITRUS}{https://github.com/ArefEinizade2/CITRUS}.

% While the previously proposed Partial Differential Equations on Graphs (PDEG) are mostly concerned with the structured data on a \textit{single} graph, this paper introduces Tensorial (Multidomain) PDEG (TPDEG) as a principled framework for modeling structured data across multiple interacting graphs, addressing the limitations of existing methodologies. Building upon TPDEG, we propose Continuous Product Graph Neural Networks (\method) as a continuous solution, leveraging separable continuous heat kernels derived from Cartesian graph products. The implementation utilizes graph spectra for efficient computation, making the selection of only a subset of eigenvalue-eigenvector (eig-eiv) pairs possible. Theoretical findings and experimental validations of stability and over-smoothing analysis illustrate the characteristics of \method~against domain-specific graph perturbation and graph spectra effects on the performance. As a proof of concept in real-world multi-domain data, comprehensive experiments on well-known traffic and weather spatiotemporal (ST) forecasting datasets demonstrate superior/competitive performance over existing ST approaches. Ablation studies on the spectral implementation approach and comparison with the current well-used baselines are also provided to get additional insights into the framework's robustness.
\end{abstract}

\section{Introduction}

Multidomain (tensorial) data defined on multiple interacting graphs \cite{xu2020multigraph,xu2023graph,monti2017geometric},  referred to as multidomain graph data in this paper, extend the traditional graph machine learning paradigm, which typically deals with single graphs \cite{kipf2017semisupervised,xu2023graph}.
%offer a generalization of the well-known graph machine learning setting where typically only one graph is processed \cite{}.
Tensors, which are multi-dimensional generalizations of matrices (order-2 tensors), appear in various fields like hyperspectral image processing \cite{song2023brain}, video processing \cite{shahid2019tensor}, recommendation systems \cite{ortiz2019sparse}, spatiotemporal analysis \cite{cini2023graph}, and brain signal processing \cite{mishne2016hierarchical}.
%Tensors are a multi-dimensional generalization of matrices (order-2 tensors) and appear in various fields, including hyperspectral image processing \cite{song2023brain}, video processing \cite{shahid2019tensor}, recommendation systems \cite{ortiz2019sparse}, spatiotemporal analysis \cite{cini2023graph}, and brain signal processing \cite{mishne2016hierarchical} to name a few.
%Despite its relevance in these important real-world applications, the study of learning from multidomain data defined on multiple graphs has been barely addressed in the existing literature \cite{}.
Despite the importance of these applications, learning from multidomain graph data has received little attention in the existing literature \cite{sabbaqi2023graph,xu2023graph}.
%Therefore, the development of graph-learning strategies for these kinds of tensorial data structures holds great promise for several applications in computer science.
Therefore, developing graph-learning strategies for these tensorial data structures holds significant promise for various practical applications.

%Additionally, these methods often require computationally intensive grid searches to tune hyperparameters and are typically limited to two-domain graph data, such as spatial and temporal dimensions.

The main challenge for learning from multidomain graph data is creating efficient frameworks that model \textit{joint} interactions across domain-specific graphs \cite{sabbaqi2023graph,einizade2023learning}.
%lies in devising efficient frameworks for modeling \textit{joint} interactions across domain-specific graphs.
Previous work in this area has utilized discrete graph filtering operations in product graphs (PGs) \cite{einizade2023productgraphsleepnet,sabbaqi2023graph} from the field of graph signal processing (GSP) \cite{ortega2018graph}.
%Previous works in multidomain graph learning exploit the discrete graph filtering operations \cite{einizade2023productgraphsleepnet,sabbaqi2023graph} in product graphs (PGs) from the literature of graph signal processing (GSP) \cite{ortega2018graph}.
However, these methods inherit the well-known issues of over-smoothing and over-squashing from regular graph neural networks (GNNs) \cite{Oono2020Graph,cai2020note,giraldo2023trade}, which restricts the graph's receptive field and hinders long-range interactions \cite{behmanesh2023tide}.
%As a result, the existing methods inherit the well-known problems of over-smoothing and over-squashing of regular graph neural networks (GNNs) \cite{Oono2020Graph,cai2020note,giraldo2023trade}, leading to a limited graph receptive field and consequently having problems with long-range interactions \cite{behmanesh2023tide}.
Additionally, these methods often require computationally intensive grid searches to tune hyperparameters and are typically limited to two-domain graph data, such as spatial and temporal dimensions \cite{sabbaqi2023graph,isufi2021graph,einizade2023productgraphsleepnet,castro2024gegenbauer}. 

%Similarly, these previous methods require computationally heavy grid search to tune the hyperparameters \cite{sabbaqi2023graph,isufi2021graph,einizade2023productgraphsleepnet} and are limited to the case of two-domain graph data (typically the spatial and temporal dimensions \cite{sabbaqi2023graph}).

In regular non-tensorial GNNs, continuous GNNs (CGNNs) emerge as a solution to over-smoothing and over-squashing \cite{han2023continuous}.
CGNNs are neural network solutions to partial differential equations (PDEs) on graphs \cite{xhonneux2020continuous,han2023continuous}, like the heat or wave equation.
% \textcolor{red}{It has been shown \cite{chen2018neural} that discrete polynomial graph filters are just a special discretized solution to the continuous Neural Ordinary Differential Equations (ODEs) and proposed a novel method to perform back-propagation on the continuous graph filters.}
% In the realm of typical (non-tensorial) graph-structured data and in order to alleviate the above-mentioned challenges with the discrete graph polynomial filters, Continuous GNNs (CGNNs), which exploit continuous (exponential) graph filters, were proposed as NN-based solution to Partial Differential Equation (PDE) on graphs \cite{xhonneux2020continuous,han2023continuous}, 
%\eg heat or wave equations. 
The solution to these PDEs introduces the exponential graph filter as a continuous infinity-order generalization of the typical discrete graph convolutional filters \cite{han2023continuous}.
Due to the differentiability of exponential graph filters w.r.t. the graph receptive fields, CGNNs can benefit from both global and local message passing by adaptively learning graph neighborhoods, alleviating the limitations of discrete GNNs \cite{han2023continuous,behmanesh2023tide,sharp2022diffusionnet}.
Despite these advantages, CGNNs mostly rely on a \textit{single} graph and lack a principled framework for learning \textit{joint} multi-graph interactions.  

%In this paper, we introduce Tensorial PDE on Graphs (TPDEGs) motivated by the limitations in the current PG-based GNN and CGNN frameworks.
In this paper, we introduce tensorial PDE on graphs (TPDEGs) to address the limitations of existing PG-based GNN and CGNN frameworks.
TPDEGs provide a principled framework for modeling multidomain data residing on multiple interacting graphs that form a Cartesian product graph heat kernel.
%The TPDEG offers a principled framework for modeling multidomain data living on multiple interacting graphs forming a Cartesian product graph heat kernel.
We then propose \textbf{C}ont\textbf{i}nuous Produc\textbf{t} G\textbf{r}aph Ne\textbf{u}ral Network\textbf{s} (\method) as a continuous solution to TPDEGs.
We efficiently implement \method~using a small subset of eigenvalue decompositions (EVDs) from the factor graphs.
%using a small portion of the eigenvalue decompositions (EVDs) of the small factor graphs.
Additionally, we conduct theoretical and experimental analyses of the stability and over-smoothing properties of \method. 
We evaluate our proposed model on spatiotemporal forecasting tasks, demonstrating that \method~achieves state-of-the-art performance compared to previous Temporal-Then-Spatial (TTS), Spatial-Then-Temporal (STT), and Temporal-and-Spatial (T\&S) frameworks. 
Our main contributions are summarized as follows:
% Additionally, we theoretically and experimentally analyze the stability and over-smoothing properties of \method.
% Finally, we test our proposed model on the spatiotemporal forecasting tasks \cite{cini2023graph}, where \method~achieves state-of-the-art performance against previous Temporal-Then-Spatial (TTS), Spatial-Then-Temporal (STT), and Temporal-and-Spatial (T\&S) frameworks \cite{han2023continuous}.
% Our main contributions are summarized as follows:
\begin{itemize}[leftmargin=0.5cm]
    \item We introduce TPDEGs to handle multidomain graph data.
    This leads to our proposal of a continuous graph filtering operation in Cartesian product spaces, called \method, which naturally emerges as the solution to TPDEGs.
    %Therefore, we propose a continuous graph filtering operation in Cartesian product spaces (\method) that naturally emerges as the solution of the TPDEG.
    \item We conduct extensive theoretical and experimental analyses to evaluate the stability and over-smoothing properties of \method.
    %We conduct extensive theoretical and experimental analyses of the stability and over-smoothing properties of \method.
    \item We test \method~on spatiotemporal forecasting tasks, demonstrating that our model achieves state-of-the-art performance.
    %We test \method~in the task of spatiotemporal forecasting, where our model achieves state-of-the-art performance.
\end{itemize}

\section{Related Work}
\label{Rel_work}

%There are two branches of related work to our framework and its applications, covering \textit{fundamental} or \textit{applicability} aspects.

We divide the related work into two parts covering the \textit{fundamental} and \textit{applicability} aspects.

\textbf{Fundamentals.}
The study of PGs is a well-established field in mathematics and signal processing \cite{sandryhaila2014big,hammack2011handbook}.
However, to the best of our knowledge, the graph-time convolutional neural network (GTCNN) \cite{sabbaqi2023graph,isufi2021graph} model is the only GNN-based framework addressing neural networks in PGs.
%However, the Graph-Time Convolutional Neural Network (GTCNN) \cite{sabbaqi2023graph,isufi2021graph} model is the only GNN-rooted framework addressing neural networks in PGs to the best of our knowledge.
GTCNN defines a discrete filtering operation in PGs to jointly process spatial and temporal data for forecasting applications.
Due to the discrete nature of the graph polynomial filters in GTCNN, an expensive grid search is required to correctly tune the graph filter orders, leading to significant computational overhead.
%However, due to the discretized form of graph polynomial filters in GTCNN, an expensive grid search should be conducted to correctly tune the graph filter orders leading to computational burdens.
Additionally, this discrete aspect restricts node-wise receptive fields, making long-range communication challenging \cite{behmanesh2023tide,han2023continuous}. 
Furthermore, GTCNN is limited to two factor graphs (space and time), limiting its applicability to general multidomain graph data.
% Similarly, this discretized aspect of GTCNN leads to restricted node-wise receptive fields, making long-range communication challenging \cite{behmanesh2023tide,han2023continuous}.
% GTCNN was defined for only two-factor graphs (space and time), limiting their applicability to general multidomain graph data.
In contrast, \method~overcomes these limitations by i) employing continuous graph filtering, and ii) providing a general framework for any number of factor graphs.
%Instead, \method~overcome all these limitations by i) relying on continuous graph filtering, and ii) having a general form for any number of factor graphs.

% To the best of our knowledge, Graph-Time Convolutional Neural Network (GTCNN) \cite{sabbaqi2023graph,isufi2021graph} is the only GNN-rooted framework addressing the \textit{joint} spatiotemporal learning using product graphs.
% Despite this advantage, there are some limitations with GTCNN. First, due to the discretized form of graph polynomial filters used, several grid search steps are needed to tune the graph filter orders leading to computational burdens. Besides, this discretized form leads to restricted node-wise receptive fields and consequently makes long-range communication challenging \cite{behmanesh2023tide,han2023continuous}. As mentioned in the summary of contributions above, these challenges are alleviated in the proposed \method. In fact, in the case of interacting Cartesian product graphs, GTCNN can be considered a discretized version of the proposed \method~with domain-specific learnable graph receptive fields.

\textbf{Applicability.} 
We explore spatiotemporal analysis as a specific application of PG processing.
%We cover the spatiotemporal analysis as a special use case of PGs processing.
Early works in GSP for spatiotemporal forecasting, such as the graph polynomial vector auto-regressive moving average (GP-VARMA) and GP-VAR models \cite{isufi2019forecasting}, do not utilize PGs.
%Early works in GSP for spatiotemporal forecasting do not rely on PGs and include the graph polynomial vector auto-regressive moving average (GP-VARMA) and GP-VAR models \cite{isufi2019forecasting}.
Modern GNN-based architectures can be broadly classified into TTS and STT frameworks.
TTS networks, due to their sequential nature, often encounter propagation information bottlenecks during training \cite{cini2023graph}.
%Due to the sequential nature of TTS networks, they are prone to cause propagation information bottlenecks during the training process \cite{cini2023graph}.
Conversely, STT architectures enhance node representations first and then integrate temporal information, but the use of recurrent neural networks (RNNs) in the aggregation stages leads to high computational complexity.
%On the other hand, the core principle guiding the design of STT architectures is to enhance node representations initially and subsequently integrate temporal information.
%However, the prevalent use of RNNs in aggregation stages renders STT models vulnerable to high computational complexity.
Additionally, there are T\&S frameworks designed for \textit{simultaneous} learning of spatiotemporal representations \cite{cini2023graph,gao2022equivalence}, where the fusion of temporal and spatial modules is crucial. 
%There are also some Time and Space (T\&S) frameworks for \textit{simultaneous} learning of spatiotemporal representations \cite{cini2023graph,gao2022equivalence}, where the fusion of temporal and spatial modules holds paramount importance.
However, similar to STT networks, T\&S frameworks also face challenges with computational complexity \cite{cini2023graph} and their discretized block-wise nature, which can limit adaptive graph receptive fields \cite{han2023continuous,gao2022equivalence}.

\method~leverages \textit{joint} multidomain learning by exploiting continuous separable heat kernels as factor-based functions defined on a Cartesian product graph.
The continuity and differentiability of these filters allow the graph receptive fields to be learned adaptively during the training process, eliminating the need for a grid search.
Additionally, \method~maintains low computational complexity by relying on the spectral decomposition of the factor graphs \cite{behmanesh2023tide,sharp2022diffusionnet}. 
Furthermore, unlike non-graph approaches, the number of learnable parameters in \method~is independent of the factor graphs, ensuring scalability.

\section{Methodology and Theoretical Analysis}
\label{Theo_disc}
%\jhony{Aref, I think Eqn. (1) could fit in one line.}
%\aref{Done. It was a big help! Thanks.}

% \jhony{Please refer to Fig. 1.a and Fig. 1.b at the correct places of the paper to illustrate these concepts of \method.}

\textbf{Preliminaries and notation.}
An undirected and weighted graph \(\mathcal{G}\) with \(N\) vertices can be stated as \(\mathcal{G}=\{\mathcal{V}, \mathcal{E}, \mathbf{A}\}\).
\(\mathcal{V}\) and \(\mathcal{E}\) denote the sets of the vertices and edges, respectively. 
\(\mathbf{A} \in \mathbb{R}^{N \times N}\) is the adjacency matrix with \(\{\mathbf{A}_{ij}=\mathbf{A}_{ji} \ge 0\}_{i,j=1}^{N}\) representing the connection between vertices, and \(\{\mathbf{A}_{ii}=0\}_{i=1}^{N}\) states that there are no self-loops in \(\mathcal{G}\). The graph Laplacian \(\mathbf{L}\in\mathbb{R}^{N\times N}\) is defined as \(\mathbf{L}=\mathbf{D}-\mathbf{A}\), where \(\mathbf{D}=\diag(\mathbf{A}\mathbf{1})\) is the diagonal degree matrix and \(\mathbf{1}\) is the all-one vector.
A graph signal is a function that maps a set of nodes to the real values $x:\mathcal{V} \to \mathbb{R}$, and therefore we can represent it as \(\mathbf{x}=[x_1,\ldots,x_N]^\top\).
%The assigned values to the vertices of \(\mathcal{G}\) form a graph signal \(\mathbf{x}=[x_1,\ldots,x_N]^\top\). 
For a vector $\mathbf{a}=[a_1,\hdots,a_N]^\top\in\mathbb{R}^{N\times1}$, $e^{\mathbf{a}}:=[e^{a_1},\hdots,e^{a_N}]^\top$. The vectorization operator is shown as $\vecmath(.)$. Using the Kronecker product $\otimes$, Kronecker sum (aka Cartesian product) $\oplus$, and the Laplacian factors $\{\mathbf{L}_p\in\mathbb{R}^{N_p\times N_p}\}_{p=1}^P$, we have the following definitions:
\begin{equation}
%\begin{split}
\downarrow\oplus_{p=1}^{P}{\mathbf{L}_p}\vcentcolon=\mathbf{L}_P\oplus\hdots\oplus\mathbf{L}_1,\:\:\otimes_{p=1}^{P}{\mathbf{L}_p}\vcentcolon=\mathbf{L}_1\otimes\hdots\otimes\mathbf{L}_P,\:\:\downarrow\otimes_{p=1}^{P}{\mathbf{L}_p}\vcentcolon=\mathbf{L}_P\otimes\hdots\otimes\mathbf{L}_1,
%\end{split}
\end{equation}
where the Laplacian matrix of the Cartesian product between two factor graphs can be stated as:
\begin{equation}
\label{eqn:cartesian_product}
\oplus_{p=1}^{2}{\mathbf{L}_p}=\mathbf{L}_1\oplus \mathbf{L}_2=\mathbf{L}_1\otimes\mathbf{I}_{N_2}+\mathbf{I}_{N_1}\otimes\mathbf{L}_2,
\end{equation}
with \(\mathbf{I}_n\) the identity matrix of size \(n\). See Figure \ref{fig:cartesian_product} for an example of a Cartesian graph product between three factor graphs.
A similar relationship to \eqref{eqn:cartesian_product} also holds for adjacency matrices \cite{hammack2011handbook,kadambari2021product}. 
We define a $D$-dimensional tensor as $\underline{\mathbf{U}}\in\mathbb{R}^{N_1\times\hdots\times N_D}$.
%A $D$-dimensional tensor is stated as $\underline{\mathbf{U}}\in\mathbb{R}^{N_1\times\hdots\times N_D}$.
The matrix $\underline{\mathbf{U}}_{(i)}\in\mathbb{R}^{N_i\times\left[\prod_{r=1,r\ne i}^{D}{N_r}\right]}$ is the $i$-th mode matricization of $\underline{\mathbf{U}}$ \cite{kolda2009tensor}.
The mode-$i$ tensorial multiplication of a tensor $\underline{\mathbf{U}}$ by a matrix $\mathbf{X}\in\mathbb{R}^{m\times N_i}$ is denoted by $\underline{\mathbf{G}}=\underline{\mathbf{U}}\times_i\mathbf{X}$, where $\underline{\mathbf{G}}\in\mathbb{R}^{N_1\times\hdots \times N_{i-1}\times m\times N_{i+1}\times\hdots\times N_D}$ \cite{kolda2009tensor}.
For further information about the tensor matricization and mode-$n$ product, please refer to \cite{kolda2009tensor,sidiropoulos2017tensor} (especially Sections 2.4 and 2.5 in \cite{kolda2009tensor}). All the proofs of theorems, propositions, and lemmas of this paper are provided in the Appendix \ref{app:proofs}.

% \begin{figure}
%     \centering
%     \includegraphics[width=\textwidth]{pipeline.pdf}
%     \caption{Caption}
%     \label{fig:pipeline}
% \end{figure}

\begin{figure*}
    \centering
    \begin{subfigure}[t]{0.29\textwidth}
        \centering
        \includegraphics[height=1.5in]{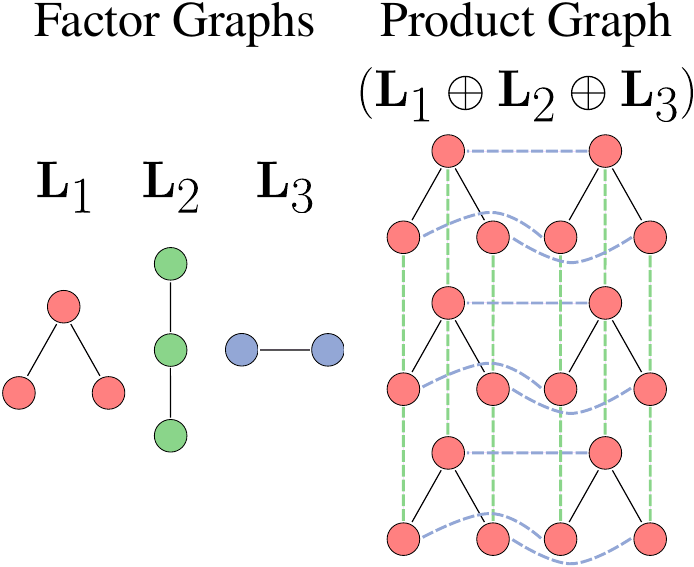}
        \caption{Cartesian product graph.}
        \label{fig:cartesian_product}
    \end{subfigure}
    \hfill
    \begin{subfigure}[t]{0.69\textwidth}
        \centering
        \includegraphics[height=1.5in]{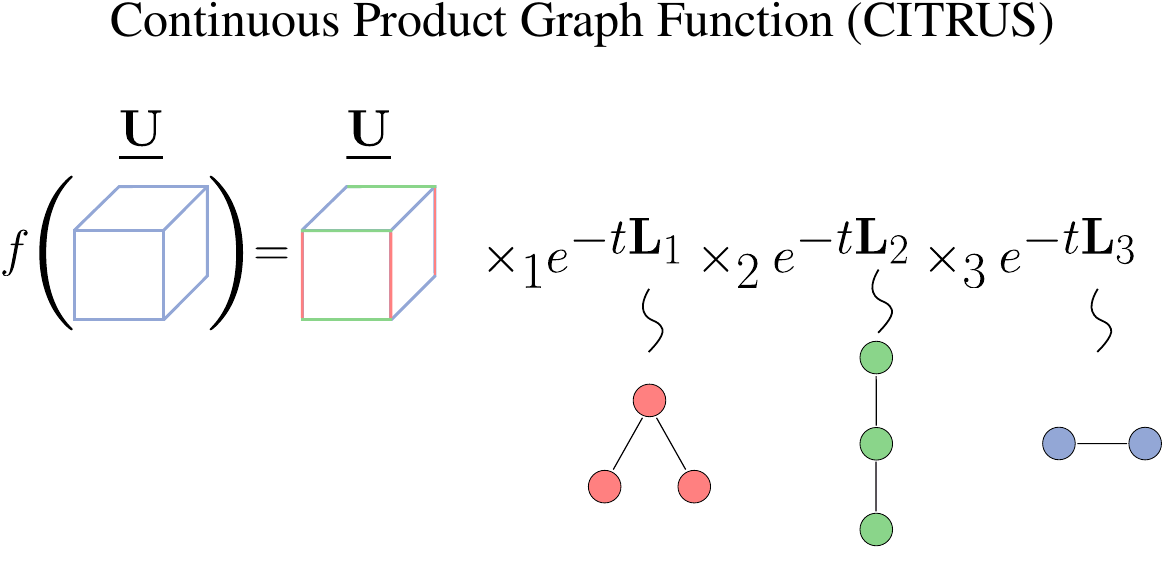}
        \caption{Continous product graph function in \eqref{tensor_formm2}.}
        \label{fig:CITRUS}
    \end{subfigure}
    \caption{Illustration of key concepts of \method. a) Cartesian product between three-factor graphs. b) Continous product graph function (\method) operating on the multidomain graph data $\underline{\mathbf{U}}$.}
    % \label{fig:pipeline}
\end{figure*}

%For more clarification about the tensor matricization and mode-$n$ product, refer to \cite{kolda2009tensor,sidiropoulos2017tensor} (especially Sections 2.4 and 2.5 in \cite{kolda2009tensor}). 
%The proofs are provided in the Appendix.
% \jhony{Aref, I think it's gonna be very hard to find qualified reviewers for this paper, and with the number of abstract submissions counting close to 22K, we should be extremely clear. To this purpose, I propose to add an appendix where we explain in more detail what's the meaning of these tensorial operations as humanly as possible (maybe including some figures). This should be the first appendix.} \aref{As we discussed, I have added a sentence for referring the readers to a reference paper with specific chapters. Thanks.}
\subsection{Continuous Product Graph Neural Networks (\method)}
We state the generative PDE for \method~as follows:
\begin{definition}[Tensorial PDE on graphs (TPDEG)]
\label{MPDEG_def}
Let \(\tilde{\underline{\mathbf{U}}}_t\in\mathbb{R}^{N_1\times N_2\times\hdots\times N_P}\) be a multidomain tensor whose elements are functions of time $t$, and let \(\{\mathbf{L}_p\in\mathbb{R}^{N_p\times N_p}\}_{p=1}^P\) be the domain-specific factor Laplacians.
We can define the TPDEG as follows:
%By considering a multi-domain tensor \(\tilde{\underline{\mathbf{U}}}_t\in\mathbb{R}^{N_1\times N_2\times\hdots\times N_P}\) whose elements are functions of time $t$, and domain-specific factor Laplacians \(\{\mathbf{L}_p\in\mathbb{R}^{N_p\times N_p}\}_{p=1}^P\), we can define the TPDEG as follows: 
\begin{equation}
\label{MDPDE}
\frac{\partial \tilde{\underline{\mathbf{U}}}_t}{\partial t}=-\sum_{p=1}^{P}{\tilde{\underline{\mathbf{U}}}_t\times_p \mathbf{L}_p}.
\end{equation}

\end{definition}
\begin{theorem}
\label{MDPDE_thm}
Let \(\tilde{\underline{\mathbf{U}}}_0\) be the the initial value of \(\tilde{\underline{\mathbf{U}}}_t\).
The solution to the TPDEG in \eqref{MDPDE} is given by:
%By considering the initial value for \(\tilde{\underline{\mathbf{U}}}_t\) as \(\tilde{\underline{\mathbf{U}}}_0\), the solution to the TPDEG in \eqref{MDPDE} is as:
\begin{equation}
\label{MDPDE_sol}
\tilde{\underline{\mathbf{U}}}_t= \tilde{\underline{\mathbf{U}}}_0\times_1 e^{-t\mathbf{L}_1}\times_2 e^{-t\mathbf{L}_2}\times_3\hdots\times_{P}e^{-t\mathbf{L}_P}.
\end{equation}
\end{theorem}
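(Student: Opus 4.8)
The plan is to recast the tensor equation \eqref{MDPDE} as an ordinary linear differential equation after vectorization, solve it with a matrix exponential, and then exploit the fact that the matrix exponential of a Kronecker sum factorizes. First I would set $\mathbf{u}_t:=\vecmath(\tilde{\underline{\mathbf{U}}}_t)$ and use the standard identity $\vecmath(\underline{\mathbf{U}}\times_p\mathbf{L}_p)=\bigl(\mathbf{I}_{N_P}\otimes\cdots\otimes\mathbf{L}_p\otimes\cdots\otimes\mathbf{I}_{N_1}\bigr)\vecmath(\underline{\mathbf{U}})$, so that summing over $p$ turns the right-hand side of \eqref{MDPDE} into $-\mathbf{L}_{\oplus}\mathbf{u}_t$ with $\mathbf{L}_{\oplus}:=\downarrow\oplus_{p=1}^{P}\mathbf{L}_p$. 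Thus \eqref{MDPDE} is equivalent to the linear constant-coefficient ODE $\dot{\mathbf{u}}_t=-\mathbf{L}_{\oplus}\mathbf{u}_t$ with initial value $\mathbf{u}_0=\vecmath(\tilde{\underline{\mathbf{U}}}_0)$, which (by Picard--Lindel\"{o}f, or simply by the theory of linear systems) has the unique solution $\mathbf{u}_t=e^{-t\mathbf{L}_{\oplus}}\mathbf{u}_0$.

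Next I would show $e^{-t\mathbf{L}_{\oplus}}=\downarrow\otimes_{p=1}^{P}e^{-t\mathbf{L}_p}$. Since $-t\mathbf{L}_{\oplus}$ is itself a Kronecker sum of the $-t\mathbf{L}_p$, this reduces to the classical fact $e^{\mathbf{A}\oplus\mathbf{B}}=e^{\mathbf{A}}\otimes e^{\mathbf{B}}$, which holds because the two summands $\mathbf{A}\otimes\mathbf{I}$ and $\mathbf{I}\otimes\mathbf{B}$ commute; iterating over the $P$ factors gives the claim. Applying the vectorization identity in the reverse direction, $\bigl(\downarrow\otimes_{p=1}^{P}e^{-t\mathbf{L}_p}\bigr)\vecmath(\tilde{\underline{\mathbf{U}}}_0)=\vecmath\bigl(\tilde{\underline{\mathbf{U}}}_0\times_1 e^{-t\mathbf{L}_1}\times_2\cdots\times_P e^{-t\mathbf{L}_P}\bigr)$, and undoing $\vecmath$ yields \eqref{MDPDE_sol}. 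An equivalent, vectorization-free route is to plug the candidate $\underline{\mathbf{V}}_t:=\tilde{\underline{\mathbf{U}}}_0\times_1 e^{-t\mathbf{L}_1}\times_2\cdots\times_P e^{-t\mathbf{L}_P}$ back in: at $t=0$ it reduces to $\tilde{\underline{\mathbf{U}}}_0$; differentiating in $t$ (using linearity and commutativity of mode-$p$ products across distinct modes, $\tfrac{d}{dt}e^{-t\mathbf{L}_p}=-\mathbf{L}_pe^{-t\mathbf{L}_p}$, and $\underline{\mathbf{U}}\times_p(\mathbf{A}\mathbf{B})=(\underline{\mathbf{U}}\times_p\mathbf{B})\times_p\mathbf{A}$) gives $\partial_t\underline{\mathbf{V}}_t=-\sum_{p=1}^{P}\underline{\mathbf{V}}_t\times_p\mathbf{L}_p$, after which uniqueness for \eqref{MDPDE} finishes the proof.

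I expect the main obstacle to be purely organizational: getting the Kronecker ordering conventions ($\downarrow\oplus$, $\downarrow\otimes$, and the index that varies fastest in $\vecmath$) mutually consistent, so that the single-mode vectorization identity really does sum to $\mathbf{L}_{\oplus}$ and the exponential really does unvectorize to the nested mode products in the stated order; a misorder would not affect the truth of the statement but would derail the bookkeeping. The remaining ingredients --- that $\mathbf{L}_p$ commutes with $e^{-t\mathbf{L}_p}$, that mode-$p$ and mode-$q$ products commute for $p\ne q$, and the matrix-exponential/Kronecker-sum identity --- are all standard and can be cited to \cite{kolda2009tensor,hammack2011handbook}.
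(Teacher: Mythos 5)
Your proposal is correct, but your primary route differs from the paper's. The paper proves Theorem \ref{MDPDE_thm} by direct verification at the tensor level: it differentiates the candidate $\tilde{\underline{\mathbf{U}}}_0\times_1 e^{-t\mathbf{L}_1}\times_2\cdots\times_P e^{-t\mathbf{L}_P}$ via the product rule, takes the mode-$p$ matricization of each term, and reads off ${\underline{\mathbf{U}}_{t,p}}_{(p)}=-\mathbf{L}_p{\underline{\mathbf{U}}_t}_{(p)}$, i.e.\ exactly your ``vectorization-free'' alternative (the paper does not even spell out the uniqueness step you mention). Your main argument instead vectorizes \eqref{MDPDE} into the linear ODE $\dot{\mathbf{u}}_t=-\bigl(\downarrow\oplus_{p=1}^{P}\mathbf{L}_p\bigr)\mathbf{u}_t$, solves it by the matrix exponential, and factors $e^{-t(\mathbf{A}\oplus\mathbf{B})}=e^{-t\mathbf{A}}\otimes e^{-t\mathbf{B}}$ using commutativity of the Kronecker-sum summands; this is precisely the machinery the paper reserves for Proposition \ref{MDPDE_theorem} (its Lemma \ref{The1} proves the same separability via the joint eigendecomposition rather than commutativity), so your route effectively establishes Theorem \ref{MDPDE_thm} and Proposition \ref{MDPDE_theorem} in one pass and makes existence and uniqueness explicit, at the cost of the $\vecmath$/$\downarrow\oplus$/$\downarrow\otimes$ ordering bookkeeping you correctly flag; the paper's tensor-level differentiation avoids that bookkeeping but proves only that the candidate satisfies the PDE, leaving uniqueness implicit.
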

%\jhony{Aref, please be consistent with the usage of "textbf" and "mathbf" for the equations. Each one provides quite different outputs in the NeurIPS template.}
%\aref{Thank Jhony. I tried to fix them all.}
Using Theorem \ref{MDPDE_thm}, we define the core function of \method~as follows:
\begin{equation}
\label{tensor_formm2}
f(\underline{\mathbf{U}}_l)=\underline{\mathbf{U}}_l\times_1 e^{-t_l\mathbf{L}_1}\times_2\hdots\times_Pe^{-t_l\mathbf{L}_P}\times_{P+1}{\mathbf{W}^\top_{l}},
\end{equation}
% Next, building upon TPDEG, we propose \method~for the $l$-th layer as follows:
% \begin{equation}
% \label{tensor_formm2}
% \begin{split}
% &\mathcal{L}_l(\underline{\mathbf{U}}_l)=\underline{\mathbf{U}}_l\times_1 e^{-t_l\mathbf{L}_1}\times_2\hdots\times_Pe^{-t_l\mathbf{L}_P}\times_{P+1}{\mathbf{W}^\top_{l}},%\\    
% \end{split}
% \end{equation}
where $\underline{\mathbf{U}}_l\in\mathbb{R}^{N_1\times\hdots \times N_P\times F_l}$ is the input tensor, $\mathbf{W}_{l}\in\mathbb{R}^{F_l\times F_{l+1}}$ is a matrix of learnable parameters, $t_l$ is the learnable graph receptive field, and $f(\underline{\mathbf{U}}_l)\in\mathbb{R}^{N_1\times\hdots \times N_P\times F_{l+1}}$ is the output tensor.
%where $\underline{\mathbf{U}}_l\in\mathbb{R}^{N_1\times\hdots \times N_P\times F_l}$ and $\mathcal{L}_l(\underline{\mathbf{U}}_l)\in\mathbb{R}^{N_1\times\hdots \times N_P\times F_{l+1}}$ are the input and output tensors of the $l$-th layer, and $\mathbf{W}_{l}\in\mathbb{R}^{F_l\times F_{l+1}}$ contains the learnable graph filter weights for transformation from the initial space of $F_l$ features to $F_{l+1}$ ones.
%Next, we show the connection between product graphs and the graph filtering operation in \eqref{tensor_formm2}. 
% First, in the next lemma, we illustrate the separability property of the heat kernels induced by the product graphs \cite{stanley2020multiway}.
% \begin{lemma}
% \label{The1}
% The exponential graph kernel $e^{t\mathbf{L}_\diamond}$ on a Cartesian product graph $\mathbf{L}_\diamond\vcentcolon=\oplus_{p=1}^{P}{\mathbf{L}_p}$ can be Kronecker-factored into its factor-based graph kernels as $e^{t\mathbf{L}_\diamond}=\otimes_{p=1}^{P}{e^{t\mathbf{L}_p}}$. 
% % \jhony{Aref, when we define filters in GSP, we usually explicitly state the dependence with the graph signal. Here we don't see that.} \aref{I changed "filters" to "kernels" to avoid any confusion.}
% % \begin{equation}
% % e^{t\mathbf{L}_\diamond}=\otimes_{p=1}^{P}{e^{t\mathbf{L}_p}}
% % \end{equation}
% \end{lemma}
Figure \ref{fig:CITRUS} illustrates a high-level description of $f(\underline{\mathbf{U}}_l)$. The next proposition formulates \eqref{tensor_formm2} as a graph convolution defined on a product graph.
%The next proposition formulates the \method~solution in \eqref{tensor_formm2} under the umbrella of a typical graph convolution framework defined on a product graph.
\begin{proposition}
\label{MDPDE_theorem}
The core function of \method~in \eqref{tensor_formm2} can be rewritten as:
\begin{equation}
\label{imp_eq}
\left[f(\underline{\mathbf{U}}_l)_{(P+1)}\right]^\top=e^{-t_l\mathbf{L}_\diamond}{[\underline{\mathbf{U}}_l}_{(P+1)}]^\top\mathbf{W}_{l},
\end{equation}
% The TPDEG solution to the $l$-th layer of a \method~in \eqref{tensor_formm2} can be rewritten as:
% \begin{equation}
% \label{imp_eq}
% %\begin{split}
% %&
% \left[(\mathcal{L}_l(\underline{\mathbf{U}}_l))_{(P+1)}\right]^\top=e^{-t_l\mathbf{L}_\diamond}{[\underline{\mathbf{U}}_l}_{(P+1)}]^\top\mathbf{W}_{l},
% %\end{split}
% \end{equation}
where $\mathbf{L}_\diamond \vcentcolon= \downarrow\oplus_{p=1}^{P}{\mathbf{L}_p}$ is the Laplacian of the Cartesian product graph.
%, and $_{(P+1)}$ states the $(P+1)$-th mode matricization of the tensors.
% \begin{equation}
% e^{t_k\tilde{\mathbf{L}}_\diamond}=\downarrow\otimes_{p=1}^{P}{e^{t_k\mathbf{L}_p}}
% \end{equation}
\end{proposition}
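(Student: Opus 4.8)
The plan is to unfold both sides of \eqref{tensor_formm2} along the mode $(P+1)$ and use the standard identity relating mode-$n$ products to Kronecker products of the factor matrices. Recall the basic fact from multilinear algebra (see \cite{kolda2009tensor}, Sec.~2.5): if $\underline{\mathbf{G}} = \underline{\mathbf{U}} \times_1 \mathbf{X}_1 \times_2 \cdots \times_D \mathbf{X}_D$, then the mode-$n$ matricization satisfies $\underline{\mathbf{G}}_{(n)} = \mathbf{X}_n \underline{\mathbf{U}}_{(n)} \left( \mathbf{X}_D \otimes \cdots \otimes \mathbf{X}_{n+1} \otimes \mathbf{X}_{n-1} \otimes \cdots \otimes \mathbf{X}_1 \right)^\top$. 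I would apply this with $n = P+1$, so that the factor $\mathbf{X}_{P+1} = \mathbf{W}_l^\top$ comes out in front and the remaining factors $e^{-t_l \mathbf{L}_p}$ assemble into a Kronecker product acting on the right.

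First I would write the left-hand side: taking $n=P+1$ in the identity above, with $\mathbf{X}_p = e^{-t_l\mathbf{L}_p}$ for $p=1,\dots,P$ and $\mathbf{X}_{P+1} = \mathbf{W}_l^\top$, gives $f(\underline{\mathbf{U}}_l)_{(P+1)} = \mathbf{W}_l^\top\, {\underline{\mathbf{U}}_l}_{(P+1)} \left( e^{-t_l\mathbf{L}_P} \otimes \cdots \otimes e^{-t_l\mathbf{L}_1} \right)^\top$. Transposing both sides and using that each $e^{-t_l\mathbf{L}_p}$ is symmetric (since $\mathbf{L}_p$ is symmetric), the transpose of the Kronecker product is $\downarrow\!\otimes_{p=1}^{P} e^{-t_l\mathbf{L}_p}$, so we obtain $\left[f(\underline{\mathbf{U}}_l)_{(P+1)}\right]^\top = \left( \downarrow\!\otimes_{p=1}^{P} e^{-t_l\mathbf{L}_p}\right) {[\underline{\mathbf{U}}_l}_{(P+1)}]^\top \mathbf{W}_l$.

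The remaining step is to identify $\downarrow\!\otimes_{p=1}^{P} e^{-t_l\mathbf{L}_p}$ with $e^{-t_l \mathbf{L}_\diamond}$ where $\mathbf{L}_\diamond = \downarrow\!\oplus_{p=1}^{P}\mathbf{L}_p$. This is the separability of the heat kernel on a Cartesian product graph: using the mixed-product property of the Kronecker product one shows $e^{\mathbf{A}\oplus\mathbf{B}} = e^{\mathbf{A}}\otimes e^{\mathbf{B}}$, and iterating over the $P$ factors (in the order fixed by the $\downarrow$ convention) yields $e^{-t_l(\mathbf{L}_P\oplus\cdots\oplus\mathbf{L}_1)} = e^{-t_l\mathbf{L}_P}\otimes\cdots\otimes e^{-t_l\mathbf{L}_1} = \downarrow\!\otimes_{p=1}^{P} e^{-t_l\mathbf{L}_p}$. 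Substituting this into the previous display gives exactly \eqref{imp_eq}.

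The main obstacle, such as it is, is purely bookkeeping: one must be careful that the ordering of the Kronecker factors produced by the matricization identity (which reverses index order to $D,\dots,1$) matches the ordering in the $\downarrow$ notation defined in the preliminaries, and that the transpose does not silently reverse it again. I expect the Kronecker-sum exponential identity $e^{\mathbf{A}\oplus\mathbf{B}}=e^{\mathbf{A}}\otimes e^{\mathbf{B}}$ to be stated as a small lemma (or cited), since it is the conceptual heart of why the factorized continuous filter coincides with a single exponential filter on the product graph; everything else is a direct application of standard tensor-unfolding rules.
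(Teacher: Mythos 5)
Your proposal is correct and follows essentially the same route as the paper: unfold \eqref{tensor_formm2} along mode $(P+1)$ via the standard Kronecker-product matricization identity, transpose, and invoke the separability of the Cartesian heat kernel, $e^{-t_l\mathbf{L}_\diamond}=\downarrow\!\otimes_{p=1}^{P}e^{-t_l\mathbf{L}_p}$, which the paper isolates as a lemma exactly as you anticipated. The only cosmetic difference is how that lemma is established (you suggest the commuting-exponentials/mixed-product argument, the paper uses the spectral decomposition $\mathbf{V}_\diamond=\otimes_p\mathbf{V}_p$, $\boldsymbol{\Lambda}_\diamond=\oplus_p\boldsymbol{\Lambda}_p$), which does not change the substance.
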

% the Cartesian structure into the graph exponential filtering and show that the Cartesian product graphs have separate exponential graph kernels w.r.t. different domains.
% \jhony{Aref, the Eqn. \eqref{eff_imp} is taking a lot of space and it's a bit hard to read. Try to think about how to make it more readable.}
% \aref{You are right. I will think about it.}
Proposition \ref{MDPDE_theorem} is the main building block for implementing \method.
More precisely, we use the spectral decompositions of the factor graphs $\{\mathbf{L}_p=\mathbf{V}_p\boldsymbol{\Lambda}_p\mathbf{V}^\top_p\}_{p=1}^P$ and product graph $\{\mathbf{L}_\diamond=\mathbf{V}_\diamond\boldsymbol{\Lambda}_\diamond\mathbf{V}^\top_\diamond\}$, where $\mathbf{V}_\diamond=\downarrow\otimes_{p=1}^{P}{\mathbf{V}_p}$ and $\boldsymbol{\Lambda}_\diamond=\downarrow\oplus_{p=1}^{P}{\boldsymbol{\Lambda}_p}$ \cite{sandryhaila2014big}.
Let $K_p\le N_p$ be the number of selected eigenvalue-eigenvector pairs of the $p$-th factor Laplacian. When $K_p= N_p$, it can be shown \cite{sharp2022diffusionnet,behmanesh2023tide} that we can rewrite \eqref{imp_eq} as follows:
\begin{equation}
\label{eff_imp}
\left[f(\underline{\mathbf{U}}_l)_{(P+1)}\right]^\top = \mathbf{V}^{(K_p)}_\diamond\left(\underbrace{\overbrace{[\tilde{\boldsymbol{\lambda}}_l|\hdots|\tilde{\boldsymbol{\lambda}}_l]}^{F_l\:\text{times}}}_{\tilde{\boldsymbol{\Lambda}}_l}\odot\left({\mathbf{V}^{(K_p)}_\diamond}^\top\left[{\underline{\mathbf{U}}_{l}}_{(P+1)}\right]^\top\right)\right)\mathbf{W}_{l},
\end{equation}
\begin{equation}
\label{lamb_V}
%\begin{split}
%&
\text{with}\:\:\:\:\:\:\:\:\:\:\:\:\tilde{\boldsymbol{\lambda}}_l=\downarrow\otimes_{p=1}^{P}{e^{-t_l\boldsymbol{\lambda}^{(K_p)}_p}},\:\:\:\:\:\:\:\:\:\:\mathbf{V}^{(K_p)}_\diamond=\downarrow\otimes_{p=1}^{P}{\mathbf{V}^{(K_p)}_p},
%\end{split}
\end{equation}
where $\boldsymbol{\lambda}^{(K_p)}_p\in\mathbb{R}^{K_p\times1}$ and $\mathbf{V}^{(K_p)}_p\in\mathbb{R}^{N_p\times K_p}$ are the first $K_p\le N_p$ selected eigenvalues and eigenvectors of $\mathbf{L}_p$ based on largest eigenvalue magnitudes, respectively, and $\odot$ is the point-wise multiplication operation.
Finally, we can define the output of the $l$-th layer of \method~as ${\underline{\mathbf{U}}_{l+1}}_{(P+1)}=\sigma\left(f(\underline{\mathbf{U}}_l)_{(P+1)}\right)$, where $\sigma(\cdot)$ is some proper activation function.
\begin{remark}
\label{remark_2}
We can consider factor-specific learnable graph receptive fields $\{t^{(p)}_l\}_{p=1}^{P}$ for formulating $\tilde{\boldsymbol{\lambda}}_l$ in \eqref{lamb_V} as $\tilde{\boldsymbol{\lambda}}_{l}=\downarrow\otimes_{p=1}^{P}{e^{-t^{(p)}_l\boldsymbol{\lambda}^{(K_p)}_p}}$ to make \method~more flexible for each factor graph's receptive field.
%In order to make the proposed \method~more flexible for each factor graph's receptive field, one can consider factor-specific learnable time dependencies $\{t^{(p)}_l\}_{p=1}^{P}$ for formulating $\tilde{\boldsymbol{\lambda}}_l$ in \eqref{lamb_V} as $\tilde{\boldsymbol{\lambda}}_{l}=\downarrow\otimes_{p=1}^{P}{e^{-t^{(p)}_l\boldsymbol{\lambda}^{(K_p)}_p}}$. 
This technique can be expanded to the channel-wise graph receptive fields $\{t^{(p,c)}_l\}_{p=1,c=1}^{P,F_l}$ such that the $c$-th column of $\tilde{\boldsymbol{\Lambda}}_l$ in \eqref{eff_imp} is obtained by $\tilde{\boldsymbol{\lambda}}_{k,c}=\downarrow\otimes_{p=1}^{P}{e^{-t^{(p,c)}_l\boldsymbol{\lambda}^{(K_p)}_p}}$.
\end{remark}
\begin{remark}
\label{remark_1}
Computing the EVD in the product graph $\mathbf{L}_\diamond$ in \eqref{imp_eq} has a general complexity of $\mathcal{O}([\prod_{p=1}^{P}{N_p}]^3)$.
However, we obtain a significant reduction in complexity of $\mathcal{O}([\sum_{p=1}^{P}{N^3_p}])$ in \eqref{eff_imp} by relying upon the properties of product graphs since we perform EVD on each factor graph independently.
Besides, if we rely only on the $K_p$ most important eigenvector-eigenvalue pairs \cite{behmanesh2023tide} of each factor graph, we can reduce the complexity of the spectral decomposition to $\mathcal{O}(N_p^2K_p)$, obtaining a general complexity of $\mathcal{O}([\sum_{p=1}^{P}{N^2_p K_p}])$.
%The eig-eiv selective property in \eqref{eff_imp} makes it possible to reduce the complexity of factor-wise spectral decompositions from $\mathcal{O}(N_p^3)$ to $\mathcal{O}(N_p^2K_p)$ 
% \jhony{For me, it's not evident why the eig-eiv selective property makes this possible. What's this selective property you're referring to? Maybe we should give some citations or be more precise.}
% Besides, from \eqref{eff_imp}, it can be seen that instead of directly performing spectral decomposition on the product graph in \eqref{imp_eq}, the exploited spectral approach in \eqref{eff_imp} with general complexity of $\mathcal{O}([\prod_{p=1}^{P}{N_p}]^3)$ broke this decomposition factor-wisely with significantly reduced complexity of $\mathcal{O}([\sum_{p=1}^{P}{N^3_p}])$ by performing EVD on each factor graph, independently.   
\end{remark}

\subsection{Stability Analysis}
\label{Stability_Analysis}
% \jhony{Don't forget to define the operator $\vertiii \cdot $.} \aref{Done. Thanks.}
We study the stability of \method~against possible perturbations in the factor graphs.
First, as defined in the literature \cite{song2022robustness}, we model the perturbation to the $p$-th graph as an addition of an error matrix $\mathbf{E}_p$ (with upper bound $\varepsilon_p$ for any matrix norm $\vertiii{\cdot}$) to the adjacency matrix $\mathbf{A}_p$ as:
\begin{equation}
\label{factor_perturb_bounds}
\tilde{\mathbf{A}}_p=\mathbf{A}_p+\mathbf{E}_p;\:\:\:\vertiii{\mathbf{E}_p}\le\varepsilon_p.
\end{equation}
% The next proposition shows that the Cartesian structure is also held at the perturbation model of the resulting product graph. \jhony{Not sure if I understood this phrase. Is it really necessary, I think we can delete it and it doesn't affect anything.}
\begin{proposition}
\label{prop_prodE}
% \jhony{Aref, this proposition is not clear, could you rewrite it please? You might need to divide this paragraph into several phrases.}
Let \(\mathbf{A}\) and \(\tilde{\mathbf{A}}\) be the true and perturbed adjacency matrix of Cartesian product graphs with the perturbation model for each factor graph described as in \eqref{factor_perturb_bounds}. Then, it holds that:
\begin{equation}
\label{factor_perturb_bounds2}
\tilde{\mathbf{A}}=\mathbf{A}+\mathbf{E};\:\:\:\vertiii{\mathbf{E}}\le\sum_{p=1}^{P}{\varepsilon_p},
\end{equation}
where the perturbation matrix \(\mathbf{E}\) also follows the Cartesian structure $\mathbf{E}=\oplus_{p=1}^{P}{\mathbf{E}_p}$.
% \begin{equation}
% %\label{eq_prodE2}
% \mathbf{E}=\oplus_{p=1}^{P}{\mathbf{E}_p}
% \end{equation}
\end{proposition}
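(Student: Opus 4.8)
The plan is to establish the two claims in Proposition~\ref{prop_prodE} in sequence: first the structural identity $\mathbf{E}=\oplus_{p=1}^{P}\mathbf{E}_p$, and then the norm bound $\vertiii{\mathbf{E}}\le\sum_{p=1}^{P}\varepsilon_p$. For the structural part, I would start from the definition of the Cartesian product adjacency matrix, which by the remark following \eqref{eqn:cartesian_product} obeys the same Kronecker-sum recursion as the Laplacian, i.e. $\mathbf{A}=\oplus_{p=1}^{P}\mathbf{A}_p$. Substituting the perturbation model $\tilde{\mathbf{A}}_p=\mathbf{A}_p+\mathbf{E}_p$ into $\tilde{\mathbf{A}}=\oplus_{p=1}^{P}\tilde{\mathbf{A}}_p$ and expanding, the key algebraic fact is that the Kronecker sum is (multi-)linear in each argument: $(\mathbf{A}_1+\mathbf{E}_1)\oplus\dots\oplus(\mathbf{A}_P+\mathbf{E}_P)$ distributes into a sum of $2^P$ Kronecker-sum terms, but because $\oplus$ is built from $\otimes$ with identities in the ``other'' slots (see \eqref{eqn:cartesian_product}), it is in fact additive slot-by-slot: $\oplus_{p}(\mathbf{A}_p+\mathbf{E}_p)=\big(\oplus_p\mathbf{A}_p\big)+\big(\oplus_p\mathbf{E}_p\big)$. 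I would prove this cleanly by induction on $P$ using the two-factor identity \eqref{eqn:cartesian_product}: for $P=2$, $(\mathbf{A}_1+\mathbf{E}_1)\oplus(\mathbf{A}_2+\mathbf{E}_2)=(\mathbf{A}_1+\mathbf{E}_1)\otimes\mathbf{I}_{N_2}+\mathbf{I}_{N_1}\otimes(\mathbf{A}_2+\mathbf{E}_2)$, and bilinearity of $\otimes$ splits each term into the unperturbed and perturbation parts, regrouping to $\mathbf{A}_1\oplus\mathbf{A}_2+\mathbf{E}_1\oplus\mathbf{E}_2$. The inductive step treats $\oplus_{p=1}^{P}$ as $(\oplus_{p=1}^{P-1})\oplus(\cdot)_P$ and reuses the two-factor case. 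Hence $\mathbf{E}\vcentcolon=\tilde{\mathbf{A}}-\mathbf{A}=\oplus_{p=1}^{P}\mathbf{E}_p$.

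For the norm bound, I would again induct on $P$, or simply expand $\mathbf{E}=\oplus_{p=1}^{P}\mathbf{E}_p=\sum_{p=1}^{P}\mathbf{I}_{N_1}\otimes\dots\otimes\mathbf{E}_p\otimes\dots\otimes\mathbf{I}_{N_P}$ and apply the triangle inequality: $\vertiii{\mathbf{E}}\le\sum_{p=1}^{P}\vertiii{\mathbf{I}_{N_1}\otimes\dots\otimes\mathbf{E}_p\otimes\dots\otimes\mathbf{I}_{N_P}}$. The remaining point is that each summand has norm equal to $\vertiii{\mathbf{E}_p}$ — or at least is bounded by $\varepsilon_p$ — which holds for the operator ($2$-)norm and the Frobenius norm because $\vertiii{\mathbf{X}\otimes\mathbf{Y}}=\vertiii{\mathbf{X}}\,\vertiii{\mathbf{Y}}$ for these norms and $\vertiii{\mathbf{I}_n}=1$ (operator norm) or a harmless constant one can absorb. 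Combining, $\vertiii{\mathbf{E}}\le\sum_{p=1}^{P}\vertiii{\mathbf{E}_p}\le\sum_{p=1}^{P}\varepsilon_p$, which is the claim.

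The main obstacle is the phrase ``any matrix norm $\vertiii{\cdot}$'': the Kronecker-multiplicativity $\vertiii{\mathbf{X}\otimes\mathbf{Y}}=\vertiii{\mathbf{X}}\vertiii{\mathbf{Y}}$ and the normalization $\vertiii{\mathbf{I}}=1$ do not hold for arbitrary submultiplicative matrix norms, so I would either (i) restrict the statement to the norms actually used downstream in the stability analysis (spectral and Frobenius), where these properties are standard, or (ii) note that for any norm one still has $\vertiii{\mathbf{I}_{N_1}\otimes\dots\otimes\mathbf{E}_p\otimes\dots\otimes\mathbf{I}_{N_P}}\le c_p\varepsilon_p$ for a norm-dependent constant $c_p$, and fold the constants into the $\varepsilon_p$'s. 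Either way the proof structure — (a) Cartesian structure of $\mathbf{E}$ by multilinearity of $\oplus$, then (b) triangle inequality plus Kronecker norm behavior — is unchanged; I would present (a) in full and keep (b) to a couple of lines.
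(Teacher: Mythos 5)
Your proposal is correct and follows essentially the same route as the paper's proof: the paper also establishes $\mathbf{E}=\oplus_{p=1}^{P}\mathbf{E}_p$ by induction on $P$ via the two-factor identity \eqref{eqn:cartesian_product}, and then bounds $\vertiii{\mathbf{E}}$ by the triangle inequality together with $\vertiii{\mathbf{E}_p\otimes\mathbf{I}}\le\vertiii{\mathbf{E}_p}\vertiii{\mathbf{I}}$ and $\vertiii{\mathbf{I}}=1$ (your flat expansion into $P$ Kronecker terms versus the paper's nested induction is an immaterial difference). Your caveat about ``any matrix norm'' is well taken — the paper's own proof also implicitly assumes Kronecker submultiplicativity and unit norm of the identity, which hold for induced operator norms but not, e.g., for the Frobenius norm — so restricting to the norms used downstream (as you suggest) is the cleaner reading of the statement.
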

Finally, let $\varphi(u,t)$ and $\tilde{\varphi}(u,t)$ be the true and perturbed output of the \method~model with normalized Laplacian $\hat{\mathbf{L}}_\diamond$ being responsible for generating ${\underline{\mathbf{U}}_t}_{(P+1)}$ in a heat flow PDE with normalized Laplacian $\hat{\mathbf{L}}_\diamond$ as $\frac{\partial\varphi(u,t)}{\partial t}=-\hat{\mathbf{L}}_\diamond\varphi(u,t)$ in \eqref{imp_eq} \cite{song2022robustness},
% by considering $({\underline{\mathbf{U}}_L})_{(P+1)}$ and $({\tilde{\underline{\mathbf{U}}}_L})_{(P+1)}$ as the true and perturbed outputs of an $L$-block \method~model, 
respectively. Then, the following theorem states that the integrated error bound on the stability properties of \method~is also separable when dealing with Cartesian product graphs:
\begin{theorem}
\label{thm_e1e2}
Consider a Cartesian product graph (without isolated nodes) with the normalized true and perturbed Laplacians $\hat{\mathbf{L}}_\diamond$ and $\tilde{\hat{\mathbf{L}}}_\diamond$ in \eqref{imp_eq}. Also, assume we have the perturbation model $\tilde{\mathbf{A}}_p = \mathbf{A}_p + \mathbf{E}_p$ with factor error bounds $\{\vertiii{\mathbf{E}_p} \le\varepsilon_p\}_{p=1}^P$ in Proposition \ref{prop_prodE}. Then, the stability bound on the true and perturbed outputs of \method~, \ie $\varphi(u,t)$ and $\tilde{\varphi}(u,t)$, respectively, can be described by the summation of the factor stability bounds as:
\begin{equation}
\| \varphi(u,t)-\tilde{\varphi}(u,t)\|=\sum_{p=1}^{P}{\mathcal{O}(\varepsilon_p)}.
\end{equation}
\end{theorem}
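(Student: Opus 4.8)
The plan is to reduce the statement to the single-graph heat-kernel stability result of \cite{song2022robustness} applied to the Cartesian \emph{product} graph, using Proposition \ref{prop_prodE} to turn the $P$ factor-wise perturbation budgets into one budget on the product graph. First I would invoke Proposition \ref{prop_prodE}: the product-graph adjacency perturbation is itself Cartesian, $\mathbf{E}_\diamond=\oplus_{p=1}^{P}\mathbf{E}_p$, with $\vertiii{\mathbf{E}_\diamond}\le\sum_{p=1}^{P}\varepsilon_p$. Since the degree of a product vertex $(i_1,\dots,i_P)$ equals $\sum_p d^{(p)}_{i_p}$, the product degree matrix is $\mathbf{D}_\diamond=\oplus_p\mathbf{D}_p$ and the unnormalized Laplacian is $\mathbf{L}_\diamond=\oplus_p\mathbf{L}_p$; the ``no isolated nodes'' hypothesis guarantees $\mathbf{D}_\diamond\succ0$, so $\hat{\mathbf{L}}_\diamond=\mathbf{D}_\diamond^{-1/2}\mathbf{L}_\diamond\mathbf{D}_\diamond^{-1/2}$ and its perturbed counterpart $\tilde{\hat{\mathbf{L}}}_\diamond$ are well defined.

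Next I would treat the product graph as a single host graph and run the stability argument of \cite{song2022robustness} on it. Writing the heat-flow map as $u\mapsto\varphi(u,t)=e^{-t\hat{\mathbf{L}}_\diamond}u$, the identity $e^{-tB}-e^{-tA}=\int_0^t e^{-(t-s)B}(A-B)e^{-sA}\,ds$ together with $\|e^{-s\hat{\mathbf{L}}_\diamond}\|\le1$ for PSD Laplacians yields $\|\varphi(u,t)-\tilde{\varphi}(u,t)\|\le t\,\vertiii{\hat{\mathbf{L}}_\diamond-\tilde{\hat{\mathbf{L}}}_\diamond}\,\|u\|$. It then remains to control $\vertiii{\hat{\mathbf{L}}_\diamond-\tilde{\hat{\mathbf{L}}}_\diamond}$ by $\vertiii{\mathbf{E}_\diamond}$ up to structural constants (minimum product-graph degree), exactly as in the single-graph case. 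Combining with the first step gives $\|\varphi(u,t)-\tilde{\varphi}(u,t)\|=\mathcal{O}(\vertiii{\mathbf{E}_\diamond})=\mathcal{O}\!\left(\sum_{p=1}^{P}\varepsilon_p\right)=\sum_{p=1}^{P}\mathcal{O}(\varepsilon_p)$, the last step being a trivial redistribution of the bound across the $P$ factors.

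The main obstacle I anticipate is precisely the passage from the adjacency perturbation to the normalized-Laplacian perturbation on the product graph: although $\mathbf{A}_\diamond$, $\mathbf{D}_\diamond$, and $\mathbf{L}_\diamond$ all respect the Cartesian structure, the symmetric normalization $\mathbf{D}_\diamond^{-1/2}$ does \emph{not} factor over the Kronecker sum, so $\hat{\mathbf{L}}_\diamond$ cannot simply be assembled from the normalized factor Laplacians. I would handle this by bounding $\vertiii{\hat{\mathbf{L}}_\diamond-\tilde{\hat{\mathbf{L}}}_\diamond}$ directly through the triangle inequality on $\mathbf{D}_\diamond^{-1/2}\mathbf{L}_\diamond\mathbf{D}_\diamond^{-1/2}$ versus $\tilde{\mathbf{D}}_\diamond^{-1/2}\tilde{\mathbf{L}}_\diamond\tilde{\mathbf{D}}_\diamond^{-1/2}$, using $\vertiii{\mathbf{E}_\diamond}\le\sum_p\varepsilon_p$, the bound $\vertiii{\tilde{\mathbf{D}}_\diamond-\mathbf{D}_\diamond}\le\vertiii{\mathbf{E}_\diamond}$, and the fact that the absence of isolated nodes keeps product-graph degrees bounded below (so $\vertiii{\mathbf{D}_\diamond^{-1/2}-\tilde{\mathbf{D}}_\diamond^{-1/2}}=\mathcal{O}(\vertiii{\mathbf{E}_\diamond})$); note it suffices that one factor graph have no isolated node.

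Finally, if the bound is wanted for the full multilayer \method~rather than the single heat-flow step, I would propagate the estimate through the layers: each layer applies the heat kernel, the fixed linear map $\mathbf{W}_l$, and a non-expansive activation $\sigma$, so the per-layer error is multiplied only by $\|\mathbf{W}_l\|$ and the heat-kernel contraction, which leaves the rate $\sum_{p=1}^{P}\mathcal{O}(\varepsilon_p)$ unchanged and only affects the hidden constants.
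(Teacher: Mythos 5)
Your proposal is correct and follows essentially the same route as the paper: the paper's proof simply invokes Proposition~1 of \cite{song2022robustness} on the product graph, with the perturbation budget $\vertiii{\mathbf{E}}\le\sum_{p=1}^{P}\varepsilon_p$ supplied by Proposition~\ref{prop_prodE}, exactly your reduction. The only difference is that you unpack the internals of the cited single-graph stability result (Duhamel identity, control of the normalized-Laplacian perturbation under the no-isolated-nodes assumption), which the paper treats as a black box.
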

Theorem \ref{thm_e1e2} states that the solution of the TPDEG \eqref{MDPDE_sol} is robust w.r.t. the scale of the factor graph perturbations. This is a desired property when dealing with erroneous factor graphs, and will be numerically validated in Section \ref{stab_exp}.

\subsection{Over-smoothing Analysis}
\label{Oversmoothing}

There is a prevalent notion of describing over-smoothing in GNNs based on decaying Dirichlet energy against increasing the number of layers. This has been theoretically analyzed in related literature \cite{giovanni2023understanding,rusch2023survey,han2023continuous} and can be formulated \cite{han2023continuous} for the output of a continuous GNN. Precisely, it can be defined on the GNN's output $\mathbf{U}_t=[\mathbf{u}(t)_1,\hdots,\mathbf{u}(t)_N]^\top$, with $\mathbf{u}(t)_i$ being the feature vector of the $i$-th node with the degree $\text{deg}_i$, as:
\begin{equation} \lim_{t\rightarrow\infty}{E(\mathbf{U}_t)}\rightarrow 0,
\end{equation}
\begin{equation}
\label{e_dir}
%\text{where}\:\:\:\:\:\:\:\:\:\:\:\:\:\:\:E(\mathbf{U})\vcentcolon=\frac{1}{2}\sum_{(i,j)\in\mathcal{E}}{\left\|\frac{\mathbf{u}_i}{\sqrt{\text{deg}_i}}-\frac{\mathbf{u}_j}{\sqrt{\text{deg}_j}}\right\|^2}=\text{tr}(\mathbf{U}^\top\hat{\mathbf{L}}\mathbf{U}).
\text{where}\quad E(\mathbf{U})\vcentcolon=\frac{1}{2}\sum_{(i,j)\in\mathcal{E}}{\left\|\frac{\mathbf{u}_i}{\sqrt{\text{deg}_i}}-\frac{\mathbf{u}_j}{\sqrt{\text{deg}_j}}\right\|^2}=\tr(\mathbf{U}^\top\hat{\mathbf{L}}\mathbf{U}).
\end{equation}
Inspired by \eqref{e_dir}, we extend the definition of Dirichlet energy to the tensorial case as follows:
\begin{definition}
(Tensorial Dirichlet energy). By considering the normalized factor Laplacians $\{\hat{\mathbf{L}}_p\}_{p=1}^P$, we define the Tensorial Dirichlet energy for a tensor $\underline{\mathbf{U}}\in\mathbb{R}^{N_1\times\hdots\times N_P\times F}$ as:
\begin{equation}
\label{Smoothness_tensor}
E(\underline{\mathbf{U}})\vcentcolon=\frac{1}{P}\sum_{f=1}^{F}{\sum_{p=1}^{P}{\tr({\tilde{\mathbf{U}}^\top_{f_{(p)}}}{\hat{\mathbf{L}}_p}{\tilde{\mathbf{U}}_{f_{(p)}}})}},
\end{equation}
where ${\tilde{\mathbf{U}}_{f_{(p)}}}=\underline{\mathbf{U}}[:,\hdots,:, f]_{(p)}\in\mathbb{R}^{N_p\times \prod_{i\ne p}^{P}{N_i}}$, \ie the $p$-th mode matricization of the $f$-th slice of $\underline{\mathbf{U}}$ in its $(P+1)$-th dimension. 
\end{definition}
Based on the previous research in the GSP literature \cite{kadambari2021product,indibi2023low,stanley2020multiway}, it follows that $E(\underline{\mathbf{U}})$ in \eqref{Smoothness_tensor} can be rewritten as $E(\underline{\mathbf{U}})=\tr(\tilde{\mathbf{U}}^\top\hat{\mathbf{L}}\tilde{\mathbf{U}})$, where $\tilde{\mathbf{U}}\in\mathbb{R}^{[\Pi_{p=1}^P{N_p}]\times F}$, $\tilde{\mathbf{U}}[:,f]=\vecmath(\underline{\mathbf{U}}[:,\hdots,:, f])$, and  $\hat{\mathbf{L}}\vcentcolon=\frac{1}{P}\oplus_{p=1}^{P}{\hat{\mathbf{L}}_p}=\oplus_{p=1}^{P}{\left(\frac{\hat{\mathbf{L}}_p}{P}\right)}$. The next lemma shows interesting and applicable properties of $\hat{\mathbf{L}}$.

\begin{lemma}
\label{lemma_prod}
Consider $P$ factor graphs with normalized adjacencies $\{\hat{\mathbf{A}}_p\}_{p=1}^P$ and Laplacians $\{\hat{\mathbf{L}}_p\}_{p=1}^P$. By constructing the product adjacency as $\hat{\mathbf{A}}\vcentcolon=\frac{1}{P}\oplus_{p=1}^{P}{\hat{\mathbf{A}}_p}=\oplus_{p=1}^{P}{\left(\frac{\hat{\mathbf{A}}_p}{P}\right)}$, a similar 
Cartesian form for the product Laplacian $\hat{\mathbf{L}}:=\mathbf{I}-\hat{\mathbf{A}}$ (with $\mathbf{I}\in\mathbb{R}^{[\Pi_{p=1}^{P}{N_p}]\times[\Pi_{p=1}^{P}{N_p}]}$ being the identity matrix) also holds as the following:
\begin{equation}
\label{prod_Lap}
\hat{\mathbf{L}}=\frac{1}{P}\oplus_{p=1}^{P}{\hat{\mathbf{L}}_p}=\oplus_{p=1}^{P}{\left(\frac{\hat{\mathbf{L}}_p}{P}\right)}.
\end{equation}
Besides, the spectrum of $\hat{\mathbf{L}}$ is spanned across the interval of $[0,2]$.
\end{lemma}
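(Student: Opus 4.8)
The plan is to verify the two assertions of Lemma~\ref{lemma_prod} in order: first the Cartesian identity \eqref{prod_Lap} for $\hat{\mathbf{L}}$, and then the spectral containment in $[0,2]$.

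For the first part, I would start from the definitions $\hat{\mathbf{A}}=\frac{1}{P}\oplus_{p=1}^{P}\hat{\mathbf{A}}_p$ and $\hat{\mathbf{L}}:=\mathbf{I}-\hat{\mathbf{A}}$, and expand the Kronecker sum using \eqref{eqn:cartesian_product} iterated $P$ times. The key algebraic fact is that $\oplus_{p=1}^{P}\hat{\mathbf{A}}_p=\sum_{p=1}^{P}\mathbf{I}_{N_1}\otimes\cdots\otimes\hat{\mathbf{A}}_p\otimes\cdots\otimes\mathbf{I}_{N_P}$, and likewise for the Laplacians, while the big identity decomposes as $\mathbf{I}=\bigotimes_{p=1}^P\mathbf{I}_{N_p}$. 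Then, since each normalized factor Laplacian satisfies $\hat{\mathbf{L}}_p=\mathbf{I}_{N_p}-\hat{\mathbf{A}}_p$, I can write $\mathbf{I}-\frac{1}{P}\oplus_p\hat{\mathbf{A}}_p=\frac{1}{P}\sum_{p}\big(\bigotimes_{q}\mathbf{I}_{N_q}-\mathbf{I}_{N_1}\otimes\cdots\otimes\hat{\mathbf{A}}_p\otimes\cdots\big)=\frac{1}{P}\sum_p \mathbf{I}_{N_1}\otimes\cdots\otimes(\mathbf{I}_{N_p}-\hat{\mathbf{A}}_p)\otimes\cdots=\frac{1}{P}\oplus_p\hat{\mathbf{L}}_p$, using only that $\sum_p \frac{1}{P}\bigotimes_q\mathbf{I}_{N_q}=\bigotimes_q\mathbf{I}_{N_q}$ collapses the $P$ copies of the identity. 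This establishes \eqref{prod_Lap}. I would note this is essentially the statement that the Cartesian product structure is preserved under the normalized-Laplacian map when one uses the $\frac{1}{P}$-scaled product adjacency; the references \cite{kadambari2021product,indibi2023low,stanley2020multiway} can be cited for the underlying product-graph spectral identities.

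For the second part, I would use the spectral decomposition already recorded in the excerpt: the eigenvalues of $\oplus_{p=1}^{P}\hat{\mathbf{L}}_p$ are exactly the sums $\sum_{p=1}^P \mu_p$ where $\mu_p$ ranges over the spectrum of $\hat{\mathbf{L}}_p$ (since $\boldsymbol{\Lambda}_\diamond=\downarrow\oplus_p\boldsymbol{\Lambda}_p$ and $\mathbf{V}_\diamond=\downarrow\otimes_p\mathbf{V}_p$). Hence the eigenvalues of $\hat{\mathbf{L}}=\frac{1}{P}\oplus_p\hat{\mathbf{L}}_p$ are $\frac{1}{P}\sum_{p=1}^P\mu_p$. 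It is classical that the normalized graph Laplacian of any graph without isolated vertices has spectrum contained in $[0,2]$, with the largest eigenvalue equal to $2$ iff a connected component is bipartite; this gives $\mu_p\in[0,2]$ for each $p$. Therefore $\frac{1}{P}\sum_{p=1}^P\mu_p\in\big[\frac{1}{P}\cdot 0,\ \frac{1}{P}\cdot 2P\big]=[0,2]$, which is the claimed containment. I would also remark that $0$ is attained (take the constant eigenvector on each factor) so the interval endpoint is tight on the left, and $2$ is attainable when every factor has a bipartite component.

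I do not anticipate a serious obstacle here; the argument is a short bookkeeping exercise once the Kronecker-sum spectral rule and the standard $[0,2]$ bound for a single normalized Laplacian are in hand. The one place to be slightly careful is the ``without isolated nodes'' hypothesis: it is needed so that each $\hat{\mathbf{L}}_p$ is well-defined (the normalization $\hat{\mathbf{A}}_p=\mathbf{D}_p^{-1/2}\mathbf{A}_p\mathbf{D}_p^{-1/2}$ requires invertible degree matrices) and so that the single-factor spectral bound $\mu_p\le 2$ applies verbatim; I would state this explicitly rather than gloss over it. A second minor point is ensuring the normalization conventions are consistent — that $\hat{\mathbf{L}}_p=\mathbf{I}_{N_p}-\hat{\mathbf{A}}_p$ with the symmetric normalization — which is exactly what \eqref{e_dir} and the surrounding text assume, so no new convention is introduced.
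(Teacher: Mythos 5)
Your proof is correct and takes essentially the same route as the paper's: you split $\mathbf{I}$ into $P$ copies of $\tfrac{1}{P}\mathbf{I}$ and pair each copy with the corresponding adjacency term to obtain $\hat{\mathbf{L}}=\tfrac{1}{P}\oplus_{p=1}^{P}\hat{\mathbf{L}}_p$, then bound the spectrum via the eigenvalue-addition rule for Kronecker sums combined with the single-factor $[0,2]$ bound, exactly the ingredients the paper uses. The only difference is organizational: the paper performs this computation explicitly for $P=2$ and then proceeds by induction on $P$, whereas you handle general $P$ in a single pass (and your explicit remark about the no-isolated-nodes hypothesis is a reasonable, if minor, addition).
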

% Next, by considering the product structure w.r.t. the underlying normalized Laplacians and also the tensorial structure of $\underline{\mathbf{U}}_t\in\mathbb{R}^{N_1\times N_2\times \hdots \times N_P\times F}$, the Dirichlet energy, which is a normalized version of the \textit{smoothness} term in the GSP literature \cite{ortega2018graph,leus2023graph}, can be stated as \cite{indibi2023low,kadambari2021product,stanley2020multiway}: 
% \begin{equation}
% \label{Smoothness_tensor}
% E(\underline{\mathbf{U}}_t)=\frac{1}{P}\sum_{f=1}^{F}{\sum_{p=1}^{P}{\text{tr}({\tilde{\mathbf{U}}^\top_{f(p)}}{\hat{\mathbf{L}}_p}{\tilde{\mathbf{U}}_{f(p)}})}},
% \end{equation}
% where ${\tilde{\mathbf{U}}_{f(p)}}=\underline{\mathbf{U}}_t[:,\hdots,:, f]_{(p)}\in\mathbb{R}^{N_p\times \prod_{i\ne p}^{P}{N_i}}$, \ie the $p$-th mode matricization of the $f$-th slice of $\underline{\mathbf{U}}_t$ in its $(P+1)$-th dimension. 
We observe in \eqref{Smoothness_tensor} that the kernel separability of Cartesian heat kernels also leads to the separability of analyzing over-smoothing in different modes of the data at hand.
This is useful in cases of facing factor graphs with different characteristics. 
Next, we formally analyze over-smoothing in \method.
In our case, for the $l$-th layer with $H_l$ hidden MLP layers, non-linear activations $\sigma(\cdot)$ and learnable weight matrices $\{\mathbf{W}_{lh}\}_{h=1}^{H_l}$, we define:
\begin{equation}
\label{defs}
%\begin{split}
\mathbf{X}_{l+1}\vcentcolon=f_l(\mathbf{X}_{l}),\:\:\:f_l(\mathbf{X})\vcentcolon=\text{MLP}_l(e^{-\hat{\mathbf{L}}^{(t)}}\mathbf{X}),\:\:\:\text{MLP}_l(\mathbf{X})\vcentcolon=\sigma(\hdots\sigma(\sigma(\mathbf{X})\mathbf{W}_{l1})\mathbf{W}_{l2}\hdots\mathbf{W}_{lH_l}),
%\end{split}
\end{equation}
with \(\mathbf{X}_{0}\) being the initial node feature matrix, $f_l(\cdot)$ is a generalized form of \eqref{imp_eq}, and $e^{-\hat{\mathbf{L}}^{(t)}}\vcentcolon=\downarrow\otimes_{p=1}^{P}{e^{-t^{(p)}\frac{\hat{\mathbf{L}}_p}{P}}}$, where $t^{(i)}$ is the receptive field of the $i$-th factor graph. 
% \jhony{Aref, according to these definitions you gave, when you do $\text{MLP}_l(e^{-\hat{\mathbf{L}}^{(t)}}\mathbf{X})$ you only apply de continuous graph filter to the first layer and then you apply just linear projections, is this correct or am I missing something?}.
% % Using \(\tilde{\mathbf{x}}\) \jhony{tilde again?} as the Graph Fourier Transform (GFT) \cite{ortega2018graph} of \(\mathbf{x}\) w.r.t. \(\hat{\mathbf{L}}\) (with eigenvalues $\{\lambda_i\}_{i=1}^N$), we define: \cite{cai2020note}
% \begin{equation}
% E(\mathbf{x})\vcentcolon=\mathbf{x}^\top \hat{\mathbf{L}}\mathbf{x}=\sum_{i=1}^{N}{\tilde{x}_i\lambda^2_i}.
% \end{equation}
% \jhony{What's $E(\cdot)$? You defined the Dirichlet energy as $\epsilon$.}
Then, the next theorem describes the over-smoothing criteria.
\begin{theorem}
\label{exp_ener_prod}
For the product Laplacian \(\hat{\mathbf{L}}:=\frac{1}{P}\oplus_{p=1}^{P}{\hat{\mathbf{L}}_p}\) with domain-specific receptive field \(t^{(i)}\) for the \(i\)-th factor graph and the activations $\sigma(\cdot)$ in \eqref{defs} being ReLU or Leaky ReLU, we have:
\begin{equation}
E(\mathbf{X}_{l})\le e^{l\left(\ln{s}-\frac{2\tilde{t}\tilde{\lambda}}{P}\right)}E(\mathbf{X}_{0}),
\end{equation}
where $s\vcentcolon=\sup_{l\in\mathbb{N}_{+}}{s_l}$ and $s_l\vcentcolon=\prod_{h=1}^{H_l}{s_{lh}}$ with $s_{lh}$ being the square of maximum singular value of $\mathbf{W}^\top_{lh}$. Besides, $\tilde{\lambda}=\lambda^{(m)}$ and $\tilde{t}=t^{(m)}$ with $m=\argmin_{i}{t^{(i)}\lambda^{(i)}}$, where $\lambda^{(i)}$ is the smallest non-zero eigenvalue of $\hat{\mathbf{L}}_i$.
\end{theorem}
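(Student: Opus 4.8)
The plan is to establish a single–layer contraction of the Dirichlet energy,
\[
E(\mathbf{X}_{l+1}) \le e^{\ln s - 2\tilde t\tilde\lambda/P}\, E(\mathbf{X}_l),
\]
and then iterate it from $l=0$. Since the layer map in \eqref{defs} factors as $\mathbf{X}_{l+1}=\mathrm{MLP}_l\!\big(e^{-\hat{\mathbf{L}}^{(t)}}\mathbf{X}_l\big)$, I would bound $E(\mathbf{Z})=\tr(\mathbf{Z}^\top\hat{\mathbf{L}}\mathbf{Z})$ through the heat–kernel propagation $\mathbf{Y}\vcentcolon=e^{-\hat{\mathbf{L}}^{(t)}}\mathbf{X}_l$ and through $\mathrm{MLP}_l$ separately, with $\hat{\mathbf{L}}=\tfrac1P\oplus_{p=1}^P\hat{\mathbf{L}}_p$ fixed throughout.

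For the propagation step I would diagonalize everything in the product eigenbasis $\mathbf{V}_\diamond=\downarrow\otimes_p\mathbf{V}_p$. By Lemma \ref{lemma_prod} and the Kronecker–sum exponential identity $\exp\!\big(\oplus_p\mathbf{M}_p\big)=\downarrow\otimes_p e^{\mathbf{M}_p}$ (valid since the summands $\mathbf{M}_p\otimes\mathbf I$ pairwise commute), we have $e^{-\hat{\mathbf{L}}^{(t)}}=\exp\!\big(-\oplus_{p=1}^{P}\tfrac{t^{(p)}\hat{\mathbf{L}}_p}{P}\big)$, so $\hat{\mathbf{L}}$ and $e^{-\hat{\mathbf{L}}^{(t)}}$ commute and are simultaneously diagonalized by $\mathbf{V}_\diamond$. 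Writing $\mu_k=\tfrac1P\sum_p\lambda_{p,k_p}$ for the eigenvalues of $\hat{\mathbf{L}}$, $\beta_k=\exp\!\big(-\tfrac1P\sum_p t^{(p)}\lambda_{p,k_p}\big)$ for those of $e^{-\hat{\mathbf{L}}^{(t)}}$, and $\tilde{\mathbf x}_k$ for the rows of $\mathbf{V}_\diamond^\top\mathbf{X}_l$, we get $E(\mathbf{X}_l)=\sum_k\mu_k\|\tilde{\mathbf x}_k\|^2$ and $E(\mathbf{Y})=\sum_k\mu_k\beta_k^2\|\tilde{\mathbf x}_k\|^2$. Modes with $\mu_k=0$ contribute nothing to either sum; for every mode with $\mu_k>0$ at least one factor eigenvalue $\lambda_{p,k_p}$ is strictly positive, hence $\lambda_{p,k_p}\ge\lambda^{(p)}$, the smallest nonzero eigenvalue of $\hat{\mathbf{L}}_p$. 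Since all $t^{(p)}\ge 0$ and $t^{(p)}\lambda^{(p)}\ge\tilde t\tilde\lambda$ by the definition of $m$, it follows that $\tfrac1P\sum_p t^{(p)}\lambda_{p,k_p}\ge\tfrac{\tilde t\tilde\lambda}{P}$, i.e. $\beta_k^2\le e^{-2\tilde t\tilde\lambda/P}$ on the whole subspace that $E$ sees, giving $E(\mathbf{Y})\le e^{-2\tilde t\tilde\lambda/P}E(\mathbf{X}_l)$.

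For the MLP step I would use two elementary bounds. Right–multiplication by $\mathbf W_{lh}$ gives $E(\mathbf Z\mathbf W_{lh})=\tr\!\big((\mathbf Z^\top\hat{\mathbf{L}}\mathbf Z)\,\mathbf W_{lh}\mathbf W_{lh}^\top\big)\le s_{lh}\,\tr(\mathbf Z^\top\hat{\mathbf{L}}\mathbf Z)=s_{lh}\,E(\mathbf Z)$, because $\mathbf Z^\top\hat{\mathbf{L}}\mathbf Z\succeq 0$ and $\mathbf W_{lh}\mathbf W_{lh}^\top\preceq s_{lh}\mathbf I$. For the activation, $\hat{\mathbf{L}}$ is a bona fide symmetric normalized Laplacian (unit diagonal, nonpositive off–diagonal, positive null vector $\downarrow\otimes_p(\mathbf{D}_p^{1/2}\mathbf 1)$), so $E(\mathbf Z)$ admits the edge–difference form $\sum_{(i,j)}w_{ij}\|z_i/\sqrt{d_i}-z_j/\sqrt{d_j}\|^2$ with $w_{ij}\ge 0$; since ReLU and Leaky ReLU are positively homogeneous ($\sigma(cz)=c\sigma(z)$ for $c>0$) and $1$–Lipschitz, applying $\sigma$ entrywise replaces $z_i/\sqrt{d_i}$ by $\sigma(z_i/\sqrt{d_i})$ and does not increase any summand, hence $E(\sigma(\mathbf Z))\le E(\mathbf Z)$ (this is the argument of \cite{han2023continuous}). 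Composing the linear and nonlinear stages of $\mathrm{MLP}_l$ yields $E\big(\mathrm{MLP}_l(\mathbf Y)\big)\le s_l\,E(\mathbf Y)\le s\,E(\mathbf Y)$. Chaining the two steps gives $E(\mathbf{X}_{l+1})\le s\,e^{-2\tilde t\tilde\lambda/P}E(\mathbf{X}_l)=e^{\ln s-2\tilde t\tilde\lambda/P}E(\mathbf{X}_l)$, and induction on $l$ finishes the proof.

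The main obstacle is the propagation step: one must carry out the Kronecker–sum spectral bookkeeping carefully, in particular the claim that, among modes with $\mu_k>0$, the least–damped one is governed by the single factor attaining $\min_i t^{(i)}\lambda^{(i)}$, and that the $\downarrow\otimes$ ordering of $e^{-\hat{\mathbf{L}}^{(t)}}$ matches the $\downarrow\oplus$ ordering of $\hat{\mathbf{L}}$ so that the exponential identity and simultaneous diagonalization apply verbatim; the nonnegativity $t^{(p)}\ge 0$ of the learned receptive fields is used essentially here. A secondary point, needed in the MLP step, is that positive homogeneity of ReLU/Leaky ReLU is precisely what makes the entrywise nonlinearity commute with the degree normalization $z_i\mapsto z_i/\sqrt{d_i}$ in the edge–difference form of $E$; the contraction would fail for a generic $1$–Lipschitz activation with $\sigma(0)=0$.
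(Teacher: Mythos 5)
Your proof is correct and follows essentially the same route as the paper's: a per-layer contraction obtained from a spectral bound on the heat-kernel step (where the Kronecker-sum structure makes the relevant decay rate $\frac{1}{P}\tilde{t}\tilde{\lambda}$, the minimum non-zero eigenvalue of $\hat{\mathbf{L}}^{(t)}$), combined with the standard energy bounds $E(\mathbf{Z}\mathbf{W})\le\|\mathbf{W}^\top\|_2^2\,E(\mathbf{Z})$ and $E(\sigma(\mathbf{Z}))\le E(\mathbf{Z})$ for ReLU/Leaky ReLU, then iterated over layers. The only difference is presentational: you spell out explicitly (mode-wise simultaneous diagonalization in the product eigenbasis, the trace inequality, the homogeneity/Lipschitz argument for the activation) what the paper imports by citation from \cite{cai2020note} and compresses into its Lemma on $E(e^{-\hat{\mathbf{L}}}\mathbf{x})$ together with the observation about the smallest non-zero eigenvalue of $\hat{\mathbf{L}}^{(t)}$.
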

\begin{corollary}
\label{corrl}
When $l\rightarrow\infty$, \(E(\mathbf{X}_{l})\) exponentially converges to \(0\), when:% $\ln{s}-\frac{2}{P}\tilde{t}\tilde{\lambda}<0$.
\begin{equation}
\label{bound_t}
\ln{s}-\frac{2}{P}\tilde{t}\tilde{\lambda}<0.
\end{equation}
\end{corollary}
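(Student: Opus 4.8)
\textbf{Proof proposal for Corollary~\ref{corrl}.}

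The plan is to read off the convergence condition directly from the exponential bound established in Theorem~\ref{exp_ener_prod}. That theorem gives the inequality $E(\mathbf{X}_{l})\le e^{l\left(\ln{s}-\frac{2\tilde{t}\tilde{\lambda}}{P}\right)}E(\mathbf{X}_{0})$, so the behaviour of the right-hand side as $l\rightarrow\infty$ is governed entirely by the sign of the exponent coefficient $c\vcentcolon=\ln{s}-\frac{2}{P}\tilde{t}\tilde{\lambda}$. First I would note that $E(\mathbf{X}_l)\ge 0$ for every $l$, since the Tensorial Dirichlet energy is a sum of quadratic forms $\tr(\tilde{\mathbf{U}}_{f_{(p)}}^\top \hat{\mathbf{L}}_p \tilde{\mathbf{U}}_{f_{(p)}})$ with $\hat{\mathbf{L}}_p$ positive semidefinite (equivalently, $E(\underline{\mathbf{U}})=\tr(\tilde{\mathbf{U}}^\top\hat{\mathbf{L}}\tilde{\mathbf{U}})$ with $\hat{\mathbf{L}}$ PSD by Lemma~\ref{lemma_prod}, whose spectrum lies in $[0,2]$). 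Hence $E(\mathbf{X}_l)$ is squeezed between $0$ and $e^{lc}E(\mathbf{X}_0)$.

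Next I would observe that when $c<0$, i.e. exactly the stated condition $\ln{s}-\frac{2}{P}\tilde{t}\tilde{\lambda}<0$, the factor $e^{lc}=\left(e^{c}\right)^{l}$ is a geometric sequence with ratio $e^{c}\in(0,1)$, so $e^{lc}\rightarrow 0$ as $l\rightarrow\infty$, and therefore $e^{lc}E(\mathbf{X}_0)\rightarrow 0$. By the squeeze argument, $E(\mathbf{X}_l)\rightarrow 0$. Moreover the convergence is exponential (geometric) with rate $e^{c}$: for any $l$ we have $E(\mathbf{X}_l)\le e^{lc}E(\mathbf{X}_0)$, which is the precise quantitative meaning of ``exponentially converges to $0$.'' This is essentially a one-line consequence of Theorem~\ref{exp_ener_prod} together with nonnegativity of $E$.

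There is no real obstacle here; the only point worth stating carefully is the nonnegativity of $E(\mathbf{X}_l)$ (needed so that the upper bound tending to $0$ actually forces $E(\mathbf{X}_l)\to 0$ rather than merely bounding it from above), which follows from the positive semidefiniteness of the normalized factor Laplacians $\hat{\mathbf{L}}_p$ and of the product Laplacian $\hat{\mathbf{L}}$ from Lemma~\ref{lemma_prod}. It may also be worth remarking that $s\ge 1$ in typical settings is not required — the argument only uses the sign of $c$ — so the stated inequality \eqref{bound_t} is both sufficient (as shown) and, given the form of the bound, the natural threshold separating guaranteed decay from a possibly non-decaying bound.
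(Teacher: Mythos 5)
Your argument is correct and is essentially the paper's own proof: both read the convergence condition directly off the exponential bound $E(\mathbf{X}_l)\le e^{l(\ln s-\frac{2}{P}\tilde{t}\tilde{\lambda})}E(\mathbf{X}_0)$ from Theorem~\ref{exp_ener_prod} and observe that the right-hand side vanishes as $l\rightarrow\infty$ precisely when the exponent coefficient is negative. Your explicit remark on the nonnegativity of the Dirichlet energy (via positive semidefiniteness of $\hat{\mathbf{L}}$) is a small but welcome addition that the paper leaves implicit.
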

Theorem \ref{exp_ener_prod} shows that the factor graph with the smallest non-zero eigenvalue (spectral gap) multiplied by its receptive field dominates the overall over-smoothing. 
The less the spectral gap, the less probability the factor graph is connected \cite{Oono2020Graph}.
So, we can focus on the (much smaller) factor graphs instead of the resulting massive product graph.
These theoretical findings are experimentally validated in Section \ref{Oversmooth_exp}.

% \begin{remark}
% (\textit{A note on the computational complexity}): Benefiting from the advantage of Cartesian product graphs stated in Theorem \ref{The1}, the computational complexity of implementing the exponential graph filter on the Cartesian graph product using the EVD-based approach \cite{sharp2022diffusionnet,behmanesh2023tide}, which is \(\mathcal{O}(\prod_{p=1}^{P}{N^3_p})\), is reduced to \(\mathcal{O}(\sum_{p=1}^{P}{N^3_p})\). Therefore, using this higher-order formulation, it seems that extending the framework to the case of \(P\ge3\) provides the possibility of learning from multi-domain graph-structured data.
% \end{remark} 

\section{Experimental Results}
\label{exp_resl}

In this section, we experimentally validate the theoretical discussions regarding the stability and over-smoothing analysis and use \method~in real-world spatiotemporal forecasting tasks.
Similarly, we conduct ablation studies regarding \method~in spatiotemporal forecasting.
We compare \method~against several state-of-the-art methods in forecasting.
The results on the case of more than two factor graphs and descriptions of the state-of-the-art are provided in Appendix \ref{MoreThan2} and \ref{Descps}, respectively, alongside the additional theoretical and experimental discussions in Sections \ref{App_std}-\ref{App_Hyper}. The implementation codes are available at \href{https://github.com/ArefEinizade2/CITRUS}{https://github.com/ArefEinizade2/CITRUS}.

%In this section, we first experimentally validate the takeaways from the theoretical discussions regarding the stability (\ie Section \ref{Stability_Analysis}) and over-smoothing (\ie Section \ref{Oversmoothing}) analysis. Then, the superior/competitive results on well-known, real-world spatiotemporal data will be presented. 

% \begin{figure}
%     \centering     \includegraphics[width=0.6\columnwidth]{Stability_Results.pdf}
%     \caption{Stability analysis versus different scenarios of perturbed factor graphs by associated SNRs confirming the proved results in Theorem \ref{thm_e1e2}.}
% \label{fig:Stability}
% \end{figure}

% \begin{wrapfigure}{r}{0.35\textwidth}
%     \vspace{-10pt}
% \includegraphics[width=0.35\columnwidth]{Stability_Results.pdf}
% \caption{Stability analysis vs. different SNR scenarios, illustrating the results in Theorem \ref{thm_e1e2}.}
% %scenarios of perturbed factor graphs by different SNRs confirming the proved results in Theorem \ref{thm_e1e2}.}
% \label{fig:Stability}
% \vspace{-10pt}
% \end{wrapfigure}

\begin{wrapfigure}{r}{0.45\textwidth}
    \vspace{-10pt}
    \includegraphics[width=0.45\columnwidth]{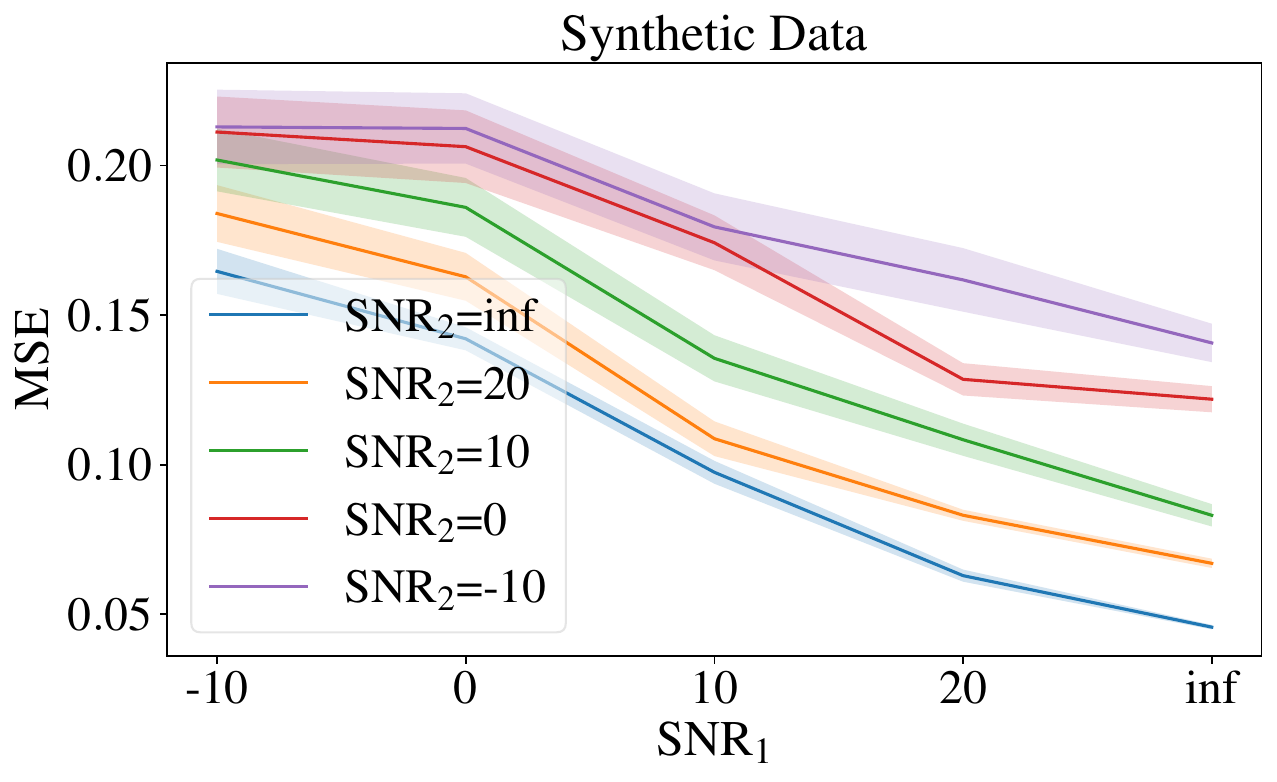}
\caption{Stability analysis vs. different SNR scenarios, related the results in Theorem \ref{thm_e1e2}.}
\label{fig:Stability}
\vspace{-15pt}
\end{wrapfigure}

\subsection{Experimental Stability Analysis}
\label{stab_exp}

We generate a $600$-node product graph consisting of two connected Erd\H{o}s-R\'enyi (ER) factor graphs with $N_1=20$, $N_2=30$, and edge probabilities $p^{(1)}_{\text{\tiny ER}}=p^{(2)}_{\text{\tiny ER}}=0.1$ for node regression.
We utilize $15\%$ of the nodes in the product graph as the test set, $15\%$ of the remaining nodes as validation, and the rest of the nodes for training.
We use a three-layer generative model to generate the outputs based on \eqref{imp_eq}.
The continuous parameters for the factor graphs are set as $t^{(1)}=2$ and $t^{(2)}=3$.
The initial product graph signal $\mathbf{X}^{(0)}\in\mathbb{R}^{N\times F_0}$ is generated from the normal distribution with $F_0=6$. 
Each layer's MLP of the generation model has only one layer with the number of hidden units $\{5, 4, 2\}$.
We add noise to the factor adjacency matrices with signal-to-noise ratios (SNRs) (in terms of db) of $\{\infty, 20, 10, 0, -10\}$.
Therefore, we construct a two-layer \method~and use one-layer MLP for each with $F_1=F_2=4$ to learn the representation in different SNRs to study the effect of each factor graph's noise on the stability performance of the network.

%In a node regression task defined on a product graph, we generated a 600-node product graph consisting of two connected Erd\H{o}s-R\'enyi (ER) factors with $N_1=20$, $N_2=30$, and edge probabilities $p^{(1)}_{\text{\tiny ER}}=p^{(2)}_{\text{\tiny ER}}=0.1$.
%We then considered 85\% of the resulting product graph as test, 85\% of the rest as validation, and the remaining nodes for training.
%We used a three-layer generative model to generate the outputs based on TPDEG solution \eqref{imp_eq}. 
% The continuous neighborhoods for the factor graphs are set as $t^{(1)}=2$ and $t^{(2)}=3$.
% The initial product graph signal $\mathbf{X}^{(0)}\in\mathbb{R}^{N\times F_0}$ is generated from the normal distribution with $F_0=6$. 
% Each layer's MLP of the generation model has only one layer with the number of features $\{5, 4, 2\}$.
% The factor adjacencies were noisy with the Signal-to-Noise Ratios (SNRs) of $[\infty, 20, 10, 0, -10]$. 
% Then, we stacked two \method~layers and used one-layer MLP for each with $F_1=F_2=4$ to learn the representation in different SNRs to study the effect of each factor graph's noise on the stability performance of the network.

Figure \ref{fig:Stability} shows the averaged prediction error in MSE between the true and predicted outputs over $10$ random realizations.
Firstly, we notice that increasing the SNR for each factor graph improves performance, illustrating the stability properties of \method.
Therefore, the effect of each factor graph's stability on the overall stability is well depicted in this figure, confirming the theoretical findings in Theorem \ref{thm_e1e2}. For instance, in SNR=10 for the first graph, the performance is still severely affected by the stability for the second graph, shown by color in Figure \ref{fig:Stability}.
Finally, we observe that the standard deviation of the predictions is smaller for larger values of SNR, illustrating \method's robustness.

% Figure \ref{fig:Stability} shows the averaged prediction error in terms of MSE between the true and predicted node-specific outputs over $10$ random realizations.
% Firstly, in this figure, by increasing the SNR for each factor graph, the performance is improved, illustrating the stability properties of \method.
% On the other hand, the existing effect of each factor graph's stability is well depicted in this figure, confirming the theoretical founding in Theorem \ref{thm_e1e2}.
% Besides, the more SNR for each factor graph, the less the prediction's standard deviation, characterizing the network robustness. 

% \newpage

\subsection{Experimental Over-smoothing Analysis}
\label{Oversmooth_exp}

% \begin{wrapfigure}{r}{0.4\textwidth}
%     \vspace{-10pt}
%     \centering
%     \includegraphics[width=0.4\textwidth]{OverSmoothing.eps}
%     \caption{Oversmoothing analysis using Theorem \ref{exp_ener_prod} in two scenarios of \textbf{left:} $\ln{s}-\frac{2}{P}\tilde{t}\tilde{\lambda}<0$, \textbf{right:} $\ln{s}-\frac{2}{P}\tilde{t}\tilde{\lambda}>0$.}
%     \label{fig:Oversmoothing}
%     \vspace{-10pt}
% \end{wrapfigure}

\begin{wrapfigure}{r}{0.6\textwidth}
    \vspace{-10pt}
    \centering
    \begin{subfigure}[t]{0.25\textwidth}
        \centering
        \includegraphics[height=1.17in]{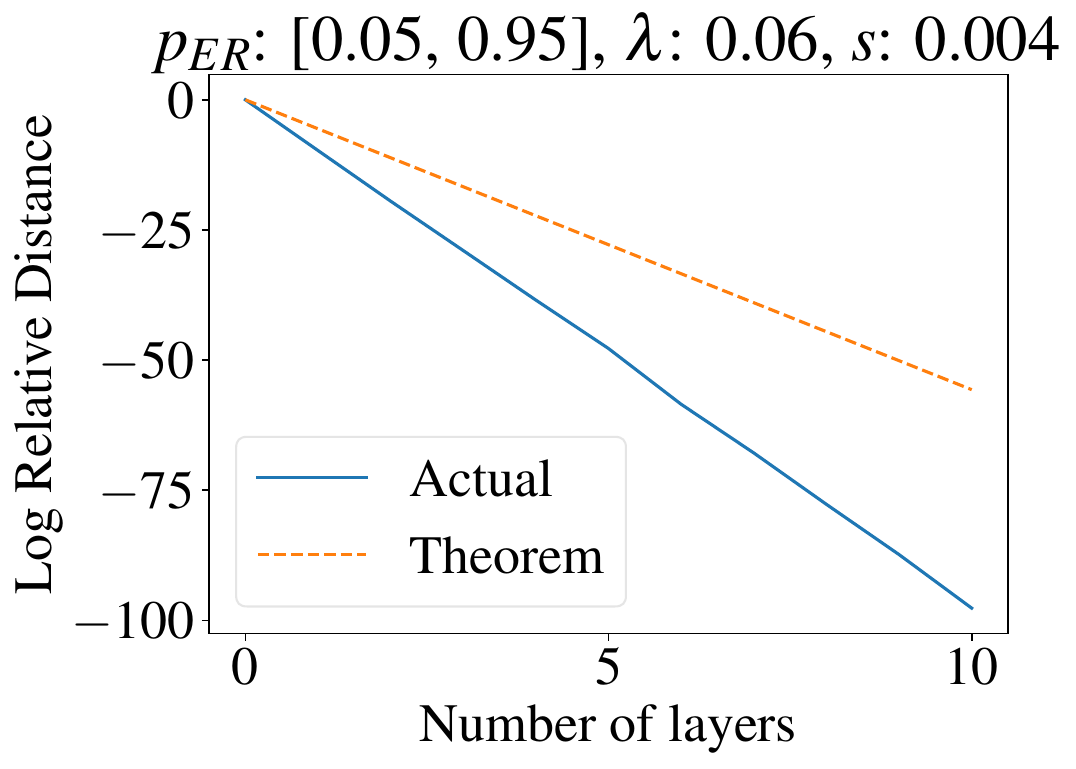}
    \end{subfigure}
    %\hfill
    \hspace{12pt}
    \begin{subfigure}[t]{0.25\textwidth}
        \centering
        \includegraphics[height=1.17in]{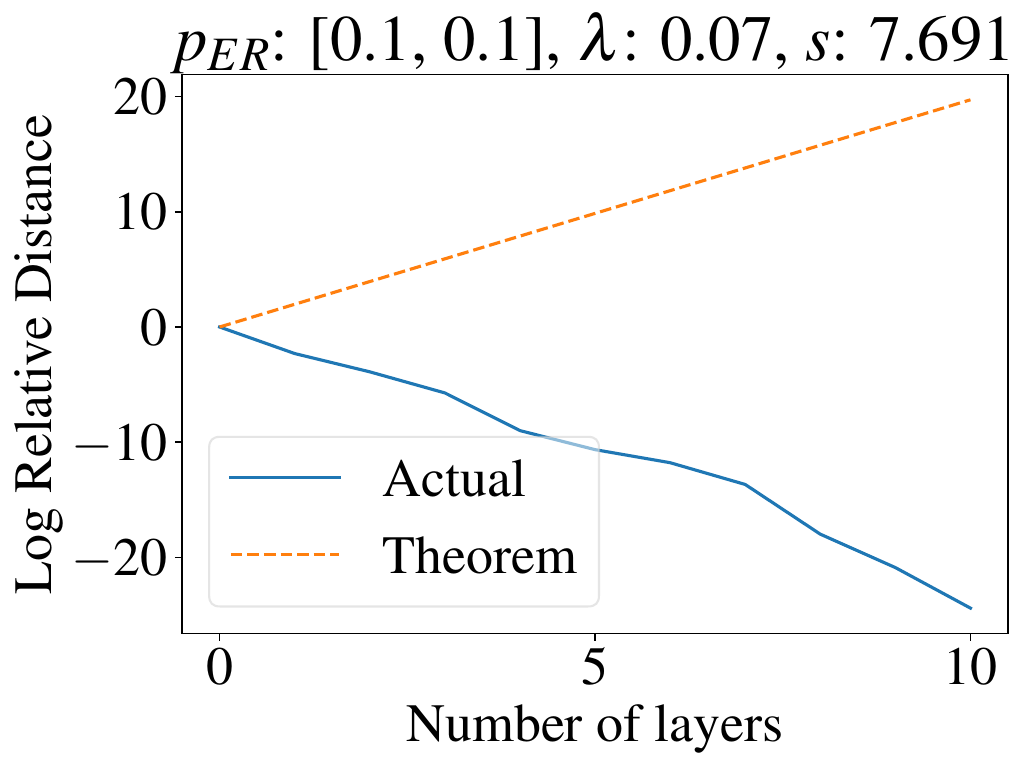}
    \end{subfigure}
    \caption{Over-smoothing analysis using Theorem \ref{exp_ener_prod}. \textbf{left:} $\ln{s}-\frac{2}{P}\tilde{t}\tilde{\lambda}<0$, \textbf{right:} $\ln{s}-\frac{2}{P}\tilde{t}\tilde{\lambda}>0$.}
    \label{fig:Oversmoothing}
    \vspace{-7pt}
\end{wrapfigure}

For experimentally validating Theorem \ref{exp_ener_prod}, we first generate a initial graph signal $\mathbf{X}_{0}\in\mathbb{R}^{N\times F_0}$ with $F_0=12$ on a $150$-node Cartesian product graph with $N_1=10$, $N_2=15$.
Therefore, we consider a $10$-layer generation process based on \eqref{imp_eq} with only one-layer MLP for each layer and $\{F_l=12-l\}_{l=1}^{10}$, where the weight matrices are generated from normal distribution.
We select ReLU as the activation function and $\{t_l=1\}_{l=1}^{10}$. 
The over-smoothing phenomenon strongly depends on the smallest spectral gap of the factor graphs as stated in Theorem \ref{exp_ener_prod}.
We consider two scenarios of $\ln{s}-\frac{2}{P}\tilde{t}\tilde{\lambda}<0$ and $\ln{s}-\frac{2}{P}\tilde{t}\tilde{\lambda}>0$.
The first scenario relates to a well-connected product graph obtained from two factor ER graphs with $p^{(1)}_{\text{\tiny ER}}=0.05$ and $p^{(2)}_{\text{\tiny ER}}=0.95$ and, while in the other, $p^{(1)}_{\text{\tiny ER}}=p^{(2)}_{\text{\tiny ER}}=0.1$.
So, we scale the MLP weight matrices by $100$ and $2.5$ to limit their maximum singular values. 
These settings result in $\lambda=0.06$ and $s=0.004\pm 10^{-5}$ for the first scenario; and $\lambda=0.07$ and $s=7.691\pm 10^{-6}$ for the other.

We depict the outputs $y(l)=\log{\frac{E(\mathbf{X}_{l})}{E(\mathbf{X}_{0})}}$ and the theoretical upper bound $y(l)=l(\ln{s}_{l}-\frac{2}{P}\tilde{t}_{l}\tilde{\lambda})$ across the number layers $l$ in Figure \ref{fig:Oversmoothing}.
We observe that in the \textbf{left} plot where $\{\ln{s}_{l}-\frac{2}{P}\tilde{t}_{l}\tilde{\lambda}<0\}_{l=1}^{10}$, the theoretical upper bound approximates well the actual outputs during over-smoothing as stated in Theorem \ref{exp_ener_prod}.
On the opposite, on the \textbf{right} plot, in which $\{\ln{s}_{l}-\frac{2}{P}\tilde{t}_{l}\tilde{\lambda}>0\}_{l=1}^{10}$, the theoretical upper bound is loose.
We leave for future work finding a tighter upper bound.

\subsection{Experiments on Real-world Data}
\label{Real_exp}

We evaluate and compare \method~on real-world traffic and weather spatiotemporal forecasting tasks, where we follow the settings in \cite{sabbaqi2023graph} for pre-processing and training-validation-testing data partitions.
All hyperparameters were optimized on the validation set with the details in Section \ref{App_Hyper}.
We use Adam as optimizer.
%The used optimizer was Adam.

% We evaluated and compared the proposed \method~module on real-world traffic and weather spatiotemporal forecasting tasks, in which we followed the settings in \cite{sabbaqi2023graph} for preprocessing and train-validation-test data segmenting.
% All the hyperparameters were optimized on the validation set. 
% The descriptions of the compared methods are provided in the Appendix, \ie Section \ref{Descps}.

{\textbf{Traffic forecasting.}} 
Here, we work on two well-known datasets for spatiotemporal forecasting: \textsl{MetrLA} \cite{li2018diffusion} and \textsl{PemsBay} \cite{shuman2013emerging}.
\textsl{MetrLA} consists of recorded traffic data for four months on $207$ highways in Los Angeles County with $5$-minute resolutions \cite{li2018diffusion}. 
\textsl{PemsBay} contains $5$-minute traffic load data for six months associated with $325$ stations in the Bay Area by the same setting with \cite{shuman2013emerging}.
The spatial and temporal graphs are created by applying Gaussian kernels on the node distances \cite{Cini_Torch_Spatiotemporal_2022} and simple path graphs, respectively.
The task for these datasets is, by having the last $30$ minutes ($T=6$ steps) of traffic recordings, we need to predict the following $15$-$30$-$60$ minutes, \ie $H=3$, $H=6$, and $H=12$ steps ahead horizons.
Our model first uses a linear encoder network, followed by three \method~blocks with $3$-layer MLPs for each, where $\{F^{\text{MLP}}_l=64\}_{l=1}^3$.
Therefore, the output node embeddings are concatenated with the initial spatiotemporal data.
Finally, we use a linear decoding layer to transform the learned representations to the number of needed horizons.
We exploit residual connections within each \method~block to stabilize the training process \cite{behmanesh2023tide}.
We use the mean absolute error (MAE) as the loss function and consider a maximum of $300$ epochs.
The best model on the validation set is applied to the unseen test set.
We compare \method~against previous methods using MAE, mean absolute percentage error (MAPE), and root mean squared error (RMSE).

Table \ref{Table_SimpleBaseline} presents the forecasting results on the \textsl{MetrLA} and \textsl{PemsBay} datasets.
We observe that, given the abundant training data in these datasets, the NN-based methods outperform the classic GSP-based methods like ARIMA \cite{li2018diffusion}, G-VARMA \cite{isufi2019forecasting}, and GP-VAR \cite{isufi2019forecasting}.
Similarly, the GNN-based models are superior compared to non-graph algorithms like FC-LSTM.
More importantly, \method~outperforms state-of-the-art baselines in most metrics, especially in larger numbers of horizons ($H>3$).

\begin{table*}[t]
\caption{Traffic forecasting comparison between \method~and previous methods.}
\label{Table_SimpleBaseline}
% \vskip 0.15in
\begin{center}
%\begin{small}
\resizebox{.9\textwidth}{!}{
\begin{tabular}{rccccccccr}
\toprule
\multicolumn{10}{c}{\textsl{MetrLA}}\\ 
\midrule
\multirow{2}{*}{Method}&
\multicolumn{3}{c}{$H=3$}&
\multicolumn{3}{c}{$H=6$}&
\multicolumn{3}{c}{$H=12$}\\
\cmidrule(l){2-4}
\cmidrule(l){5-7}
\cmidrule(l){8-10}

& MAE& MAPE & RMSE &

MAE& MAPE & RMSE &

MAE& MAPE & RMSE\\ %\cmidrule(l){2-6} \cmidrule(l){7-11}

\cmidrule(l){1-1}
\cmidrule(l){2-4}
\cmidrule(l){5-7}
\cmidrule(l){8-10}

ARIMA \cite{li2018diffusion}&

3.99&9.60\%&8.21&

5.15 & 12.70\% & 10.45 &

6.90 & 17.40\% & 13.23 \\

G-VARMA~\cite{isufi2019forecasting}                 
& 3.60 & 9.62\% & 6.89
& 4.05 & 11.22\% & 7.84
& 5.12 & 14.00\% & 9.58  \\

GP-VAR~\cite{isufi2019forecasting}                  
& 3.56 & 9.55\% & 6.54 
& 3.98 & 11.02\% & 7.56
& 4.87 & 13.34\% & 9.19  \\

FC-LSTM~\cite{li2018diffusion}   
& 3.44 & 9.60\% & 6.30 
& 3.77 & 10.90\% & 7.23 
& 4.37 & 13.20\% & 8.69 \\

Graph Wavenet~\cite{wu2019graph}           
& \underline{2.69} & \ul{6.90\%} & \underline{5.15} 

& \ul{3.07} & \ul{8.37\%} & \ul{6.22} 

& \underline{3.53} & 10.01\% & \ul{7.37} \\

GMAN~\cite{zheng2020gman}                   
& 2.94 & 7.51\% & 5.89
& 3.22 & 8.92\% & 6.61 
& 3.68 & 10.25\% & 7.49  \\

STGCN~\cite{yu2018spatio}                   
& 2.88 & 7.62\% & 5.74
& 3.47 & 9.57\% & 7.24 
& 4.59 & 12.70\% & 9.40  \\

GGRNN~\cite{ruiz2020gated}                  
& 2.73 & 7.12\% & 5.44 
& 3.31 & 8.97\% & 6.63 
& 3.88 & 10.59\% & 8.14          \\

GRUGCN \cite{gao2022equivalence} 
& \underline{2.69} & \textbf{6.61\%} & \underline{5.15}
& {3.05} & \underline{7.96\%} & \underline{6.04} 
& \ul{3.62} & \underline{9.92\%} & \underline{7.33} \\

SGP \cite{cini2023scalable} 
& 3.06 & 7.31\% & 5.49
& {3.43} & {8.54\%} & {6.47} 
& 4.03 & 10.53\% & 7.81 \\

GTCNN \cite{sabbaqi2023graph,isufi2021graph} 
& \textbf{2.68} & 6.85\% & \ul{5.17}
& \underline{3.02} & {8.30\%} & {6.20} 
& \ul{3.55} & \ul{10.21\%} & 7.35 \\

\midrule

\textbf{\method} (Ours)&

2.70&\underline{6.74}\%&\textbf{5.14}&

\textbf{2.98}&\textbf{7.78}\%&\textbf{5.90}&

\textbf{3.44}&\textbf{9.28}\%&\textbf{6.85}\\

\midrule
\midrule

\multicolumn{10}{c}{\textsl{PemsBay}}\\ 
\midrule
\multirow{2}{*}{Method}&
\multicolumn{3}{c}{$H=3$}&
\multicolumn{3}{c}{$H=6$}&
\multicolumn{3}{c}{$H=12$}\\
\cmidrule(l){2-4}
\cmidrule(l){5-7}
\cmidrule(l){8-10}

& MAE& MAPE & RMSE &

MAE& MAPE & RMSE &

MAE& MAPE & RMSE\\ %\cmidrule(l){2-6} \cmidrule(l){7-11}

\cmidrule(l){1-1}
\cmidrule(l){2-4}
\cmidrule(l){5-7}
\cmidrule(l){8-10}

ARIMA~\cite{li2018diffusion} & 
1.62 & 3.50\% & 3.30 &  
2.33 & 5.40\% & 4.76 &  
3.38 & 8.30\% & 6.50            \\

G-VARMA~\cite{isufi2019forecasting} &     
1.88 & 4.28\% & 3.96 & 
2.45 & 5.42\% & 4.70 & 
3.01 & 7.10\% & 5.83            \\

GP-VAR~\cite{isufi2019forecasting}&
1.74 & 3.45\% & 3.22 & 
2.16 & 5.15\% & 4.41 & 
2.48 & 6.18\% & 5.04            \\

FC-LSTM~\cite{li2018diffusion} &

2.05 & 4.80\% & 4.19 & 
2.20 & 5.20\% & 4.55 & 
2.37 & 5.70\% & 4.74    \\

Graph Wavenet~\cite{wu2019graph} &
\ul{1.30} & \ul{2.73\%} & \ul{2.74} & 
1.63 & 3.67\% & \ul{3.70} & 
1.95 & 4.63\% & 4.52  \\

GMAN~\cite{zheng2020gman} &        
1.34 & 2.81\% & 2.82 & 
{1.62} & {3.63}\% & 3.72 & 
\underline{1.86} & \underline{4.31}\% & \underline{4.32}  \\

STGCN~\cite{yu2018spatio} &        
1.36 & 2.90\% & 2.96 & 
1.81 & 4.17\% & 4.27 & 
2.49 & 5.79\% & 5.69            \\

GGRNN~\cite{ruiz2020gated} &
1.33 & 2.83\% & 2.81 & 
1.68 & \ul{3.79\%} & 3.94 & 
2.34 & 5.21\% & 5.14            \\

GRUGCN \cite{gao2022equivalence} 
& \textbf{1.21} & \textbf{2.49\%} & \textbf{2.52}
& \underline{1.54} & \underline{3.32\%} & \underline{3.46} 
& 2.01 & 4.72\% & 4.65 \\

SGP \cite{cini2023scalable} 
& 1.33 & 2.71\% & 2.84
& 1.70 & {3.61\%} & 3.83 
& 2.24 & 5.08\% & 5.19 \\

GTCNN \cite{sabbaqi2023graph,isufi2021graph} &
{1.25} & {2.61\%} & {2.66} & 
\ul{1.65} & 3.82\% & 3.68 & 
\ul{2.27} & \ul{5.11\%} & \ul{4.99} \\

\midrule

\textbf{\method} (Ours)&

\textbf{1.21}&\underline{2.51}\%&\underline{2.61}&

\textbf{1.48}&\textbf{3.23}\%&\textbf{3.28}&

\textbf{1.78}&\textbf{4.08}\%&\textbf{3.99}\\

\bottomrule

\end{tabular}
}
%\end{small}
\end{center}
%\vskip -0.1in
\end{table*}

\begin{table*}[t]
\caption{Weather forecasting comparison (by rNMSE) between \method~and previous methods.}
\label{Table_weather}
% \vskip 0.15in
\begin{center}
%\begin{small}
\resizebox{0.9\textwidth}{!}{
\begin{tabular}{rcccccccccccccccccccccccccccr}
\toprule
% \multicolumn{10}{c}{\textsl{Moelene}} & \multicolumn{10}{c}{\textsl{NOAA}}\\ 
\multirow{2}{*}{Method}&
\multicolumn{5}{c}{\textsl{Molene}}&
\multicolumn{5}{c}{\textsl{NOAA}}\\
\cmidrule(l){2-6}
\cmidrule(l){7-11}

&$H=1$ & $H=2$ & $H=3$ & $H=4$ & $H=5$
&$H=1$ & $H=2$ & $H=3$ & $H=4$ & $H=5$\\ \midrule

GRUGCN \cite{gao2022equivalence}  & 0.49 & 0.56 & 0.63 & 0.70 & 0.75 &

0.15 & \underline{0.18} & \underline{0.24} & \underline{0.30} & \underline{0.37}
\\

GGRNN \cite{ruiz2020gated} & 0.29 & 0.42 & 0.54 & 0.65 & 0.75&

0.19&0.20&0.27&0.38&0.48
\\

SGP \cite{cini2023scalable}  & \underline{0.24} & \underline{0.37} & \underline{0.49} & \underline{0.59} & 0.69&

0.39& 0.41& 0.43& 0.46& 0.49\\

GTCNN \cite{sabbaqi2023graph,isufi2021graph} & 0.39 & 0.45 & 0.52 & 0.60 & \underline{0.68}&

\underline{0.17} & 0.19& 0.25& 0.31& \underline{0.37}\\

\midrule
\textbf{\method}~(Ours) & \textbf{0.23} & \textbf{0.35} & \textbf{0.47} & \textbf{0.58} & \textbf{0.67}&
\textbf{0.04} & \textbf{0.12} & \textbf{0.20} & \textbf{0.29} & \textbf{0.36}\\
\bottomrule
\end{tabular}
}
%\end{small}
\end{center}
%\vskip -0.1in
\end{table*}
% {\tiny$\pm$.04}
\begin{table*}[t]
\caption{Ablation study on comparison between the proposed \method~and typically ST pipelines.}
\label{TTS_STT}
% \vskip 0.15in
\begin{center}
%\begin{small}
\resizebox{.9\textwidth}{!}{
\begin{tabular}{rccccccccr}
\toprule
\multicolumn{10}{c}{\textsl{MetrLA}}\\ 
\midrule
\multirow{2}{*}{Method}&
\multicolumn{3}{c}{$H=3$}&
\multicolumn{3}{c}{$H=6$}&
\multicolumn{3}{c}{$H=12$}\\
\cmidrule(l){2-4}
\cmidrule(l){5-7}
\cmidrule(l){8-10}

& MAE& MAPE & RMSE &

MAE& MAPE & RMSE &

MAE& MAPE & RMSE\\ %\cmidrule(l){2-6} \cmidrule(l){7-11}

\cmidrule(l){1-1}
\cmidrule(l){2-4}
\cmidrule(l){5-7}
\cmidrule(l){8-10}

TTS  
& 2.73 & 6.78\% & 5.21
& 3.10 & 8.02\% & 6.15 
& 3.67 & 10.05\% & 7.46 \\

STT
& \underline{2.72} & 6.74\% & \underline{5.19}
& 3.07 & 7.94\% & \underline{6.08} 
& 3.65 & 10.97\% & \underline{7.37} \\

CTTS  
& \textbf{2.70} & \underline{6.69}\% & 5.20
& \underline{3.05} & 7.93\% & 6.18 
& \underline{3.61} & \underline{9.75}\% & 7.51 \\

CSTT
& \textbf{2.70} & \textbf{6.66}\% & 5.22
& 3.06 & \underline{7.92}\% & 6.19 
& 3.63 & 9.92\% & 7.48 \\

\midrule

\textbf{\method} (Ours)&

\textbf{2.70}&6.74\%&\textbf{5.14}&

\textbf{2.98}&\textbf{7.78}\%&\textbf{5.90}&

\textbf{3.44}&\textbf{9.28}\%&\textbf{6.85}\\

\bottomrule

\end{tabular}
}
%\end{small}
\end{center}
%\vskip -0.1in
\end{table*}

\begin{table*}[t]
\caption{Training time (per epoch) and forecasting results vs. number of selected eig-eiv on \textsl{MetrLA}.}
\label{Table_Eig}
% \vskip 0.15in
\begin{center}
%\begin{small}
\resizebox{.9\textwidth}{!}{
\begin{tabular}{rcccccccccccccccr}
\toprule
% \multirow{2}{*}{Method} & \multicolumn{8}{c}{\textsl{AirQuality}} \\   

&$k=2$&$k=5$&$k=10$ & $k=50$ & $k=100$ & $k=150$ & $k=200$\\ \midrule

MAE & 2.80&2.74&2.72 & 2.71 & 2.71 & 2.70 & 2.70\\

Training time (seconds) &2.52& 2.91&2.96 & 3.86 & 5.28 & 6.36 & 7.78\\ 
\bottomrule
\end{tabular}
}
%\end{small}
\end{center}
%\vskip -0.1in
\end{table*}

{\textbf{Weather forecasting.}} 
We also test \method~for weather forecasting in the \textsl{Molene} \cite{girault2015stationary} and \textsl{NOAA} \cite{arguez2012noaa} datasets.
The \textsl{Molene} dataset provides recorded temperature measurements for $744$ hourly intervals over $32$ measurement stations in a region in France.
For the \textsl{NOAA}, which is associated with regions in the U.S., the temperature measurements were recorded for $8,579$ hourly intervals over $109$ stations.
The pre-processing and data curation settings were similar to prior works on these datasets \cite{sabbaqi2023graph,isufi2019forecasting}.
The task here is by having the last $10$ hours of measurements, we should predict the next $\{1:5\}$ hours.
For the \textsl{Molene} dataset, the embedding dimension and the initial linear layer of the \method~blocks are $16$, while no MLP modules have been used.
Regarding \textsl{NOAA}, we set the dimension and the initial linear layer of the \method~blocks as $16$, and use $3$-layer MLPs for channel mixing in each block.
The \method~blocks have the last activation as a Leaky ReLU.
We use MSE as the loss function and consider root-normalized MSE (rNMSE) as the evaluation metric.

% The task of forecasting the next weather quality measures based on the given windowed data is considered next.
% In this way, we select \textsl{Molene}\footnote{Data is publicly available at \hyperlink{https://donneespubliques.meteofrance.fr/donnees_libres/Hackathon/RADOMEH.tar.gz}{https://donneespubliques.meteofrance.fr/donnees\_libres/Hackathon/RADOMEH.tar.gz}} and \textsl{NOAA} \cite{arguez2012noaa} datasets. The \textsl{Molene} benchmark provides recorded temperature measurements for 744 hourly intervals over 32 measurement stations in a region in France. For the \textsl{NOAA}, which is associated with regions in the U.S., the temperature measurements were recorded for 8,579 hourly intervals over 109 stations. The preprocessing and data curation settings were similar to prior works on these datasets \cite{sabbaqi2023graph,isufi2019forecasting}. Given the last 10 hours of measurements, the task is to predict the next $\{1:5\}$ hours ahead. For the \textsl{Molene} dataset, the embedding dimension and the initial linear layer of the \method~blocks are 16, while no MLP modules have been used in the \method~blocks. Regarding \textsl{NOAA}, we set the dimension and the initial linear layer of the \method~blocks as 16 and also used 3-layer MLPs for channel mixing in each block. The \method~blocks had the last activation function of Leaky ReLU. We used MSE as the loss function and considered root normalized MSE (rNMSE) as the evaluation metric.

Table \ref{Table_weather} illustrates the results of the weather forecasting task.
We observe that \method~show superior forecasting performance compared to previous methods in all forecasting horizons.
We note that the SGP model \cite{cini2023scalable}, which is a scalable architecture, is the second-best performing method in \textsl{Molene}, possibly due to the small scale of this dataset.
On the contrary, the \textsl{NOAA} dataset is larger, so models with higher parameter budgets perform better.

% Table \ref{Table_weather} illustrates the comparison results between the proposed network with \method~modules and the other SOTA. 
% As can be observed the proposed network has superior forecasting performance, especially in the shorter horizons which is more crucial and appealing for real-time applications. 
% Note that due to the smaller number of training samples in the \textsl{Molene} dataset, the SGP network, which is a scalable architecture, is placed as the second-best baseline compared to the other SOTA. 
% However, this is rather converse with \textsl{NOAA}, in which the abundance of the training samples is observed.

\subsection{Ablation Study and Hyperparameter Sensitivity Analysis}
\label{Abl_exp}

\textbf{Ablation study.} The ablation study concerns the \method~modules responsible for \textit{jointly} learning spatiotemporal couplings.
To this end, we use two well-known architectures: TTS and STT.
Therefore, we evaluate four possible configurations: i) TTS, where we first apply an RNN-based network (here, GRU), and then the outputs are fed into a regular GNN; ii) STT, which is exactly the opposite of the TTS; iii) Continuous TTS (CTTS), where we replace the GNN in TTS with a CGNN; and iv) Continous STT (CSTT), which is the opposite of CTTS.
We report the results of this ablation study on the \textsl{MetrLA} dataset in Table \ref{TTS_STT}.
We observe that \method~has superior results probably due to learning \textit{joint} (and not sequential) spatiotemporal couplings by modeling these dependencies using product graphs with learnable receptive fields.
We also observe that CTTS and CSTT outperform their discrete counterparts TTS and STT, probably due to the learning of adaptive graph neighborhoods instead of relying on $1$-hop connections in regular GNNs.

% Firstly, we replace the \method~modules, responsible for \textit{jointly} learning spatiotemporal couplings, with two trivial yet well-used architectures: 1) TTS, where firstly an RNN-based network (here, GRU) is applied, and then the outputs are fed into a typical GCN, 2) STT, which is exactly converse with the TTS, 3) Continuous TTS (CTTS) and 4) CSTT, which are continuous counterparts of TTS and STT by replacing GCN with CGNN. We report the comparison results on the \textsl{MetrLA} dataset in Table \ref{TTS_STT}. As can be observed from this table, the choice of using \method~instead of these baselines has superior results probably due to learning \textit{joint} (and not sequential) spatiotemporal couplings by modeling these dependencies using product graphs with learnable receptive fields. Besides, the CTTS and CSTT have overall better performance than TTS and STT, probably due to the possibility of learning adaptive graph neighborhoods, instead of relying on 1-hop connections in typical GCNs.

% \begin{wraptable}{r}{0.3\textwidth}
%     \vspace{-10pt}
%     \caption{Training time (in seconds) and forecasting performance versus the number of selected eig-eiv pairs on the MetrLA dataset.}
%     \label{tab:ablation_study}
%     \centering
% \begin{tabular}{rcccccccr}
% \toprule
% % \multirow{2}{*}{Method} & \multicolumn{8}{c}{\textsl{AirQuality}} \\   

% &$10$ & $50$ & $100$ & $150$ & $k=200$\\ \midrule

% MAE & 2.72 & 2.69 & 2.69 & 2.68 & 2.68\\

% Time & 2.96 & 3.86 & 5.28 & 6.36 & 7.78\\ \midrule

% \bottomrule
% \end{tabular}
% \end{wraptable} 

\textbf{Hyperparameter sensitivity analysis.} The sensitivity analysis is related to the number of selected eigenvector-eigenvalue (eig-eiv) pairs of the factor Laplacians.
We select eigenvector-eigenvalue pairs in $k\in\{2,10,50,100,150,200\}$ out of $207$ components in the \textsl{MetrLA} dataset.
Table \ref{Table_Eig} presents the forecasting results in MAE along with the training time (per epoch) of this ablation study.
We observe that selecting $50$ eigenvector-eigenvalue pairs is enough to get good forecasting performance while having a fast training time.
This experiment illustrates the advantages of \method~for accelerating training and keeping the performance as consistent as possible.

\section{Conclusion and Limitations}
\label{Conc}

In this paper, we proposed \method, a novel model for jointly learning multidomain couplings from product graph signals based on tensorial PDEs on graphs (TPDEGs).
We modeled these representations as separable continuous heat graph kernels as solutions to the TPDEG.
Therefore, we showed that the underlying graph is actually the Cartesian product of the domain-specific factor graphs.
We rigorously studied the stability and over-smoothing aspects of \method~theoretically and experimentally.
Finally, as a proof of concept, we use \method~to tackle the traffic and weather spatiotemporal forecasting tasks on public datasets, illustrating the superior performance of our model compared to state-of-the-art methods.

% In this paper, we proposed \method, a novel architecture for learning joint multidomain couplings from product graph signals based on tensorial PDEs on graphs (TPDEG). In this way, we modeled these representations as separable continuous heat graph kernels as solutions to the TPDEG. In this case, we showed that the underlying graph is actually the Cartesian product of domain-specific factor graphs. We rigorously studied the stability and over-smoothing aspects theoretically and experimentally. Finally, as a proof of concept, the traffic and weather spatiotemporal forecasting tasks on public datasets were used to illustrate the superior/competitive performance over the SOTA.  

An interesting future direction for our framework could involve adapting it to handle other types of graph products, such as Kronecker and Strong graph products \cite{sandryhaila2014big,hammack2011handbook}.
Future efforts will also focus on finding tighter and more general upper bounds in the stability and over-smoothing analyses. 
For example, we will explore the possible relationships between the size of the factor graphs and these properties, either in a general form or for specific well-studied structures, such as ER graphs.
Additionally, we plan to investigate more challenging real-world applications of the proposed framework, especially in scenarios involving more than two factor graphs.

%An interesting future direction to our framework could be adapting it for also handling the other kinds of graph products like Kroenecker and Strong graph products \cite{hammack2011handbook}. Other future efforts will be dedicated to finding tighter and more generalized upper bounds in stability and over-smoothing analysis. For instance, we will seek the possible relationship between the size of the factor graphs and these aspects, in a general form or at least for some well-studied structures, \eg ER graphs. Besides, we explore more adverse real-world applications of the proposed framework, especially in the case of having more than two graphs.

\section*{Acknowledgments}

This work was supported by the center Hi! PARIS and ANR (French National Research Agency) under the JCJC project GraphIA (ANR-20-CE23-0009-01).

\bibliographystyle{ieeetr}
\bibliography{main}

\newpage
\appendix
\section*{Appendix}

The appendix contains the proof of the claimed theoretical statements (Section \ref{app:proofs}), descriptions of the compared methods (Section \ref{Descps}), the standard deviation of the results (Section \ref{App_std}), homophily-heterophily trade-off (Section \ref{App_Hom_Het}), the case of inexact Cartesian product graphs (Section \ref{App_Inexact}), how to choose the appropriate number of eigenvalue-eigenvector pairs (Section \ref{App_Best_k}), the effect of graph receptive fields on the over-smoothing (Section \ref{App_overs}) and hyperparameter details (Section \ref{App_Hyper}), respectively.

\section{Proofs}
\label{app:proofs}
\subsection{Proof of Theorem \ref{MDPDE_thm}}
\label{proof1}
\begin{proof}
For obtaining the derivative of \(\frac{\partial \underline{\mathbf{U}}_t}{\partial t}\) from \(e^{-t\mathbf{L}_p}\) (for \(p=1,\hdots,P\)) in \eqref{MDPDE_sol}, we first express $\frac{\partial \underline{\mathbf{U}}_t}{\partial t}$ from the claimed solution in \eqref{MDPDE_sol} as follows:
\begin{equation}
\label{Ut_rond}
\begin{split}
\frac{\partial \underline{\mathbf{U}}_t}{\partial t}=\sum_{p=1}^P{\underline{\mathbf{U}}_{t,p}},
\end{split}
\end{equation}
where
\begin{equation}
\begin{split}
&\underline{\mathbf{U}}_{t,p}\vcentcolon=\underline{\mathbf{U}}_0\times_1 e^{-t\mathbf{L}_1}\times_2\hdots\times_{p-1}e^{-t\mathbf{L}_{p-1}}\times_{p}(-\mathbf{L}_p e^{-t\mathbf{L}_p})\times_{p+1}e^{-t\mathbf{L}_{p+1}}\times_{p+2}\hdots\times_P e^{-t\mathbf{L}_P}.
\end{split}
\end{equation}
Now, by performing mode-\(p\) unfolding operation on \(\underline{\mathbf{U}}_{t,p}\), we have:
\begin{equation}
\label{U0r}
{\underline{\mathbf{U}}_{t,p}}_{(p)}=-\mathbf{L}_p e^{-t\mathbf{L}_p}{\underline{\mathbf{U}}_0}_{(p)}\left(e^{-t\mathbf{L}_P}\otimes e^{-t\mathbf{L}_{P-1}}\otimes\hdots e^{-t\mathbf{L}_{p+1}}\otimes e^{-t\mathbf{L}_{p-1}}\otimes\hdots\otimes e^{-t\mathbf{L}_{1}}\right)^\top.
\end{equation}
On the other hand:
\begin{equation}
\label{Utr}
{\underline{\mathbf{U}}_t}_{(p)}=e^{-t\mathbf{L}_p}{\underline{\mathbf{U}}_0}_{(p)}\left(e^{-t\mathbf{L}_P}\otimes\hdots e^{-t\mathbf{L}_{p+1}}\otimes e^{-t\mathbf{L}_{p-1}}\otimes\hdots\otimes e^{-t\mathbf{L}_{1}}\right)^\top.
\end{equation}
Therefore, by combining \eqref{U0r} and \eqref{Utr}, one can write:
\begin{equation}
{\underline{\mathbf{U}}_{t,p}}_{(p)}=-\mathbf{L}_p {\underline{\mathbf{U}}_t}_{(p)},
\end{equation}
which is the mode-\(p\) unfolding of the following equation:
\begin{equation}
\label{Urrr}
\underline{\mathbf{U}}_{t,p}=-\underline{\mathbf{U}}_t\times_p \mathbf{L}_p,
\end{equation}
and the proof is completed.
\end{proof}

% \subsection{Proof of Lemma \ref{The1}}
% \label{proof2}

\subsection{Proof of Theorem \ref{MDPDE_theorem}}
\label{proof3}
\begin{proof}
First, we prove the following lemma about the separability the heat kernels define on product graphs w.r.t. the factor graphs:
\begin{lemma}
\label{The1}
The exponential graph kernel $e^{t\mathbf{L}_\diamond}$ on a Cartesian product graph $\mathbf{L}_\diamond\vcentcolon=\oplus_{p=1}^{P}{\mathbf{L}_p}$ can be Kronecker-factored into its factor-based graph kernels as $e^{t\mathbf{L}_\diamond}=\otimes_{p=1}^{P}{e^{t\mathbf{L}_p}}$. 
\end{lemma}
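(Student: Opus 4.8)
The plan is to prove Lemma \ref{The1} by reducing the matrix exponential identity to the well-known fact that $e^{A \oplus B} = e^A \otimes e^B$ whenever $A$ and $B$ are square matrices, i.e.\ the exponential of a Kronecker sum factors as a Kronecker product of exponentials. This is the matrix analogue of $e^{a+b}=e^a e^b$ and follows because the two summands $\mathbf{L}_1\otimes\mathbf{I}$ and $\mathbf{I}\otimes\mathbf{L}_2$ in a Kronecker sum commute: $(\mathbf{L}_1\otimes\mathbf{I}_{N_2})(\mathbf{I}_{N_1}\otimes\mathbf{L}_2) = \mathbf{L}_1\otimes\mathbf{L}_2 = (\mathbf{I}_{N_1}\otimes\mathbf{L}_2)(\mathbf{L}_1\otimes\mathbf{I}_{N_2})$, so the exponential of their sum splits, and then $e^{\mathbf{L}_1\otimes\mathbf{I}_{N_2}} = e^{\mathbf{L}_1}\otimes\mathbf{I}_{N_2}$ and $e^{\mathbf{I}_{N_1}\otimes\mathbf{L}_2} = \mathbf{I}_{N_1}\otimes e^{\mathbf{L}_2}$ (each from the power series, using $(\mathbf{L}_1\otimes\mathbf{I})^k = \mathbf{L}_1^k\otimes\mathbf{I}$), whose product is $e^{\mathbf{L}_1}\otimes e^{\mathbf{L}_2}$ by the mixed-product property of the Kronecker product.

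First I would record the two-factor base case: using \eqref{eqn:cartesian_product}, $t\mathbf{L}_\diamond = (t\mathbf{L}_1)\otimes\mathbf{I}_{N_2} + \mathbf{I}_{N_1}\otimes(t\mathbf{L}_2)$, and by the commuting argument above conclude $e^{t\mathbf{L}_\diamond} = e^{t\mathbf{L}_1}\otimes e^{t\mathbf{L}_2}$. Then I would induct on the number of factors $P$: writing $\oplus_{p=1}^{P}\mathbf{L}_p = \left(\oplus_{p=1}^{P-1}\mathbf{L}_p\right)\oplus \mathbf{L}_P$ (associativity of the Kronecker sum), apply the base case with the first argument being the $(P-1)$-fold product Laplacian and the second being $\mathbf{L}_P$, then invoke the inductive hypothesis $e^{t(\oplus_{p=1}^{P-1}\mathbf{L}_p)} = \otimes_{p=1}^{P-1}e^{t\mathbf{L}_p}$ to obtain $e^{t\mathbf{L}_\diamond} = \left(\otimes_{p=1}^{P-1}e^{t\mathbf{L}_p}\right)\otimes e^{t\mathbf{L}_P} = \otimes_{p=1}^{P}e^{t\mathbf{L}_p}$, where associativity of $\otimes$ closes the induction. (The sign of $t$ is immaterial — the same argument gives the $e^{-t\mathbf{L}_\diamond}$ form used later in \eqref{imp_eq}.)

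I do not anticipate a genuine obstacle here; the only things to be careful about are (i) checking the commutativity of the two summands of the Kronecker sum explicitly via the mixed-product rule, since the whole argument hinges on it, and (ii) being consistent about the ordering convention in $\downarrow\otimes$ versus $\otimes$ and the corresponding $\downarrow\oplus$ versus $\oplus$ when passing to the multi-factor case, so that the eigenvector/eigenvalue bookkeeping in \eqref{lamb_V} lines up — but that is notational rather than mathematical. Once Lemma \ref{The1} is in hand, Proposition \ref{MDPDE_theorem} follows by matricizing \eqref{tensor_formm2} in its $(P+1)$-th mode: the mode-$p$ multiplications by $e^{-t_l\mathbf{L}_p}$ become, after unfolding, left multiplication by the Kronecker product $\downarrow\otimes_{p=1}^{P}e^{-t_l\mathbf{L}_p} = e^{-t_l(\downarrow\oplus_{p=1}^{P}\mathbf{L}_p)} = e^{-t_l\mathbf{L}_\diamond}$, while the mode-$(P+1)$ multiplication by $\mathbf{W}_l^\top$ becomes right multiplication by $\mathbf{W}_l$ on the transposed unfolding, yielding \eqref{imp_eq}.
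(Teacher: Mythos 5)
Your proof is correct, but it takes a genuinely different route from the paper. The paper proves the lemma spectrally: it writes $\mathbf{L}_\diamond = \left(\otimes_{p}\mathbf{V}_p\right)\left(\oplus_{p}\boldsymbol{\Lambda}_p\right)\left(\otimes_{p}\mathbf{V}_p\right)^\top$, uses the pairwise-summation property of the factor eigenvalues to get $e^{t\left(\oplus_{p}\boldsymbol{\Lambda}_p\right)} = \otimes_{p}e^{t\boldsymbol{\Lambda}_p}$ on the diagonal, and then reassembles each factor as $\mathbf{V}_p e^{t\boldsymbol{\Lambda}_p}\mathbf{V}_p^\top = e^{t\mathbf{L}_p}$; no induction is needed because the whole $P$-factor EVD is handled at once. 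You instead argue algebraically: the two summands $\mathbf{L}_1\otimes\mathbf{I}$ and $\mathbf{I}\otimes\mathbf{L}_2$ of the Kronecker sum commute, so the exponential splits, each piece is evaluated from the power series via $(\mathbf{L}_1\otimes\mathbf{I})^k=\mathbf{L}_1^k\otimes\mathbf{I}$, and the mixed-product property plus induction on $P$ finishes the job. Your argument is more elementary and more general --- it needs no diagonalizability, so it would hold verbatim for directed or otherwise non-symmetric operators --- whereas the paper's spectral proof is shorter for symmetric Laplacians and dovetails directly with the truncated-EVD implementation in \eqref{eff_imp}--\eqref{lamb_V}, where the same eigenvector Kronecker structure is reused. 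Your closing remark on how the lemma feeds into Proposition \ref{MDPDE_theorem} via mode-$(P+1)$ matricization matches the paper's subsequent step, and your caution about the $\downarrow\otimes$ versus $\otimes$ ordering convention is well placed, since the paper's statement of the lemma uses the unreversed orderings while \eqref{imp_eq} uses the reversed ones.
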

\begin{proof} Using the pairwise summation property of the factor eigenvalues in a Cartesian product, one can write:
\begin{equation}
\begin{split}
e^{t\mathbf{L}_\diamond}&=e^{t\left(\oplus_{p=1}^{P}{\mathbf{L}_p}\right)}\\
&=\left(\otimes_{p=1}^{P}{\mathbf{V}_p}\right)e^{t\left(\oplus_{p=1}^{P}{\boldsymbol{\Lambda}_p}\right)}\left(\otimes_{p=1}^{P}{\mathbf{V}_p}\right)^\top\\
&=\left(\otimes_{p=1}^{P}{\mathbf{V}_p}\right)\left(\otimes_{p=1}^{P}{e^{t\boldsymbol{\Lambda}_p}}\right)\left(\otimes_{p=1}^{P}{\mathbf{V}_p}\right)^\top\\
&=\otimes_{p=1}^{P}{e^{t\mathbf{L}_p}},
\end{split}
\end{equation}
where \(\mathbf{V}_p\), \(\lambda_i^{(p)}\), and \(\boldsymbol{\Lambda}_p\) are the eigenmatrix, \(i\)th eigenvalue, and the diagonal eigenvalue matrix corresponding to the \(p\)th Laplacian \(\mathbf{L}_p\). Note that, in the related literature, this property has also been proved from different points of view, \eg \cite{stanley2020multiway}.
\end{proof}
Next, by performing mode-$(P+1)$ unfolding on both sides of \eqref{tensor_formm2}, one can write:
\begin{equation}
\begin{split}
&(\mathcal{L}_l(\underline{\mathbf{U}}_l))_{(P+1)}={\mathbf{W}_l}^\top {\underline{\mathbf{U}}_l}_{(P+1)}\left[\downarrow\otimes_{p=1}^{P}{e^{-t_l\mathbf{L}_p}}\right]^\top\\
&\rightarrow \left[(\mathcal{L}_l(\underline{\mathbf{U}}_l))_{(P+1)}\right]^\top=\overbrace{\left[\downarrow\otimes_{p=1}^{P}{e^{-t_l\mathbf{L}_p}}\right]}^{e^{-t_l\tilde{\mathbf{L}}_\diamond}}{\underline{\mathbf{U}}_l}^\top_{(P+1)}\mathbf{W}_{l}.
\end{split}
\end{equation}    
\end{proof}

\subsection{Proof of Proposition \ref{prop_prodE}}
\label{proof4}
\begin{proof}
The proof is presented by construction. Therefore, we first show it is true for the case of $P=2$ as:
\begin{equation}
\begin{split}
\tilde{\mathbf{A}}&=\tilde{\mathbf{A}}_1\otimes\mathbf{I}_2+\mathbf{I}_1\otimes\tilde{\mathbf{A}}_2\\
&=(\mathbf{A}_1+\mathbf{E}_1)\otimes\mathbf{I}_2+\mathbf{I}_1\otimes(\mathbf{A}_2+\mathbf{E}_2)\\
&=\overbrace{(\mathbf{A}_1\otimes\mathbf{I}_2+\mathbf{I}_1\otimes\mathbf{A}_2)}^{\mathbf{A}}+\overbrace{(\mathbf{E}_1\otimes\mathbf{I}_2+\mathbf{I}_1\otimes\mathbf{E}_2)}^{\mathbf{E}}\\
&=\mathbf{A}+\mathbf{E}.
\end{split}
\end{equation}  
Then, by assuming the theorem holds for the case of $P=K$ and the definitions of $\tilde{\mathbf{A}}'_K\vcentcolon=\oplus_{p=1}^{K}{\tilde{\mathbf{A}}_p}$ and $\tilde{\mathbf{A}}'_K\vcentcolon=\oplus_{p=1}^{K}{\tilde{\mathbf{A}}_p}$, we next show it also holds for $P=K+1$ as follows:
\begin{equation}
\label{proof_prev}
\begin{split}
\tilde{\mathbf{A}}&=\oplus_{p=1}^{K+1}{\tilde{\mathbf{A}}_p}=\overbrace{(\oplus_{p=1}^{K}{\tilde{\mathbf{A}}_p})}^{\tilde{\mathbf{A}}'_K}\oplus\tilde{\mathbf{A}}_{K+1}=\tilde{\mathbf{A}}'_K\otimes\mathbf{I}_2+\mathbf{I}_1\otimes\tilde{\mathbf{A}}_{K+1}\\
&=(\mathbf{A}'_K+\mathbf{E}'_K)\otimes\mathbf{I}_2+\mathbf{I}_1\otimes(\mathbf{A}_{K+1}+\mathbf{E}_{K+1})\\
&=\overbrace{(\mathbf{A}'_K\otimes\mathbf{I}_2+\mathbf{I}_1\otimes\mathbf{A}_{K+1})}^{\mathbf{A}=\oplus_{p=1}^{K+1}{\mathbf{A}_p}}+\overbrace{(\mathbf{E}'_K\otimes\mathbf{I}_2+\mathbf{I}_1\otimes\mathbf{E}_{K+1})}^{\mathbf{E}=\oplus_{p=1}^{K+1}{\mathbf{E}_p}}\\
&=\mathbf{A}+\mathbf{E}.
\end{split}
\end{equation}  
For proving the upper bound of the norm of $\mathbf{E}$, in a similar approach to the previous proof, we first prove for the case of $P=2$ as:
\begin{equation}
\begin{split}
&\vertiii{\mathbf{E}}=\vertiii{\mathbf{E}_1\otimes\mathbf{I}_2+\mathbf{I}_1\otimes\mathbf{E}_2}\le\vertiii{\mathbf{E}_1\otimes\mathbf{I}_2}+\vertiii{\mathbf{I}_1\otimes\mathbf{E}_2}\\
&\le\overbrace{\vertiii{\mathbf{E}_1}}^{\varepsilon_1}.\overbrace{\vertiii{\mathbf{I}_2}}^{1}+\overbrace{\vertiii{\mathbf{I}_1}}^{1}.\overbrace{\vertiii{\mathbf{E}_2}}^{\varepsilon_2}=\varepsilon_1+\varepsilon_2.
\end{split}
\end{equation}
Using a similar approach to the proof in \eqref{proof_prev}, we assume the theorem holds for $P=K$ as $\vertiii{\mathbf{E}}\le\sum_{p=1}^{K}{\varepsilon_p}$, and, based on this information, then, we prove the theorem for the case of $P=K+1$ as follows:
\begin{equation}
\begin{split}
&\vertiii{\mathbf{E}}=\vertiii{\mathbf{E}'_K\otimes\mathbf{I}_2+\mathbf{I}_1\otimes\mathbf{E}_{K+1}}\le\vertiii{\mathbf{E}'_K\otimes\mathbf{I}_2}+\vertiii{\mathbf{I}_1\otimes\mathbf{E}_{K+1}}\\
&\le\overbrace{\vertiii{\mathbf{E}'_K}}^{\sum_{p=1}^{K}{\varepsilon_p}}.\overbrace{\vertiii{\mathbf{I}_2}}^{1}+\overbrace{\vertiii{\mathbf{I}_1}}^{1}.\overbrace{\vertiii{\mathbf{E}_{K+1}}}^{\varepsilon_{K+1}}=\sum_{p=1}^{K+1}{\varepsilon_p},
\end{split}
\end{equation}
which concludes the proof.
\end{proof}

\subsection{Proof of Theorem \ref{thm_e1e2}}
\label{proof5}
\begin{proof}
    The poof can be straightforwardly obtained by applying Proposition 1 in \cite{song2022robustness} based on the obtained results in Proposition \ref{prop_prodE}.
\end{proof}

\subsection{Proof of Lemma \ref{lemma_prod}}
\label{proof6}
\begin{proof}
By construction, we first prove the theorem for the case of $P=2$. The defined Laplacian \(\hat{\mathbf{L}}\) in \eqref{prod_Lap} can be rewritten as:
\begin{equation}
\label{prod_lap}
\begin{split}
\hat{\mathbf{L}}=\mathbf{I}-\hat{\mathbf{A}}&=\mathbf{I}-\left(\frac{\hat{\mathbf{A}}_1}{2}\otimes \mathbf{I}_2+\mathbf{I}_1\otimes\frac{\hat{\mathbf{A}}_2}{2}\right)\\
&=\frac{1}{2}\left(2\mathbf{I}-\left(\hat{\mathbf{A}}_1\otimes \mathbf{I}_2+\mathbf{I}_1\otimes\hat{\mathbf{A}}_2\right)\right)\\
&=\frac{1}{2}\left([\mathbf{I}-(\hat{\mathbf{A}}_1\otimes \mathbf{I}_2)]+[\mathbf{I}-(\mathbf{I}_1\otimes\hat{\mathbf{A}}_2)]\right)\\
&=\frac{1}{2}\left(\left[\mathbf{I}_1\otimes\mathbf{I}_2-(\hat{\mathbf{A}}_1\otimes \mathbf{I}_2)\right]+\left[\mathbf{I}_1\otimes\mathbf{I}_2-(\mathbf{I}_1\otimes\hat{\mathbf{A}}_2)\right]\right)\\
&=\frac{1}{2}\left(\left[(\mathbf{I}_1-\hat{\mathbf{A}}_1)\otimes \mathbf{I}_2\right]+\left[\mathbf{I}_1\otimes(\mathbf{I}_2-\hat{\mathbf{A}}_2)\right]\right)\\
&=\frac{1}{2}\left(\hat{\mathbf{L}}_1\otimes\mathbf{I}_2+\mathbf{I}_1\otimes\hat{\mathbf{L}}_2\right)\\
&=\frac{1}{2}\left(\hat{\mathbf{L}}_1\oplus\hat{\mathbf{L}}_2\right)\\
&=\left(\frac{\hat{\mathbf{L}}_1}{2}\right)\oplus\left(\frac{\hat{\mathbf{L}}_2}{2}\right),
\end{split}
\end{equation}
where we used the following property of Kronecker products: \(\alpha(A\otimes B)=(\alpha A)\otimes B=A\otimes (\alpha B)\) \cite{petersen2008matrix}. Next, by assuming that the theorem holds for $P=K$ and the definition of $\hat{\mathbf{A}}'_K\vcentcolon=\frac{1}{K}$ and $\oplus_{p=1}^{K}\hat{\mathbf{A}}_p\vcentcolon=\oplus_{p=1}^{K}{\left(\frac{\hat{\mathbf{A}}_p}{K}\right)}$, we prove it for $P=K+1$:
\begin{equation}
\label{prod_lap2}
\begin{split}
\hat{\mathbf{L}}=\mathbf{I}-\hat{\mathbf{A}}&=\mathbf{I}-\frac{1}{K+1}\left(K\hat{\mathbf{A}}'_K\oplus\hat{\mathbf{A}}_{K+1}\right)\\
&=\mathbf{I}-\left(\frac{K\hat{\mathbf{A}}'_K}{K+1}\otimes \mathbf{I}_2+\mathbf{I}_1\otimes\frac{\hat{\mathbf{A}}_{K+1}}{K+1}\right)\\
&=\frac{1}{K+1}\left((K+1)\mathbf{I}-\left(K\hat{\mathbf{A}}'_K\otimes \mathbf{I}_2+\mathbf{I}_1\otimes\hat{\mathbf{A}}_{K+1}\right)\right)\\
&=\frac{1}{K+1}\left(K[\mathbf{I}-(\hat{\mathbf{A}}'_K\otimes \mathbf{I}_2)]+[\mathbf{I}-(\mathbf{I}_1\otimes\hat{\mathbf{A}}'_{K+1})]\right)\\
&=\frac{1}{K+1}\left(\left[\mathbf{I}_1\otimes\mathbf{I}_2-(\hat{\mathbf{A}}'_K\otimes \mathbf{I}_2)\right]+\left[\mathbf{I}_1\otimes\mathbf{I}_2-(\mathbf{I}_1\otimes\hat{\mathbf{A}}_2)\right]\right)\\
&=\frac{1}{K+1}\left(K\left[(\mathbf{I}_1-\hat{\mathbf{A}}'_K)\otimes \mathbf{I}_2\right]+\left[\mathbf{I}_1\otimes(\mathbf{I}_2-\hat{\mathbf{A}}_2)\right]\right)\\
&=\frac{1}{K+1}\left(K\left[\oplus_{p=1}^{K}{\left(\frac{\hat{\mathbf{L}}_p}{K}\right)}\right]\otimes\mathbf{I}_2+\mathbf{I}_1\otimes\hat{\mathbf{L}}_2\right)\\
&=\frac{1}{K+1}\left(\oplus_{p=1}^{K+1}{\left(\hat{\mathbf{L}}_p\right)}\right)\\
&=\oplus_{p=1}^{K+1}{\left(\frac{\hat{\mathbf{L}}_p}{K+1}\right)}.
\end{split}
\end{equation}
Besides, by the addition rule for the eigenvalues of a product graph adjacency or Laplacian (\(\lambda^\diamond_{ij}=\lambda^{(1)}_i+\lambda^{(2)}_j\)), it can be easily seen that the eigenvalues of the normalized product adjacency (\(\hat{\mathbf{W}}\)) and Laplacian (\(\hat{\mathbf{L}}\)) in \eqref{prod_lap} are in the intervals of \([-1,1]\) and \([0,2]\), respectively, quite similar to the case for the normalized factor graphs, which concludes the proof.
\end{proof}

\subsection{Proof of Theorem \ref{exp_ener_prod}}
\label{proof7}
% \begin{theorem}
% \label{exp_ener_prod}
% For the product graph case \(\hat{\mathbf{L}}=(\frac{\hat{\mathbf{L}}_1}{P})\oplus(\frac{\hat{\mathbf{L}}_2}{P})\oplus\hdots\oplus(\frac{\hat{\mathbf{L}}_P}{P})\) and having identical continuous dependency \(t\) for all of them, it holds that
% \begin{equation}
% E(\mathbf{X}^{(l)})\le e^{l\left(\ln{s}-2t\overline{\lambda}\right)}E(\mathbf{X}^{(0)})
% \end{equation}
% where $\overline{\lambda}=\frac{1}{P}\sum_{i=1}^{P}{\lambda^{(i)}}$, and \(\lambda^{(i)}\) is the smallest non-zero eigenvalue of the \(i\)-th factor normalized Laplacian $\hat{\mathbf{L}}_i$, and $s\vcentcolon=\sup_{l\in\mathbb{N}_{+}}{s_l}$, where $s_l\vcentcolon=\prod_{h=1}^{H_l}{s_{lh}}$ with $s_{lh}$ being the square of maximum singular value of $\mathbf{W}^\top_{lh}$.
% \end{theorem}

% \begin{corollary}
% \label{corrl}
% By $l\rightarrow\infty$, \(E(\mathbf{X}^{(l)})\) exponentially converges to \(0\), when
% \begin{equation}
% \ln{s}-2\overline{\lambda} t<0
% \end{equation}
% \end{corollary}

\begin{proof} 
First, Note that Using \(\tilde{\mathbf{x}}\) as the Graph Fourier Transform (GFT) \cite{ortega2018graph} of \(\mathbf{x}\) w.r.t. \(\hat{\mathbf{L}}\) (with eigenvalues $\{\lambda_i\}_{i=1}^N$), one can write \cite{cai2020note}:
\begin{equation}
E(\mathbf{x})=\mathbf{x}^\top \hat{\mathbf{L}}\mathbf{x}=\sum_{i=1}^{N}{\lambda_i\tilde{x}^2_i}.
\end{equation}
Next, by defining $\lambda$ as the smallest non-zero eigenvalue of the Laplacian $\hat{\mathbf{L}}$, the following lemma characterizes the over-smoothing aspects of applying a heat kernel in the simplest case.
\begin{lemma}
\label{exp_ener}
We have:
\begin{equation}
E(e^{-\hat{\mathbf{L}}}\mathbf{x})\le e^{-2\lambda}E(\mathbf{x}).
\end{equation}
\end{lemma}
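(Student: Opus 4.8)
The plan is to work entirely in the graph Fourier domain with respect to $\hat{\mathbf{L}}$, exploiting the spectral identity $E(\mathbf{x})=\sum_{i=1}^N \lambda_i \tilde{x}_i^2$ that was just recorded. The key observation is that the heat kernel $e^{-\hat{\mathbf{L}}}$ is diagonal in the eigenbasis of $\hat{\mathbf{L}}$: if $\mathbf{x}$ has GFT coefficients $\tilde{x}_i$, then $e^{-\hat{\mathbf{L}}}\mathbf{x}$ has GFT coefficients $e^{-\lambda_i}\tilde{x}_i$. Therefore
\begin{equation}
E(e^{-\hat{\mathbf{L}}}\mathbf{x}) = \sum_{i=1}^N \lambda_i \left(e^{-\lambda_i}\tilde{x}_i\right)^2 = \sum_{i=1}^N \lambda_i e^{-2\lambda_i}\tilde{x}_i^2.
\end{equation}

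The next step is to bound each term. We split the sum according to whether $\lambda_i = 0$ or $\lambda_i > 0$. For the zero eigenvalues the corresponding term vanishes on both sides (it contributes $0$ to $E(\mathbf{x})$ and to $E(e^{-\hat{\mathbf{L}}}\mathbf{x})$), so they can be ignored. For each non-zero eigenvalue we have $\lambda_i \ge \lambda$ by definition of $\lambda$ as the smallest non-zero eigenvalue, and since $x\mapsto e^{-2x}$ is decreasing on $[0,\infty)$ this gives $e^{-2\lambda_i} \le e^{-2\lambda}$. Hence $\lambda_i e^{-2\lambda_i}\tilde{x}_i^2 \le e^{-2\lambda}\,\lambda_i \tilde{x}_i^2$ for every $i$ with $\lambda_i>0$, and summing over all $i$ (including the harmless zero-eigenvalue terms) yields
\begin{equation}
E(e^{-\hat{\mathbf{L}}}\mathbf{x}) = \sum_{i:\,\lambda_i>0} \lambda_i e^{-2\lambda_i}\tilde{x}_i^2 \le e^{-2\lambda}\sum_{i:\,\lambda_i>0}\lambda_i\tilde{x}_i^2 = e^{-2\lambda} E(\mathbf{x}),
\end{equation}
which is the claimed inequality.

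There is essentially no serious obstacle here; the only subtle point is the treatment of the kernel of $\hat{\mathbf{L}}$ (the zero eigenvalues, i.e.\ the constant vector direction on each connected component). One must be careful that the factor $e^{-2\lambda}<1$ is applied only to the non-zero part of the spectrum — one cannot claim $e^{-2\lambda_i}\le e^{-2\lambda}$ for $\lambda_i=0$ — but since those modes carry zero Dirichlet energy this causes no problem: they simply drop out of both sides. The argument also tacitly uses that $\hat{\mathbf{L}}$ is symmetric positive semidefinite (true since it is a normalized graph Laplacian, and by Lemma~\ref{lemma_prod} its spectrum lies in $[0,2]$), so an orthonormal eigenbasis exists and the GFT is well defined. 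When this lemma is later applied with the rescaled and time-parametrized kernel $e^{-\hat{\mathbf{L}}^{(t)}}$ in the proof of Theorem~\ref{exp_ener_prod}, the same computation goes through with $\lambda$ replaced by the appropriate $\tilde{t}\tilde{\lambda}/P$, which is where the exponent in the theorem statement comes from.
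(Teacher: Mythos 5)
Your proof is correct and follows essentially the same route as the paper's: both diagonalize $\hat{\mathbf{L}}$ to write $E(e^{-\hat{\mathbf{L}}}\mathbf{x})=\sum_i \lambda_i e^{-2\lambda_i}\tilde{x}_i^2$ and then bound $e^{-2\lambda_i}\le e^{-2\lambda}$ on the non-zero spectrum, discarding the zero eigenvalues since they contribute nothing to the Dirichlet energy. Your treatment of the kernel modes is simply a more explicit version of the paper's remark that the zero eigenvalues are "ruled out."
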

\begin{proof} 
By considering the EVD forms of $\hat{\mathbf{L}}=\mathbf{V} \boldsymbol{\Lambda}\mathbf{V}^\top$ and $e^{-\hat{\mathbf{L}}}=\mathbf{V} e^{-\boldsymbol{\Lambda}}\mathbf{V}^\top$, one can write
\begin{equation}
\begin{split}
&E(e^{-\hat{\mathbf{L}}}\mathbf{x})\\
&=\mathbf{x}^\top \overbrace{{e^{-\hat{\mathbf{L}}}}^\top}^{\mathbf{V} e^{-\boldsymbol{\Lambda}}\mathbf{V}^\top} \overbrace{\hat{\mathbf{L}}}^{\mathbf{V} \boldsymbol{\Lambda}\mathbf{V}^\top} \overbrace{e^{-\hat{\mathbf{L}}}}^{\mathbf{V} e^{-\boldsymbol{\Lambda}}\mathbf{V}^\top}\mathbf{x}=\sum_{i=1}^{N}{\lambda_i\tilde{x}^2_i e^{-2\lambda_i}}\le e^{-2\lambda}\left(\sum_{i=1}^{N}{\lambda_i\tilde{x}^2_i}\right)=e^{-2\lambda}E(\mathbf{x}).
\end{split}
\end{equation}
Note that in the above proof, we ruled out the zero eigenvalues since they are useless in analyzing Dirichlet energy.   
\end{proof}
Then by considering the following lemmas from \cite{cai2020note}:
\begin{lemma}
(Lemma 3.2 in \cite{cai2020note}). \(E(\mathbf{X}\mathbf{W})\le\|\mathbf{W}^\top\|_2^2 E(\mathbf{X})\).    
\end{lemma}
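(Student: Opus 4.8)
The plan is to work entirely with the trace form of the Dirichlet energy, $E(\mathbf{Y}) = \tr(\mathbf{Y}^\top \hat{\mathbf{L}} \mathbf{Y})$ from \eqref{e_dir}, and reduce the claim to a single scalar trace inequality. First I would substitute $\mathbf{Y} = \mathbf{X}\mathbf{W}$ to obtain
\begin{equation}
E(\mathbf{X}\mathbf{W}) = \tr(\mathbf{W}^\top \mathbf{X}^\top \hat{\mathbf{L}} \mathbf{X} \mathbf{W}).
\end{equation}
Setting $\mathbf{M} := \mathbf{X}^\top \hat{\mathbf{L}} \mathbf{X}$, I would observe that $\mathbf{M}$ is symmetric positive semidefinite: for any vector $\mathbf{z}$ we have $\mathbf{z}^\top \mathbf{M} \mathbf{z} = (\mathbf{X}\mathbf{z})^\top \hat{\mathbf{L}} (\mathbf{X}\mathbf{z}) \ge 0$, since $\hat{\mathbf{L}} \succeq 0$ (its spectrum lies in $[0,2]$ by Lemma \ref{lemma_prod}). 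Note also that $E(\mathbf{X}) = \tr(\mathbf{M})$.

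Next, using the cyclicity of the trace, I would rewrite
\begin{equation}
E(\mathbf{X}\mathbf{W}) = \tr(\mathbf{M}\,\mathbf{W}\mathbf{W}^\top),
\end{equation}
so the target becomes $\tr(\mathbf{M}\,\mathbf{W}\mathbf{W}^\top) \le \lambda_{\max}(\mathbf{W}\mathbf{W}^\top)\,\tr(\mathbf{M})$, where $\lambda_{\max}(\mathbf{W}\mathbf{W}^\top) = \|\mathbf{W}\|_2^2 = \|\mathbf{W}^\top\|_2^2$, the squared spectral norm being exactly the largest eigenvalue of $\mathbf{W}\mathbf{W}^\top$ and $\|\mathbf{W}\|_2 = \|\mathbf{W}^\top\|_2$. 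Thus everything hinges on one general linear-algebra fact.

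The main step — and the only place requiring real care — is that for any symmetric PSD matrix $\mathbf{M}$ and any symmetric PSD matrix $\mathbf{N}$, $\tr(\mathbf{M}\mathbf{N}) \le \lambda_{\max}(\mathbf{N})\tr(\mathbf{M})$. I would prove this by spectrally decomposing the PSD factor $\mathbf{M} = \sum_k \mu_k \mathbf{v}_k \mathbf{v}_k^\top$ with $\mu_k \ge 0$ and $\{\mathbf{v}_k\}$ orthonormal, which gives
\begin{equation}
\tr(\mathbf{M}\mathbf{N}) = \sum_k \mu_k\, \mathbf{v}_k^\top \mathbf{N} \mathbf{v}_k \le \sum_k \mu_k\, \lambda_{\max}(\mathbf{N}) = \lambda_{\max}(\mathbf{N})\,\tr(\mathbf{M}),
\end{equation}
where the termwise bound uses $\mathbf{v}_k^\top \mathbf{N} \mathbf{v}_k \le \lambda_{\max}(\mathbf{N})\|\mathbf{v}_k\|^2 = \lambda_{\max}(\mathbf{N})$. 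The crucial subtlety is that the decomposition must be applied to $\mathbf{M}$ rather than to $\mathbf{N}$, since only the nonnegativity of every coefficient $\mu_k$ lets the uniform termwise bound survive the summation. Finally I would specialize to $\mathbf{N} = \mathbf{W}\mathbf{W}^\top$ and chain the displays to conclude $E(\mathbf{X}\mathbf{W}) \le \|\mathbf{W}^\top\|_2^2\, E(\mathbf{X})$, which is the claimed lemma.
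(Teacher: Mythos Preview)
Your proof is correct. Note, however, that the paper does not actually supply its own proof of this lemma: in Appendix~\ref{proof7} the authors simply invoke it as ``Lemma~3.2 in \cite{cai2020note}'' and move on. What you have written is precisely the standard trace-based argument one finds in that reference, so there is nothing to contrast --- your proposal fills in the details the paper chose to omit by citation.
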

\begin{lemma}
(Lemma 3.3 in \cite{cai2020note}). For ReLU and Leaky-ReLU nonlinearities \(E(\sigma(\mathbf{X}))\le E(\mathbf{X})\).    
\end{lemma}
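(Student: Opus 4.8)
The plan is to exploit the entrywise nature of $\sigma$ together with the edge-sum form of the Dirichlet energy, reducing the matrix statement to a single elementary two-variable inequality that I can verify by a sign case analysis.

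First I would decompose over feature columns. Writing $\mathbf{X}=[\mathbf{x}^{(1)},\hdots,\mathbf{x}^{(F)}]$ and using that $\sigma$ acts entrywise (so $\sigma(\mathbf{X})$ has columns $\sigma(\mathbf{x}^{(f)})$), the squared-norm definition of $E$ gives $E(\mathbf{X})=\sum_{f=1}^{F}E(\mathbf{x}^{(f)})$, and likewise for $\sigma(\mathbf{X})$. Hence it suffices to treat a single graph signal $\mathbf{x}\in\mathbb{R}^{N}$. Next I would use the edge-sum representation
\begin{equation}
E(\mathbf{x})=\frac{1}{2}\sum_{(i,j)\in\mathcal{E}}A_{ij}\left(\frac{x_i}{\sqrt{\deg_i}}-\frac{x_j}{\sqrt{\deg_j}}\right)^2,
\end{equation}
in which every weight $A_{ij}\ge 0$ and every prefactor is nonnegative. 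Consequently $E(\sigma(\mathbf{x}))\le E(\mathbf{x})$ follows at once provided I establish, for each edge with $a:=1/\sqrt{\deg_i}>0$ and $b:=1/\sqrt{\deg_j}>0$, the scalar inequality
\begin{equation}
\bigl(a\,\sigma(x_i)-b\,\sigma(x_j)\bigr)^2\le\bigl(a x_i-b x_j\bigr)^2 .
\end{equation}

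To prove this scalar inequality I would write both activations uniformly as $\sigma(x)=\max(x,\alpha x)$ with $\alpha\in[0,1)$, where $\alpha=0$ recovers ReLU; then $\sigma(x)=x$ for $x\ge 0$ and $\sigma(x)=\alpha x$ for $x<0$. The argument splits according to the signs of $x_i$ and $x_j$. When both are nonnegative the two sides coincide; when both are negative the left side equals $\alpha^2(ax_i-bx_j)^2\le(ax_i-bx_j)^2$ since $\alpha^2\le 1$; and in the mixed case, say $x_i\ge 0>x_j$, one has $a x_i-b\,\sigma(x_j)=ax_i+b\alpha|x_j|$ and $ax_i-bx_j=ax_i+b|x_j|$, both nonnegative with the former no larger, so squaring preserves the order. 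Summing the verified per-edge bounds against the nonnegative weights $A_{ij}$, and then over features, yields the claim.

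The two reductions (column decomposition and edge decomposition) are routine. The main obstacle is the mixed-sign edge case: because the positive weights satisfy $a\ne b$ in general, the inequality is not a consequence of the $1$-Lipschitz property of $\sigma$ alone, and it genuinely relies on the structural fact that ReLU/Leaky-ReLU preserve signs while contracting negative values toward zero. I would therefore check that case explicitly, including the boundary $\alpha=0$, to confirm that the weighted edge difference is nonexpansive.
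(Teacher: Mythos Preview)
Your argument is correct. The column-wise reduction, the edge-sum expansion of the Dirichlet energy, and the sign-based case analysis for the scalar inequality $(a\,\sigma(x_i)-b\,\sigma(x_j))^2\le(ax_i-bx_j)^2$ all go through exactly as you describe; your observation that the mixed-sign case is not merely $1$-Lipschitzness but relies on sign preservation plus contraction of negatives is the right diagnosis.

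Regarding comparison: the paper does not actually prove this lemma. It is quoted verbatim as ``Lemma~3.3 in \cite{cai2020note}'' and used as a black box inside the proof of Theorem~\ref{exp_ener_prod}, with no accompanying argument. So you have supplied a self-contained proof where the paper simply invokes an external reference. Your proof is essentially the standard one (and matches in spirit what Cai and Wang do): reduce to a per-edge, per-feature scalar inequality and dispatch it by cases on the signs of the two endpoints. One minor remark: the paper's display of $E(\mathbf{U})$ omits the edge weights $A_{ij}$ in the sum over $\mathcal{E}$, whereas you correctly include them; since the setting allows weighted graphs, your version is the more careful one and the argument is unaffected because $A_{ij}\ge 0$.
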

Therefore, by combining the previous concepts and considering that the minimum non-zero eigenvalue of $\hat{\mathbf{L}}^{(t)}$ in \eqref{defs} is $\frac{1}{P}t^{(m)}\lambda^{(m)}$ with $m=\argmin_{i}{t^{(i)}\lambda^{(i)}}$, it can be said that:
\begin{theorem} For any \(l\in\mathbb{N}_{+}\), we have \(E(f_l(\mathbf{X}))\le s_l e^{-\frac{2}{P}\tilde{t}\tilde{\lambda}}E(\mathbf{X})\), where \(s_l\vcentcolon=\prod_{h=1}^{H_l}{s_{lh}}\) and \(s_{lh}\) is the square of the maximum singular value of \(\mathbf{W}^\top_{lh}\). Besides, $\tilde{\lambda}=\lambda^{(m)}$ and $\tilde{t}=t^{(m)}$ with $m=\argmin_{i}{t^{(i)}\lambda^{(i)}}$, where $\lambda^{(i)}$ is the smallest non-zero eigenvalue of $\hat{\mathbf{L}}_i$.
\end{theorem}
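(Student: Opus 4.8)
The plan is to split $f_l$ into its diffusion part $e^{-\hat{\mathbf{L}}^{(t)}}$ and its $\mathrm{MLP}$ part, bound the energy contracted by each separately, and then multiply the two factors. Throughout I write the (tensorial) Dirichlet energy as $E(\mathbf{X})=\tr(\mathbf{X}^\top\hat{\mathbf{L}}\mathbf{X})$ with $\hat{\mathbf{L}}=\tfrac1P\oplus_{p=1}^P\hat{\mathbf{L}}_p$, and I note it decomposes column-wise, $E(\mathbf{X})=\sum_f E(\mathbf{x}_f)$, so it suffices to reason about a single feature vector and then sum over columns.

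First I would treat the diffusion step. The essential point is that the energy operator $\hat{\mathbf{L}}=\tfrac1P\oplus_p\hat{\mathbf{L}}_p$ and the diffusion generator $\hat{\mathbf{L}}^{(t)}=\oplus_p\tfrac{t^{(p)}}{P}\hat{\mathbf{L}}_p$ (obtained from $e^{-\hat{\mathbf{L}}^{(t)}}=\downarrow\otimes_p e^{-t^{(p)}\hat{\mathbf{L}}_p/P}$ via the kernel separability of Lemma \ref{The1}) are simultaneously diagonalizable in the common eigenbasis $\mathbf{V}_\diamond=\downarrow\otimes_p\mathbf{V}_p$, even though their eigenvalues differ. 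Expanding a vector $\mathbf{x}$ in this basis with coefficients $\tilde{x}_j$ and writing $\mu_j,\nu_j$ for the eigenvalues of $\hat{\mathbf{L}}$ and $\hat{\mathbf{L}}^{(t)}$, the computation in Lemma \ref{exp_ener} generalizes to $E(e^{-\hat{\mathbf{L}}^{(t)}}\mathbf{x})=\sum_j\mu_j\tilde{x}_j^2 e^{-2\nu_j}$. I would then observe that a mode lies in the kernel of $\hat{\mathbf{L}}$ exactly when it lies in the kernel of $\hat{\mathbf{L}}^{(t)}$: since each $t^{(p)}>0$ and every summand of a Cartesian eigenvalue is nonnegative, $\mu_j=0\iff\nu_j=0$. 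Hence the only contributing terms ($\mu_j\ne0$) all have $\nu_j$ bounded below by the smallest non-zero eigenvalue of $\hat{\mathbf{L}}^{(t)}$, which — by the Cartesian-sum eigenvalue rule underlying Lemma \ref{lemma_prod}, as recorded just before the statement — equals $\tfrac1P\tilde{t}\tilde{\lambda}$ with $m=\argmin_i t^{(i)}\lambda^{(i)}$. Bounding $e^{-2\nu_j}\le e^{-\frac{2}{P}\tilde{t}\tilde{\lambda}}$ on this index set and summing gives $E(e^{-\hat{\mathbf{L}}^{(t)}}\mathbf{x})\le e^{-\frac{2}{P}\tilde{t}\tilde{\lambda}}E(\mathbf{x})$, and summing over feature columns yields the matrix version $E(e^{-\hat{\mathbf{L}}^{(t)}}\mathbf{X})\le e^{-\frac{2}{P}\tilde{t}\tilde{\lambda}}E(\mathbf{X})$.

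Next I would handle the $\mathrm{MLP}$. Recall that $\mathrm{MLP}_l$ alternates the nonlinearity $\sigma$ (ReLU or Leaky ReLU) with $H_l$ right-multiplications by the weights $\mathbf{W}_{lh}$, as written in \eqref{defs}. Processing these from the inside out, I would invoke Lemma 3.3 of \cite{cai2020note} ($E(\sigma(\mathbf{X}))\le E(\mathbf{X})$) at each activation, so the energy never increases, and Lemma 3.2 of \cite{cai2020note} ($E(\mathbf{X}\mathbf{W})\le\|\mathbf{W}^\top\|_2^2E(\mathbf{X})$) at each linear map, so the energy picks up a factor $s_{lh}=\|\mathbf{W}^\top_{lh}\|_2^2$. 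Chaining the $H_l$ weight factors and discarding the harmless activation bounds gives $E(\mathrm{MLP}_l(\mathbf{Y}))\le\bigl(\prod_{h=1}^{H_l}s_{lh}\bigr)E(\mathbf{Y})=s_l E(\mathbf{Y})$ for any input $\mathbf{Y}$.

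Finally I would compose the two estimates with $\mathbf{Y}=e^{-\hat{\mathbf{L}}^{(t)}}\mathbf{X}$, obtaining $E(f_l(\mathbf{X}))=E(\mathrm{MLP}_l(e^{-\hat{\mathbf{L}}^{(t)}}\mathbf{X}))\le s_l\,E(e^{-\hat{\mathbf{L}}^{(t)}}\mathbf{X})\le s_l e^{-\frac{2}{P}\tilde{t}\tilde{\lambda}}E(\mathbf{X})$, which is the claim. The main obstacle is the diffusion step: unlike Lemma \ref{exp_ener}, the energy is measured with $\hat{\mathbf{L}}$ while the kernel is generated by the differently weighted $\hat{\mathbf{L}}^{(t)}$, so the decay rate and the energy weights come from two distinct spectra. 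The crux is recognizing that these operators commute and share null spaces, so the non-smooth modes — the only ones carrying energy — are exactly those whose diffusion eigenvalue is at least $\tfrac1P\tilde{t}\tilde{\lambda}$; isolating that minimal non-zero weighted eigenvalue through the Cartesian-sum structure is the quantitatively decisive point and the origin of the ``factor graph with the smallest spectral gap dominates'' interpretation.
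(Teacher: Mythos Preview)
Your proposal is correct and follows essentially the same route as the paper: bound the diffusion step via a spectral computation in the shared eigenbasis $\mathbf{V}_\diamond$, then chain the Cai--Wang lemmas ($E(\sigma(\mathbf{X}))\le E(\mathbf{X})$ and $E(\mathbf{X}\mathbf{W})\le\|\mathbf{W}^\top\|_2^2E(\mathbf{X})$) through the $H_l$ layers of $\mathrm{MLP}_l$. In fact you are more careful than the paper on one point: the paper's Lemma~\ref{exp_ener} is proved for the case where the energy operator and the diffusion generator coincide, and the paper then simply asserts the result with the minimum non-zero eigenvalue of $\hat{\mathbf{L}}^{(t)}$ substituted in; you correctly observe that here the energy is measured by $\hat{\mathbf{L}}$ while the kernel is $e^{-\hat{\mathbf{L}}^{(t)}}$, and you close this gap by invoking simultaneous diagonalizability and the null-space coincidence $\mu_j=0\iff\nu_j=0$ to justify bounding $e^{-2\nu_j}$ only on the modes that carry energy.
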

Afterward, we can state our final corollary as follows:
\begin{corollary}
Let \(s\vcentcolon=\sup_{l\in\mathbb{N}_{+}}{s_l}\). We have:
\begin{equation}
E(\mathbf{X}^{(l)})\le s^l e^{-\frac{2}{P}l\tilde{t}\tilde{\lambda}}E(\mathbf{X})=e^{l\left(\ln{s}-\frac{2}{P}\tilde{t}\tilde{\lambda}\right)}E(\mathbf{X}).
\end{equation}
So, \(E(\mathbf{X}^{(l)})\) exponentially converges to \(0\), when \(\lim_{l\rightarrow\infty}{e^{l\left(\ln{s}-\frac{2}{P}\tilde{t}\tilde{\lambda}\right)}}=0\), \ie $\ln{s}-\frac{2}{P}\tilde{t}\tilde{\lambda}<0$. 
\end{corollary}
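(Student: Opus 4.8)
The plan is to prove the energy decay bound in Theorem~\ref{exp_ener_prod} by composing three elementary energy estimates and then iterating over layers. The central object is the tensorial Dirichlet energy $E(\cdot)$, which by Lemma~\ref{lemma_prod} equals $\tr(\tilde{\mathbf{U}}^\top\hat{\mathbf{L}}\tilde{\mathbf{U}})$ with $\hat{\mathbf{L}}=\frac{1}{P}\oplus_{p=1}^{P}{\hat{\mathbf{L}}_p}$, so that all spectral reasoning can be carried out on the product Laplacian $\hat{\mathbf{L}}$. First I would record the spectral identity $E(\mathbf{x})=\sum_i\lambda_i\tilde{x}_i^2$ for the graph Fourier coefficients, which immediately separates the contributions of each eigenmode.

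The key steps, in order, are: (i) establish the heat-kernel contraction $E(e^{-\hat{\mathbf{L}}}\mathbf{x})\le e^{-2\lambda}E(\mathbf{x})$ (Lemma~\ref{exp_ener}), which follows by inserting the EVDs of $\hat{\mathbf{L}}$ and $e^{-\hat{\mathbf{L}}}$, writing the result as $\sum_i\lambda_i\tilde{x}_i^2 e^{-2\lambda_i}$, and bounding $e^{-2\lambda_i}\le e^{-2\lambda}$ on the nonzero eigenmodes (the zero modes contribute nothing); (ii) identify the relevant contraction rate for the \emph{anisotropic} kernel $e^{-\hat{\mathbf{L}}^{(t)}}=\downarrow\otimes_{p}e^{-t^{(p)}\hat{\mathbf{L}}_p/P}$, observing that its eigenvalues are $\exp\!\big(-\frac{1}{P}\sum_p t^{(p)}\lambda_{j_p}^{(p)}\big)$ and that the smallest nonzero exponent is $\frac{1}{P}\tilde{t}\tilde{\lambda}$ where $m=\argmin_i t^{(i)}\lambda^{(i)}$ — this is where the ``dominant factor graph'' phenomenon enters; (iii) invoke Lemmas 3.2 and 3.3 of \cite{cai2020note} to handle the MLP block, giving $E(\mathrm{MLP}_l(\mathbf{X}))\le s_l E(\mathbf{X})$ with $s_l=\prod_h s_{lh}$; (iv) compose (ii) and (iii) to get $E(f_l(\mathbf{X}))\le s_l e^{-\frac{2}{P}\tilde{t}\tilde{\lambda}}E(\mathbf{X})$; and (v) iterate across $l$ layers, using $s=\sup_l s_l$, to obtain $E(\mathbf{X}_l)\le s^l e^{-\frac{2}{P}l\tilde{t}\tilde{\lambda}}E(\mathbf{X}_0)=e^{l(\ln s-\frac{2}{P}\tilde{t}\tilde{\lambda})}E(\mathbf{X}_0)$.

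I expect the main obstacle to be step (ii): carefully verifying that the minimum nonzero eigenvalue of the tensor-product heat operator is controlled by $\min_i t^{(i)}\lambda^{(i)}$ rather than by some cross term. This requires knowing that the eigenvalues of $\hat{\mathbf{L}}^{(t)}$ (as a function of the factor Laplacians' spectra) are additive in the exponent — a consequence of the Kronecker-sum structure established in Lemma~\ref{The1} and Lemma~\ref{lemma_prod} — and then arguing that $\sum_p t^{(p)}\lambda_{j_p}^{(p)}$, minimized over index tuples $(j_1,\dots,j_P)$ not all corresponding to zero eigenvalues, is exactly $\tilde{t}\tilde{\lambda}$ (attained by taking the dominant factor at its spectral gap and all others at their zero eigenvalue). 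One subtlety to flag is that this uses the assumption that the product graph has no isolated nodes so each factor is connected and has a well-defined spectral gap $\lambda^{(i)}>0$. The remaining steps are routine given the cited lemmas.

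\medskip
\noindent\emph{Remark.} The corollary is then immediate: $E(\mathbf{X}_l)\le e^{l(\ln s-\frac{2}{P}\tilde{t}\tilde{\lambda})}E(\mathbf{X}_0)\to 0$ as $l\to\infty$ precisely when the exponent's coefficient is negative, i.e. $\ln s-\frac{2}{P}\tilde{t}\tilde{\lambda}<0$.
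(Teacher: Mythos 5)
Your proposal is correct and follows essentially the same route as the paper's proof: the spectral identity and heat-kernel contraction (Lemma \ref{exp_ener}), the weight/nonlinearity bounds from \cite{cai2020note}, the per-layer estimate $E(f_l(\mathbf{X}))\le s_l e^{-\frac{2}{P}\tilde{t}\tilde{\lambda}}E(\mathbf{X})$, and iteration with $s=\sup_l s_l$. Your extra care in step (ii) — checking that the minimal nonzero exponent of the Kronecker-factored kernel is attained by one factor at its spectral gap with the others at zero — is a point the paper asserts without elaboration, so it is a welcome but not divergent addition.
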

\end{proof}

\section{Experiments on more than Two Factor Graphs}
\label{MoreThan2}
We consider the node regression task similar to the settings described in Section \ref{stab_exp} but for three ER graphs with $p^{(1)}_{\text{\tiny ER}}=p^{(1)}_{\text{\tiny ER}}=0.3$, and varying $p^{(3)}_{\text{\tiny ER}}\in\{0.1,0.3\}$ to monitor the performance in two different connectivity scenarios. Note that here we do not perturb factor graphs but, to make the experiment more challenging, the SNR on the graph data is set to SNR=0. Table \ref{Table_MoreGraphs} provides the results across a varying number of layers, which shows that the proposed framework's performance is resistant to adding layers, especially in the case of more sparse graphs, \ie $p^{(3)}_{\text{\tiny ER}}=0.1$. On the other hand, the best results were obtained in $l=1,2,4$, which are the nearest numbers to the actual number of layers in the true generative process which was 3. These observations are validated by comparing the results with the GCN, in which the over-smoothing and performance degradation happen severely faster than \method.
\begin{table*}[t]
\caption{Experiments on more than two factor graphs ($p^{(1)}_{\text{\tiny ER}}=p^{(2)}_{\text{\tiny ER}}=0.3$).}
\label{Table_MoreGraphs}
% \vskip 0.15in
\begin{center}
%\begin{small}
\resizebox{\textwidth}{!}{
\begin{tabular}{rcccccccccccccccr}
\toprule
% \multirow{2}{*}{Method} & \multicolumn{8}{c}{\textsl{AirQuality}} \\   

&$l=1$ & $l=2$ & $l=4$ & $l=8$ & $l=16$ & $l=32$\\ \midrule

GCN, $p^{(3)}_{\text{\tiny ER}}=0.1$ &0.185$\pm$0.132& 0.241$\pm$0.096& 0.371$\pm$0.132& 0.458$\pm$0.132& 0.494$\pm$0.147& 0.552$\pm$0.118\\

\method, $p^{(3)}_{\text{\tiny ER}}=0.1$ &0.141$\pm$0.075& 0.137$\pm$0.082& 0.150$\pm$0.091& 0.198$\pm$0.106& 0.215$\pm$0.124& 0.276$\pm$0.120\\

GCN, $p^{(3)}_{\text{\tiny ER}}=0.3$ & 0.253$\pm$0.211&0.296$\pm$0.113& 0.365$\pm$0.123& 0.429$\pm$0.097& 0.517$\pm$0.143& 0.67$\pm$0.153\\

\method, $p^{(3)}_{\text{\tiny ER}}=0.3$ & 0.188$\pm$0.094&0.182$\pm$0.095& 0.193$\pm$0.096& 0.230$\pm$0.089& 0.242$\pm$0.096& 0.336$\pm$0.107\\

\bottomrule
\end{tabular}
}
%\end{small}
\end{center}
%\vskip -0.1in
\end{table*}

\section{Descriptions of the Compared Methods}
\label{Descps}
\begin{itemize}
\item ARIMA \cite{li2018diffusion}: uses Kalman filter to implement auto-regressive integrated moving average framework, but processes data in a timestep-wise manner.

\item G-VARMA \cite{isufi2019forecasting}: generalizes the vector auto-regressive moving average (VARMA) to the case of considering graph-based data by imposing some statistical assumptions on the model and relying on the spatial graph.
\item GP-VAR \cite{isufi2019forecasting}: adapts AR model by spatial graph polynomial filters, with fewer number of learnable parameters.
\item FC-LSTM \cite{li2018diffusion}: a hybrid of fully connected MLPs and LSTM cells that considers each time step separately by assigning one LSTM cell to them.

\item Graph WaveNet \cite{wu2019graph}: a hybrid adapted convolutional and attentional network with GNNs for ST forecasting.

\item GMAN \cite{zheng2020gman}: exploits multiple ST attention layers within an encoder-decoder framework.

\item STGCN \cite{yu2018spatio}: a hybrid of typical GCNs and 1D convolutional blocks for ST forecasting.

\item GGRNN \cite{ruiz2020gated}: generalizes the typical RNNs by adapting GCN blocks instead of linear MLPs.

\item GRUGCN \cite{gao2022equivalence}: a TTS framework, which first processes the temporal information by a GRU cell and then learns spatial representations by applying typical GCN blocks.

\item SGP \cite{cini2023scalable}: a scalable graph-based ST framework for considerably reducing the number of learnable parameters. Note that we used the version without positional embedding blocks.
\end{itemize}

\section{Standard Deviation of the Results}
\label{App_std}
The standard deviation of traffic forecasting in Table \ref{Table_SimpleBaseline} and weather forecasting regarding the proposed framework in Table \ref{Table_weather} are provided in Tables \ref{Table_STD} and \ref{Table_weather_STD}.
\begin{table*}[t]
\caption{Standard deviation of traffic forecasting comparison in Table \ref{Table_SimpleBaseline}.}
\label{Table_STD}
% \vskip 0.15in
\begin{center}
%\begin{small}
\resizebox{.9\textwidth}{!}{
\begin{tabular}{rccccccccr}
\toprule
\multicolumn{10}{c}{\textsl{MetrLA}}\\ 
\midrule
\multirow{2}{*}{Method}&
\multicolumn{3}{c}{$H=3$}&
\multicolumn{3}{c}{$H=6$}&
\multicolumn{3}{c}{$H=12$}\\
\cmidrule(l){2-4}
\cmidrule(l){5-7}
\cmidrule(l){8-10}

& MAE& MAPE & RMSE &

MAE& MAPE & RMSE &

MAE& MAPE & RMSE\\ %\cmidrule(l){2-6} \cmidrule(l){7-11}

\cmidrule(l){2-4}
\cmidrule(l){5-7}
\cmidrule(l){8-10}

\midrule

\textbf{\method} (Ours)&

0.015&0.009&0.008&

0.007&0.012&0.003&

0.002&0.005&0.004\\

\bottomrule
\midrule

\multicolumn{10}{c}{\textsl{PemsBay}}\\ 
\midrule
\multirow{2}{*}{Method}&
\multicolumn{3}{c}{$H=3$}&
\multicolumn{3}{c}{$H=6$}&
\multicolumn{3}{c}{$H=12$}\\
\cmidrule(l){2-4}
\cmidrule(l){5-7}
\cmidrule(l){8-10}

& MAE& MAPE & RMSE &

MAE& MAPE & RMSE &

MAE& MAPE & RMSE\\ %\cmidrule(l){2-6} \cmidrule(l){7-11}

\cmidrule(l){2-4}
\cmidrule(l){5-7}
\cmidrule(l){8-10}

\midrule

\textbf{\method} (Ours)&

0.009&0.008&0.003&

0.013&0.006&0.005&

0.008&0.011&0.008\\

\bottomrule

\end{tabular}
}
%\end{small}
\end{center}
%\vskip -0.1in
\end{table*}

\begin{table*}[t]
\caption{Standard deviation of weather forecasting comparison in Table \ref{Table_weather}.}
\label{Table_weather_STD}
% \vskip 0.15in
\begin{center}
%\begin{small}
\resizebox{\textwidth}{!}{
\begin{tabular}{rcccccccccccccccccccccccccccr}
\toprule
% \multicolumn{10}{c}{\textsl{Moelene}} & \multicolumn{10}{c}{\textsl{NOAA}}\\ 
\multirow{2}{*}{Method}&
\multicolumn{5}{c}{\textsl{Molene}}&
\multicolumn{5}{c}{\textsl{NOAA}}\\
\cmidrule(l){2-6}
\cmidrule(l){7-11}

&$H=1$ & $H=2$ & $H=3$ & $H=4$ & $H=5$
&$H=1$ & $H=2$ & $H=3$ & $H=4$ & $H=5$\\

\midrule
\textbf{\method}~(ours) &
0.0174 & 0.0155 & 0.0132 & 0.0107 & 0.0086&
0.0034 & 0.0030 & 0.0026 & 0.0024 & 0.0023\\
\bottomrule
\end{tabular}
}
%\end{small}
\end{center}
%\vskip -0.1in
\end{table*}

\section{Homophily-Heterophily Trade-off across the Studied Real Datasets}
\label{App_Hom_Het}
Regarding the homophily-heterophily trade-off, first, we should mention that measuring the homophily-heterophily indices is way more challenging in spatiotemporal settings rather than in single-graph scenarios \cite{zhou2023greto}, since we are facing two intra-graph (\ie the spatial dimension) and inter-graph (\ie the temporal domain of changing time series characteristics) aspects. In this way, in the recent pioneer literature, they considered \textsl{MetrLA}, \textsl{PemsBay}, and temperature datasets (like \textsl{Molene} and \textsl{NOAA} used in our paper) as complicated in terms of homophily-heterophily behaviors. We provided the measured homophily-heterophily metrics (full details in \cite{zhou2023greto}) in Table \ref{hom}. As observed in this table, due to the wide range of variability across the used datasets, we have actually evaluated our performance on various datasets in terms of homophily-heterophily levels. For instance, \textsl{Molene} acts severely heterophilic in both intra and inter-graph domains, while \textsl{NOAA} acts more homophilic than \textsl{Molene} in the inter-graph domain.

\begin{table}[t]
\caption{Intra and Inter-homophily measures on the real-world graphs of \textsl{MetrLA} and \textsl{PemsBay} datasets.}
\label{hom}
% \vskip 0.15in
\begin{center}
%\begin{small}
%\resizebox{.7\textwidth}{!}{
\begin{tabular}{rccccccccr}
\toprule
% \multicolumn{10}{c}{\textsl{MetrLA}}\\ 
% \midrule
\multirow{2}{*}{Dataset}&
\multicolumn{3}{c}{Intra-graph homophily $\boldsymbol{\pi}^{s}_{v_i}$}&
\multicolumn{3}{c}{Inter-graph homophily $\boldsymbol{\pi}^{T}_{v_i}$}\\
% \multicolumn{3}{c}{$H=12$}\\
\cmidrule(l){2-4}
\cmidrule(l){5-7}
% \cmidrule(l){8-10}

& $p_s$& $q_p$ & $q_n$&

$p_s$& $q_p$ & $q_n$\\

% MAE& MAPE & RMSE\\ %\cmidrule(l){2-6} \cmidrule(l){7-11}

\cmidrule(l){1-1}
\cmidrule(l){2-4}
\cmidrule(l){5-7}
% \cmidrule(l){8-10}
\textsl{MetrLA} 
& 0.2273 & 0.4732 & 0.2995
&  0.3325 & 0.4920 & 0.1755\\ 
% & 0.008&0.011&0.009 \\

\textsl{PemsBay} 
& 0.1073 & 0.5912 & 0.3015
&  0.2399 & 0.6863 & 0.0738 \\
% & 0.005&0.003&0.007 \\
% 0.002&0.005&0.004\\

\textsl{Molene} 
& 0.0148 & 0.6248 & 0.3602
&  0.0152 & 0.5545 & 0.4301 \\

\textsl{NOAA} 
& 0.1249 & 0.5345 & 0.3405
&  0.2862 & 0.4351 & 0.2786 \\

\bottomrule

\end{tabular}
%}
%\end{small}
\end{center}
%\vskip -0.1in
\end{table}

\section{The Case of Inexact Cartesian Product Modeling}
\label{App_Inexact}
This scenario lies under the general umbrella of inaccurate adjacencies \cite{song2022robustness}, which has been rigorously studied in the stability analysis in Section \ref{Stability_Analysis}. However, in a simplified and insightful case of facing graph products other than Cartesian, we consider two well-known graph products, \ie Strong and Kronecker products \cite{hammack2011handbook,sandryhaila2014big}. Consider the true graph is actually a Strong product graph $\mathbf{A}=\mathbf{A}_1\otimes \mathbf{A}_2+\mathbf{A}_1\otimes \mathbf{I}_2+\mathbf{I}_1\otimes \mathbf{A}_2$, and one mistakenly considers it as Cartesian product $\tilde{\mathbf{A}}=\mathbf{A}_1\otimes \mathbf{I}_2+\mathbf{I}_1\otimes \mathbf{A}_2$. Then, $\mathbf{E}=\mathbf{A}-\tilde{\mathbf{A}}=\mathbf{A}_1\otimes \mathbf{A}_2$, where $\|\mathbf{E}\|=\lambda^{(1)}_{max}\lambda^{(2)}_{max}$. So, the approximation error depends on the multiplication of the factor graph eigenspaces. Similarly, in the case of Kronecker product graphs $\tilde{\mathbf{A}}=\mathbf{A}_1\otimes \mathbf{A}_2$, it obtains $\|\mathbf{E}\|\le\lambda^{(1)}_{max}+\lambda^{(2)}_{max}+\lambda^{(1)}_{max}\lambda^{(2)}_{max}$. So, in this case, the summation of the factor eigenspaces matters too. Briefly, the approximation error bound depends on the factor graph spectrums.

\section{Choosing Appropriate $k$}
\label{App_Best_k}
We can choose an appropriate $k$ using two methodologies: supervised and unsupervised. For the supervised option, we can use cross-validation, as we did in our results. For the unsupervised method, we can analyze the Laplacian matrices for a low-rank approximation task. For example, in Figure \ref{OS_k}, we plot the explained variances of the principal components of the spatial Laplacian (\ie $\|\mathbf{v}\lambda\mathbf{v}^\top\|_F^2$ for $\mathbf{v}$ and $\lambda$ being the eigenvectors and eigenvalues) on the \textsl{MetrLA} dataset. We observe a strong concentration of variance in a few eigenvalues. We also observe that we can capture almost 80\% of the explained variance by only relying on 50 components.

\begin{figure}
    \begin{center}
\centerline{\includegraphics[width=0.55\columnwidth]{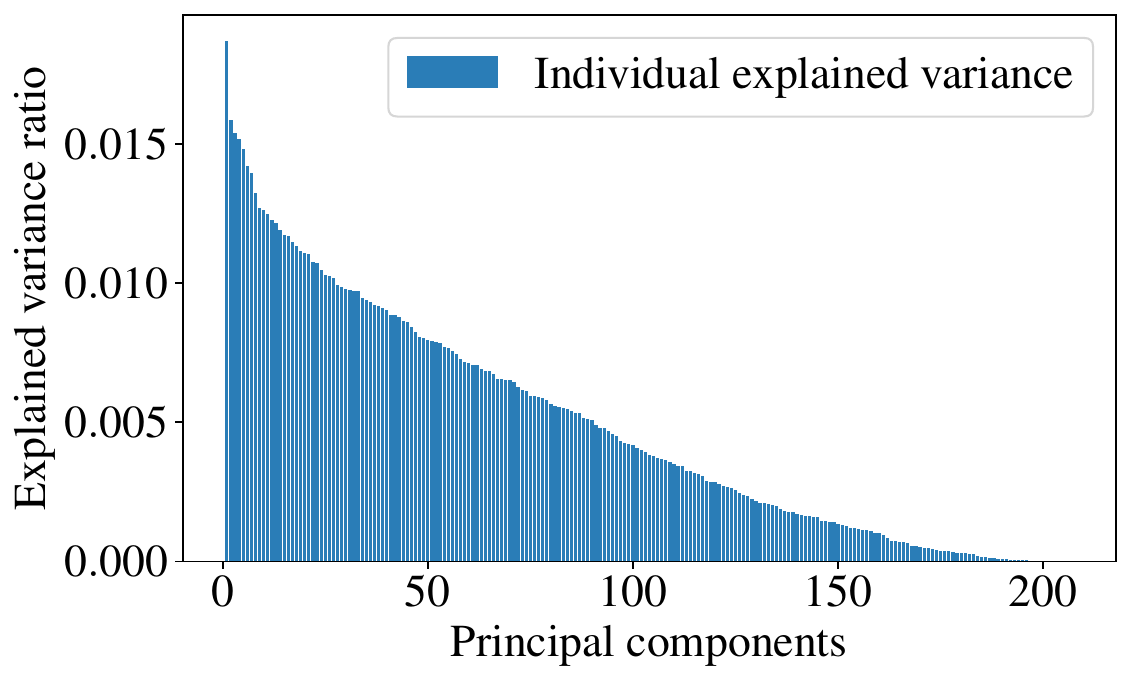}}
    \caption{Explained variance ratio vs. selected principal components.}
    \label{OS_k}
    \end{center}
\end{figure}
\section{The Effect of the Graph Receptive Fields on Controlling Over-smoothing}
\label{App_overs}

From a theoretical point of view, based on eq. \eqref{bound_t}, one intuitive way to alleviate or at least slow down the over-smoothing phenomena is to keep $\ln{s}-\frac{2}{P}\tilde{t}\tilde{\lambda}$ positive, which tends to $\tilde{t}<\frac{P}{2\tilde{\lambda}}\ln{s}$. This clarifies why one should not increase the node-neighborhood especially in the case of facing a connected graph or increasing the number of layers. This finding is also compatible with the pioneering research about the over-smoothing concept, \eg \cite{Oono2020Graph}. From an experimental point of view, and if we consider the graph receptive field $t$ as a hyperparameter, we have designed an experiment to validate the claimed statement. Here, we vary $t\in\{0.1,1,5,10,20\}$ and monitor the over-smoothing process in Figure \ref{OS_t}. In this figure, $b=\frac{P}{2\tilde{\lambda}}\ln{s}$, and as it is observed, for values of $t$ higher than $b$ the over-smoothing phenomena with increasing the number of layers is happening way faster than the other ones, which validates the theoretical discussion, too. Apart from this, the weight normalization technique \cite{Oono2020Graph,cai2020note} has also been mentioned as a helpful action but it hinders the theoretical foundations of \method. Note that, in practice, only escaping from the over-smoothing is not enough to have good performance. Therefore, for $t<b$, one might need to perform other kinds of analyses, \eg over-squashing analysis.
\section{Hyperparameter Details}
\label{App_Hyper}
The detailed hyperparameters (optimized by cross-validation on the validation data) and/or training settings of \method~, \ie $T$ (auto-regressive order), $emb$ (dimension of embedding size in the spatiotemporal encoder), $hid$ (dimension of linear mapping size in the spatiotemporal encoder), $F_{\text{MLP}}$ (dimension of linear mapping size in the MLP layers), $n_{\text{\method}}$ (number of \method~blocks), $F$ (second dimension of $\mathbf{W}_l$ in \eqref{imp_eq}), $k_i$ (number of selected eigenvalue-eigenvector pairs in the $i$-th factor graph), $lr$ (learning rate), $n_{\text{batch}}$ (batch size), and $n_{\text{epochs}}$ (number of epochs), are provided in Table \ref{Table_hyper}. Full details can be found in the implementation codes in \href{https://github.com/ArefEinizade2/CITRUS}{https://github.com/ArefEinizade2/CITRUS}.
\begin{figure}
    \begin{center}
\centerline{\includegraphics[width=0.55\columnwidth]{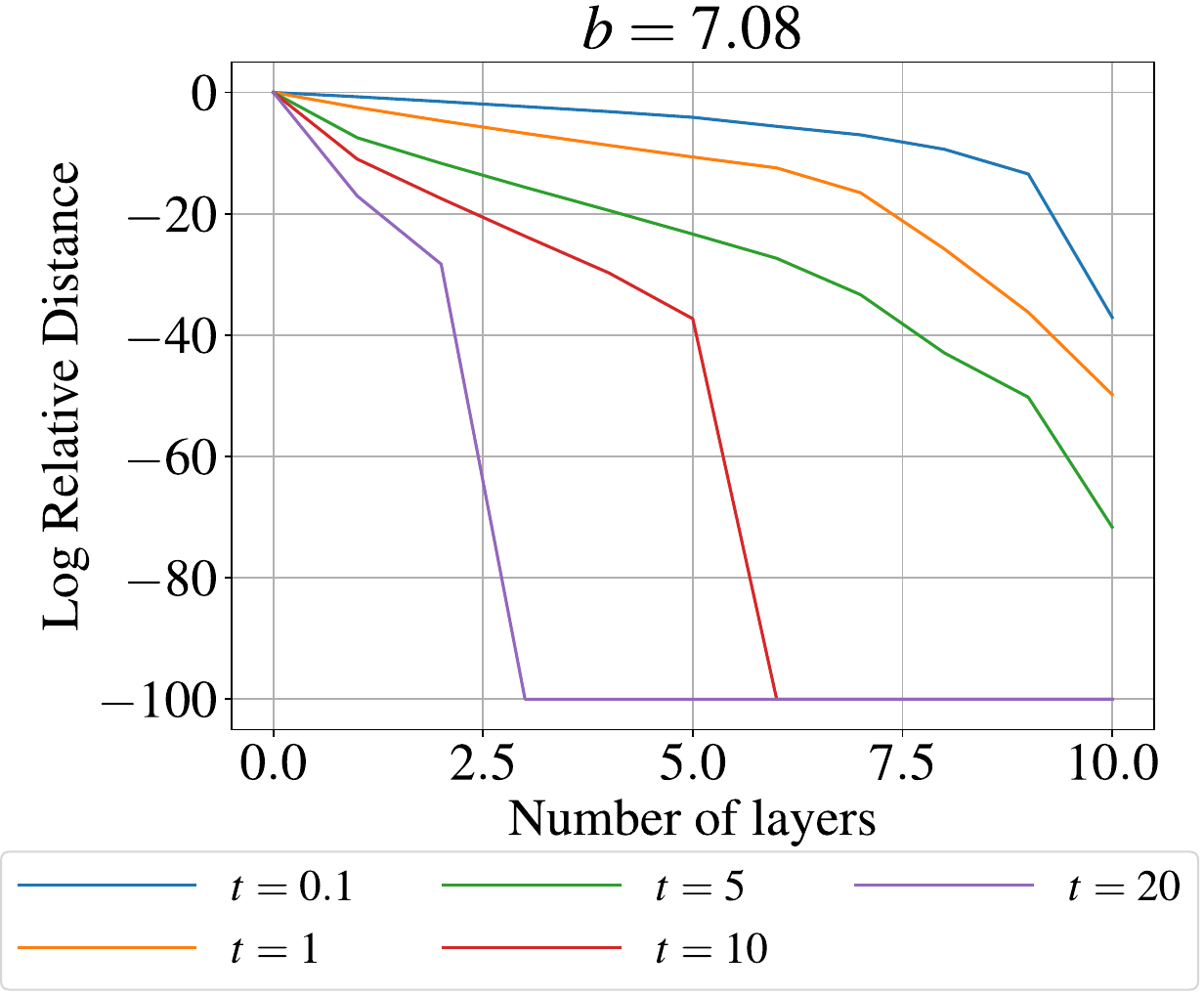}}
    \caption{Log relative distance vs. increasing the number of layers for different values of $t$ in actual bounds.}
    \label{OS_t}
    \end{center}
\end{figure}
\begin{table*}[t]
\caption{Details of the training settings and hyperparameters, \ie $T$ (auto-regressive order), $emb$ (dimension of embedding size in the spatiotemporal encoder), $hid$ (dimension of linear mapping size in the spatiotemporal encoder), $F_{\text{MLP}}$ (dimension of linear mapping size in the MLP layers), $n_{\text{\method}}$ (number of \method~blocks), $F$ (second dimension of $\mathbf{W}_l$ in \eqref{imp_eq}), $k_i$ (number of selected eigenvalue-eigenvector pairs in the $i$-th factor graph), $lr$ (learning rate), $n_{\text{batch}}$ (batch size), and $n_{\text{epochs}}$ (number of epochs).}
\label{Table_hyper}
% \vskip 0.15in
\begin{center}
%\begin{small}
\resizebox{.9\textwidth}{!}{
\begin{tabular}{rcccccccccccccccr}
\toprule
% \multirow{2}{*}{Method} & \multicolumn{8}{c}{\textsl{AirQuality}} \\   

Dataset&$T$&$emb$&$hid$ & $F_{\text{MLP}}$ & $n_{\text{\method}}$ & $F$& $k_1$ & $k_2$ & $lr$ &$n_{\text{batch}}$ & $n_{\text{epochs}}$\\ \midrule

\textsl{MetrLA} & 6 & 16 & 32 & 64 & 3 & 64 & 205 & 4& 0.01 &2048 & 300\\
\textsl{PemsBay} & 6 & 16 & 32 & 64 & 3 & 64 & 323 & 4& 0.01 &2048 & 300\\
\textsl{Molene} & 10 & 16 & 16 & 16 & 3 & 16 & 30 & 8& 0.001 &64 & 1500\\
\textsl{NOAA} & 10 & 4 & 4 & 4 & 3 & 4 & 107 & 8& 0.01 &256 & 400\\\bottomrule
\end{tabular}
}
%\end{small}
\end{center}
%\vskip -0.1in
\end{table*}
\section{Experiments Compute Resources}
All the experiments were run on one GTX A100 GPU device with 40 GB of RAM.

\section{Broader Impacts}
The demonstrated superior performance of \method~in spatiotemporal forecasting tasks, such as traffic and weather prediction, can significantly benefit urban planning and public safety. Accurate traffic predictions can lead to better traffic management and reduced congestion, while improved weather forecasting can lead to serious urban inconvenience. This research can contribute to sustainable development goals by improving forecasting capabilities and decision-making processes, including sustainable cities and communities, climate action, and industry innovation and infrastructure.

\end{document}